\documentclass[twoside]{article}
\usepackage{amsfonts, amsmath} 

\usepackage[preprint]{aistats2026}

\usepackage[round]{natbib}



\usepackage{booktabs}
\usepackage{hyperref}
\usepackage{enumitem}
\usepackage{amsmath}
\usepackage{amssymb}
\usepackage{mathtools}
\usepackage{amsthm}
\usepackage{bm}
\usepackage{xcolor}
\usepackage{algorithm}
\usepackage{algpseudocode}
\usepackage{multicol}
\usepackage{titletoc}
\usepackage{etoc}

\newcommand{\R}{\mathbb{R}} 
\newcommand{\E}{\mathbb{E}}
\newcommand{\EE}{\mathbb{E}}
\newcommand{\PP}{\mathbb{P}}
\newcommand{\AAA}{\mathcal{A}}
\newcommand{\FFF}{\mathcal{F}}
\newcommand{\HH}{\mathcal{H}}

\newcommand{\mat}[1]{\mathbf{#1}}
\newcommand{\ERMM}{\text{ERMM}}
\newcommand{\IRG}{\text{IRG}}

\theoremstyle{plain}
\newtheorem{theorem}{Theorem}[section]
\newtheorem{proposition}[theorem]{Proposition}
\newtheorem{lemma}[theorem]{Lemma}
\newtheorem{corollary}[theorem]{Corollary}
\theoremstyle{definition}

\newtheorem{assumption}[theorem]{Assumption}
\theoremstyle{remark}
\newtheorem{remark}[theorem]{Remark}

\begin{document}

\runningtitle{A Pure Hypothesis Test for IRG Models Based on a KSD}
\twocolumn[

\aistatstitle{A Pure Hypothesis Test for Inhomogeneous Random Graph Models Based on a Kernelised Stein Discrepancy}

\aistatsauthor{ Anum Fatima \And Gesine Reinert}

\aistatsaddress{ Department of Statistics,  University of Oxford and \\
Lahore College for Women University \And Department of Statistics \\
University of Oxford} 
]

\begin{abstract}
  Complex data are often represented as a graph, which in turn can often be viewed as a realisation of a random graph, such as an inhomogeneous random graph model (IRG). For general fast goodness-of-fit tests in high dimensions, kernelised Stein discrepancy (KSD) tests are a powerful tool. Here, we develop a KSD-type test for IRG models that can be carried out with a single observation of the network. The test applies to a network of any size, but is particularly interesting for small networks for which asymptotic tests are not warranted. We also provide theoretical guarantees.
\end{abstract}

\section{INTRODUCTION}
 
Networks are often used to represent complex data for data mining in application areas such as transportation (\cite{transp1}, \cite{transp2}), management (\cite{manag1}, \cite{manag2}, \cite{manag3}), biology (\cite{bio1}, \cite{bio2}), and social science (\cite{social1}, \cite{social2}, \cite{social3}). For example, social networks represent actors as vertices and the relations between them as edges. The edges in these networks are often modelled as random, with different probabilities for different edges. In this paper, we focus on a model in which edges are assumed to occur independently of each other; this network model is known as the inhomogeneous random graph model (IRG). Particular instances are stochastic blockmodels with known vertex classes and edge probabilities, also called Erd\H{o}s-R\'enyi mixture graph model (ERMMs) in \cite{daudin2008mixture}, as well as some degree-corrected stochastic blockmodels suggested in \citet{karrer_newman_2011}. Under additional assumptions on how edge probabilities arise, model fitting in an IRG is often computationally feasible; see, for example, \cite{Santos2021, liu2025stochastic, karwa2024monte}. However, for downstream tasks such as edge validation, it is important that the fitted model actually fits the data reasonably well. How can we assess such a fit?
 
Classical statistical goodness-of-fit tests based, for example, on chi-square asymptotics can show poor performance in high dimensions, see \citet{arias2018remember}. For general IRG models, the only rigorous available test, given by \cite{dan2020goodness}, tests the hypothesis that the observed network is generated with edge probabilities given by a reference edge probability matrix,  against the specific alternative that some norm of the difference between estimated and reference edge probability matrices is large. Thus, it is not designed to capture alternatives with the same edge probabilities but dependence between the edge indicators. The test is asymptotic with the asymptotic distribution of the test statistic depending on the sparsity regime of the network (see also \citet{chakrabarty2021spectra}); for a given network, the choice of asymptotic regime may not be clear. 

For stochastic blockmodels (SBMs), aside from likelihood ratio tests as in \cite{karwa2024monte},  many tests provide methods to estimate the number of blocks in an SBM using spectral properties of the network, with a setup that tests the hypothesis of $K_0$ blocks against an alternative of more than $K_0$ blocks, see for example, \citet{Lei2016}, \citet{DONG2020}, \citet{Hu2021}, and \citet{WU2024}. \citet{Jin2025} developed goodness-of-fit metrics for models in the block-model family, which, if the assumed model is correct, converge to a standard normal distribution, with theoretical guarantees under asymptotic conditions which cannot be verified for a fixed number of vertices. 

For small to moderately sized networks, asymptotic approximations may be unreliable while exact methods are computationally infeasible. The likelihood ratio tests based on the assumption of independent edges, such as in \citet{karwa2024monte}, assume that the alternative model also has independent edges; in real networks, however, such as in friendship networks, edges may not occur independently of each other. Hence, as pointed out in \citet{Jin2025}, there is a severe gap regarding tests for the fit of IRG models for small to moderate size networks against alternatives which do not assume independent edges.

\citet{xu2021stein} devised the so-called  {\it graph kernel Stein statistic} (gKSS) to assess the fit of exponential random graph models, employing theoretical results from \citet{reinert2019approximating}, and adapting methodology from \citet{chwialkowski2016kernel} and \citet{liu2016kernelized}. This pure hypothesis test does not require any assumptions on an asymptotic regime, does not assume any form of the alternative model, and applies to any size of network, even when only a single network observation is available. 

In this paper, we extend the methodology from \cite{xu2021stein} to the IRG model. We give theoretical guarantees on the distribution of the test statistic, which, unlike the results from \cite{xu2021stein}, hold even when the networks are asymptotically sparse.  

Instead of testing the fit of a model family, the test proposed in this paper tests the fit of a specified model, so that the null hypothesis is simple. 

Our main contributions are as follows.

\begin{enumerate}[nosep]
    \item We provide a novel { non-asymptotic kernelised} test to assess the fit of a IRG model.
    \item {We show that this test is well-suited to analyse networks for which an asymptotic regime may not be justified.}
    \item We illustrate on synthetic data that the test can outperform likelihood-based tests on networks with dependent edges.
    \item We show that on four real networks, the test gives plausible results.
    \item We provide theoretical guarantees for the test procedure.
\end{enumerate}

\section{BACKGROUND}
 
\subsection{Inhomogeneous Random Graph Models}\label{sec:irg}

An IRG model, as proposed in \citet{Bollobas2007}, is a model for a simple, unweighted, undirected graph $\mathcal{G} = (\mathcal{V},\mathcal{E})$ with vertex set $\mathcal{V}$, of size $n$ and edge set $\mathcal{E}$. The graph can be represented by a collection of 0-1 entries $\mat{x}= (x_{u,v}, 1 \le u < v \le n)$ of size $N = \binom{n}{2}$. We set $E = \{ (u,v), 1 \le u < v \le n\}$,  for the set of vertex pairs, so that $|E| = N$.  With $s = (u,v) \in E$  a vertex pair we write $p_s := p_{u,v} =\mathbb{P}((u,v) \in \mathcal{E})$, and we denote by $\mat{p} = \{p_{u,v}, u, v = 1, \ldots, n\}$, the matrix of edge probabilities of all vertex pairs. Edges occur independently of all other edges in the network. The likelihood of $\mat{x}$ under an IRG model with edge probability matrix $\mat{p}$, or short: under IRG$(\mat{p})$, is
\begin{equation} \label{eq:irg}
    \mathbb{P}(\mat{X} = \mat{x}) = \prod_{{1 \le u < v \le n}} \left(1-p_{u,v} \right) \left(\frac{p_{u,v}}{1-p_{u,v}} \right)^{x_{u,v}}.
\end{equation}

Two instances of IRG models that have attracted particular attention in network science start with the premise that each vertex $u$ has an associated value (or feature) $g_u$; and that the edge probabilities depend on features of the two vertices which the edge connects. In this paper, we assume that the vertex features $g_u$, $1 \le u \le n$, are known. The first instance, the Erd\H{o}s-R\'enyi Mixture Models (ERMM), as in \cite{daudin2008mixture}, is a stochastic blockmodel with vertex set $\mathcal{V}$ which is divided into $L$ homogeneous groups of vertices, with the probability of an edge depending on the known group membership of the vertices it connects. In this case, $g_u$ is the group membership of the vertex $u$; the probability $Q_{i,j}$ that two vertices $u$ and $v$ form an edge depends only on their types $i$ and $j$. Thus, the probability of an edge between a vertex $u$ and $v$ under ERMM is  $p_{u,v} = Q_{g_u,g_v}$; we set $\mat{Q} = \{Q_{i,j}; i,j = 1, \ldots, L\} $, which is an $L \times L$ symmetric matrix with entries in $[0,1]$. In this paper we denote an ERMM with parameters $\mat{n}$ and $\mat{Q}$ by $\ERMM(\mat{n}, \mat{Q})$, where $\mat{n} = \{n_i; i, \ldots, L\}$ is the vector of group sizes, so that $\sum_{i=1}^L n_i = n$.
 
The second special case of an IRG is the degree-corrected stochastic block model (DCSBM) by \citet{karrer_newman_2011}.  In the setting of ERMMs, a DCSBM has as extra parameters a vector $\bm{\theta} = \{\theta_v\}_{v=1}^n$ that is interpreted as propensities of vertices to form an edge, and sets $p_{u,v} = \theta_u \theta_v Q_{g_u, g_v}$.

When fitting a DCSBM to data, to avoid complications arising when for estimated parameters, {$\hat{\theta}_u \hat{\theta}_v \hat{Q}_{g_u, g_v}>1$}, we use the Poissonised version, as in \cite{karrer_newman_2011}. This version assumes that the number of edges between distinct vertices $u$ and $v$ follows a Poisson distribution with mean $\theta_u\theta_v Q_{g_u,g_v}$, independently of the edge indicators for other vertex pairs, and that $X_{u,u}$ follows a Poisson distribution with mean $\theta_u^2 Q_{g_u,g_u}/2$. We then use the {\it Bernoulli-Poisson} link $\widehat{p}_{u,v} = 1- e^{-\hat{\theta}_u \hat{\theta}_v \hat{Q}_{g_u,g_v}},  \label{eq:dcsbm}$ see \cite{bernoulli-poisson}.
In sparse graphs, 
$ 1- e^{-\theta_u\theta_vQ_{g_u,g_v}} \approx \theta_u\theta_vQ_{g_u,g_v}$ and hence this model approximates the edge probabilities of the DCSBM.

\subsection{Stein's Method}

Stein's method, originating in \citet{stein1972bound}, provides a means to compare probability distributions through characterising operators. In a nutshell, for $p$ a target distribution, a {\it{Stein operator}} ${\AAA}_p$ with Stein class $\FFF(\AAA_p)$ is such that if $X $ has distribution $ p$ (short: $X \sim p$) then 
$\E \AAA_p f(X) = 0  \mbox{ for all }  f \in \FFF(\AAA_p).$ If $W\sim q$ is any random element which is close in distribution to $X$, then intuitively, it should hold that 
$\E  \AAA_p f(W) \approx 0.$ In \citet{gorham2015measuring}, it is proposed to quantify this intuition by assessing a so-called {\it Stein discrepancy}
\begin{align}\label{eq:steindiscr}
    S(p,q; \mathcal{H}) = \sup_{f \in \mathcal{H}} |  {\E}\AAA_p f(W)|.
\end{align}
Choosing as $\mathcal{H}$ the class of all 1-Lipschitz functions gives the Wasserstein-1 distance $ || p - q ||_1$. 

\subsection{Kernel Stein Discrepancies}

Evaluating the supremum in \eqref{eq:steindiscr} over a large class of functions, such as that needed for Wasserstein distance, is usually computationally not possible, as observed in \citet{gorham2015measuring}. Instead, to test the fit of a continuous distribution, \citet{chwialkowski2016kernel} and \citet{liu2016kernelized} propose to use a so-called kernel Stein discrepancy, obtained by taking set $\mathcal{H}$ as a reproducing kernel Hilbert space (RKHS) associated with kernel $k$, inner product $\langle \cdot, \cdot \rangle$ and unit ball $B_1(\mathcal H)$.  Let  $Y \sim q$. Formally, the {\it{kernel Stein discrepancy (KSD)}}   between  $p$ and $q$ is given by 
$
 \mbox{KSD}(p,q; k) =\sup_{f \in B_1(\mathcal H)} |  \mathbb{E}[\AAA_p f (Y) ] |.
$

Under some assumptions, $ \mbox{KSD}(p,q; k)$ can be evaluated explicitly. For a continuous pdf $p$, under suitable conditions, the operator $\AAA_p f(w) = {f(w) \nabla \log p(w)} + \nabla f(w)$ is a Stein operator acting on vector-valued functions $f$. Set $h_p (x,y) =  \langle \AAA_p k(x, \cdot), \AAA_p k(\cdot, y) \rangle.$ Then if $Y,Y' \sim q$ are independent, we have $ \mbox{KSD}(p,q; k)^2 =  \E h_p (Y,Y').$ Moreover, in this continuous setting, if the distribution $q$ is not available in closed form, then KSD can be estimated from i.i.d.\,samples $(y_i)_ {i=1, \ldots, n} \sim q$ and $ (y_j')_{j=1, \ldots, n} \sim q$. Manipulations using the structure of the RKHS show that a natural estimator for $ \mbox{KSD}(p,q; k)^2$ is given by 
$\widehat{ \mbox{KSD}(p,q; k)^2}  =  \frac{1}{n^2} \sum_{i=1}^n  \sum_{j=1}^n h_p (y_i, y_j').$ More background on RKHS and KSD is provided in the Supplementary Material (Supp.\,Mat.) Section \ref{sec:RKHS}. While restricting the Stein discrepancy to an RKHS is convenient, it introduces a trade-off, particularly when the kernel is weak; see Supp.\,Mat.\,Section \ref{app:kernel} for further discussion.

For assessing goodness of fit to network distributions, two complications arise: First, the distribution is discrete on the set of networks, and hence, a different Stein operator is needed. Second, there is often only one observed network; hence, the KSD cannot be easily estimated. In Section \ref{sec:gof}, we give a pure hypothesis test for IRG models developed by borrowing the ideas from KSD tests.

\section{A STEIN TEST}\label{sec:gof}

Here, we develop a KSD-type test for IRG models; theoretical guarantees are given in Section \ref{sec:theory}. First, we introduce some notation. For a simple, unweighted, undirected graph $\mathcal{G} = (\mathcal{V},\mathcal{E})$ with vertex set $\mathcal{V}$, of size $n$, and edge set $\mathcal{E}$, and collection of edge indicators $\mat{x} = (x_{u,v}) \in E$, as in \cite{xu2021stein} we denote, for $s = (u,v)$ 
    
\noindent    $\mat{x}^{(s,1)}$, the collection with $1$ at the $s-$ coordinate and the same as $\mat{x}$ otherwise,
\newline 
    $\mat{x}^{(s,0)}$, the collection with $0$ at the $s-$ coordinate and the same as $\mat{x}$ otherwise,
    \newline 
    $\mat{x}_{-s}$, the collection $\{x_{r,t}, 1 \le r < t \le n, (r,t) \neq (u,v) \}$, without the $s-$ coordinate.

\subsection{An IRG Stein Operator}
    
For a function $f: \{0,1\}^{N} \rightarrow \mathbb{R} $ we set $\Delta_s f(\mat{x}) =  f(\mat{x}^{(s,1)}) - f(\mat{x}^{(s,0)})$, and we set
\begin{align}\label{SE_ermm_op}
 \AAA_{\IRG} f(\mat{x}) = \frac{1}{N} \sum_{s \in E} \mathcal{A}_{\IRG}^{(s)} f(\mat{x}),
\end{align}
with
\begin{align}  \label{GD_Stein_Eq}
    \mathcal{A}_{\IRG}^{(s)} f(\mat{x}) = & p_s \left( f(\mat{x}^{(s,1)}) - f(\mat{x})\right) \nonumber \\
    & + \left(1 - p_s     \right) \left( f(\mat{x}^{(s,0)}) - f(\mat{x}) \right).
\end{align}
A detailed theoretical underpinning of this operator choice {and the proof of the following Proposition} can be found in Supp.\,Mat.\,Section \ref{app:stein}.

\begin{proposition}\label{prop:stein}
For a graph $\mathcal{G} = (\mathcal{V},\mathcal{E})$ with adjacency  matrix $\mat{X} \sim \IRG(\mat{p})$, the operator \eqref{SE_ermm_op} is a Stein operator with Stein class $\mathcal{F}(\AAA) = \{ f: \{0,1\}^{N} \rightarrow \R\}$, that is, for all $f: \{0,1\}^{N} \rightarrow \R$, 
\begin{equation} \label{stein_op}
    \E\AAA_{\IRG} f(\mat{X}) = 0.
\end{equation}
\end{proposition}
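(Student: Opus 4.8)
The plan is to reduce the statement to a single-coordinate identity and then exploit the independence of the edge indicators under $\IRG(\mat{p})$. Since $\AAA_{\IRG} f(\mat{x}) = \tfrac1N \sum_{s\in E}\mathcal{A}_{\IRG}^{(s)} f(\mat{x})$ and the state space $\{0,1\}^{N}$ is finite (so that every expectation below is finite and sums and expectations may be interchanged freely), it suffices to prove $\E\,\mathcal{A}_{\IRG}^{(s)} f(\mat{X}) = 0$ for each fixed vertex pair $s=(u,v)$.

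The first step is a purely algebraic simplification of $\mathcal{A}_{\IRG}^{(s)}$. I would split into the two cases $x_s=1$ and $x_s=0$: when $x_s=1$ one has $f(\mat{x})=f(\mat{x}^{(s,1)})$, and when $x_s=0$ one has $f(\mat{x})=f(\mat{x}^{(s,0)})$. Substituting these into \eqref{GD_Stein_Eq} collapses the two-term expression in both cases to
\begin{equation*}
  \mathcal{A}_{\IRG}^{(s)} f(\mat{x}) = (p_s - x_s)\,\Delta_s f(\mat{x}),
\end{equation*}
with $\Delta_s f(\mat{x}) = f(\mat{x}^{(s,1)}) - f(\mat{x}^{(s,0)})$ as defined just before the proposition.

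The second step uses the structural observation that $\Delta_s f(\mat{x})$ is a function of $\mat{x}_{-s}$ only, i.e.\ it does not depend on the $s$-coordinate — immediate from the definitions of $\mat{x}^{(s,1)}$ and $\mat{x}^{(s,0)}$, both of which overwrite that coordinate. Under $\mat{X}\sim\IRG(\mat{p})$ the edge indicators are independent, so $X_s\sim\mathrm{Bernoulli}(p_s)$ is independent of $\mat{X}_{-s}$, hence independent of $\Delta_s f(\mat{X})$. Consequently
\begin{equation*}
  \E\,\mathcal{A}_{\IRG}^{(s)} f(\mat{X}) = \E\big[(p_s - X_s)\,\Delta_s f(\mat{X})\big] = \E[p_s - X_s]\;\E[\Delta_s f(\mat{X})] = 0,
\end{equation*}
since $\E X_s = p_s$. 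Summing over $s\in E$ and dividing by $N$ then yields \eqref{stein_op}.

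There is no genuine obstacle here: the entire content lies in the case analysis producing the compact form $(p_s-x_s)\Delta_s f(\mat{x})$ and in the factorisation of the expectation via independence of $X_s$ from $\mat{X}_{-s}$. The only points deserving an explicit line are the verification that $\Delta_s f$ ignores the $s$-coordinate and the remark that finiteness of $\{0,1\}^{N}$ removes all integrability concerns, legitimising the interchange of the finite sum over $E$ with the expectation.
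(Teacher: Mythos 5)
Your proof is correct and follows essentially the same route as the paper's: both reduce to showing $\E\,\mathcal{A}_{\IRG}^{(s)} f(\mat{X})=0$ for each vertex pair $s$ and then integrate out the $s$-coordinate using the independence of $X_s$ from $\mat{X}_{-s}$ under $\IRG(\mat{p})$. The paper conditions on $\mat{X}_{-s}$ and cancels the two weighted terms $(1-p_s)p_s\,\Delta_s f$ and $-p_s(1-p_s)\,\Delta_s f$ explicitly, which is exactly your identity $\E[p_s-X_s]=0$ after the rewriting $\mathcal{A}_{\IRG}^{(s)} f(\mat{x})=(p_s-x_s)\,\Delta_s f(\mat{x})$.
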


\subsection{A Graph Kernel Stein Statistic for IRGs }

Let $\mathcal{H}$ be an RKHS with kernel $K$ and inner product $\langle \cdot, \cdot \rangle$. As for an IRG, $\sup_{f \in B_1(\mathcal H)} |  \mathbb{E}[\AAA_{IRG} f (Y) ] |$ is usually not observed, in analogy to \citet{xu2021stein} we introduce the empirical IRG-graph kernel Stein statistic IRG-gKSS
$$\text{IRG-gKSS}(\mat{p};\mat{x}) = \underset{\|f\|_{\mathcal{H}} \le 1}{\sup} \Big| \frac{1}{N} \sum_{s \in E} \AAA_{IRG}^{(s)}f(\mat{x})\Big|.$$ 
Here $f \in \{f \in \mathcal{H}: \|f\|_{\mathcal{H}} \le 1\}$ are functions in the unit ball of ${\mathcal{H}} $, and $\mathcal{A}_{\IRG}^{(s)}$ is given in \eqref{GD_Stein_Eq}. Since the functions $f$ are in the unit ball of the underlying RKHS, the supremum can be calculated exactly, giving
\begin{equation} \label{t_stat_s_net}
   {{\text{IRG-gKSS}}^2}(\mat{p};\mat{x}) = \frac{1}{N^2} \sum_{s,s' \in E} h_{\mat{x}}(s,s'),
\end{equation}
with $h_{\mat{x}} (s,s') = \left\langle \AAA_{\IRG}^{(s)}K(\mat{x},\cdot) , \AAA_{\IRG}^{(s')} K(\cdot,\mat{x})\right\rangle $.

\subsection{The IRG-gKSS Test}

For an observed network $\mat{x}$, for which we want  to test the hypothesis $\text{H}_0: \mat{X} \sim {\text{IRG}(\mat{p}_0)}$, against a general alternative, we carry out a Monte Carlo test using Algorithm \ref{alg:algo1}. We do not detail here how $\mat{p}_0$ in the null hypothesis is obtained; sometimes it is determined from first principles, and sometimes a parametric model with learned parameter values is used, see for example \citet{liu2025stochastic}. 

\begin{algorithm}[t]
   \caption{Assessing the fit of $\IRG(\mat{p}_0)$}
   \label{alg:algo1}
\begin{algorithmic}
   \State {\bfseries Input:} Observed network $\mat{x}$ \\
   \qquad \quad null edge probably matrix $\mat{p}_0$\\
   \qquad \quad size of null set $M$ \\
    \qquad \quad choice of graph kernel
   \For{$i=1$ {\bfseries to} $M$}
   \State{Simulate $M$ networks from $\IRG(\mat{p}_0)$}
   \State{Compute a set $\bm{\phi}$ of $\text{IRG-gKSS}^2$ using \eqref{t_stat_s_net} \\
    \quad  \, from the simulated networks} 
   \EndFor
   \State Find the empirical quantiles $\gamma_{\alpha/2}$ and $\gamma_{1-\alpha/2}$ from the set $\bm{\phi}$
   \State Compute $\phi = {{\text{IRG-gKSS}}^2}(\mat{p}_0;\mat{x})$ using \eqref{t_stat_s_net}, from the observed network $\mat{x}$
   \State {\bfseries Output:} If $\phi < \gamma_{\alpha/2}$ or $\phi > \gamma_{1-\alpha/2}$ reject the hypothesis that the $\IRG(\mat{p}_0)$ fits the observed network against a general alternative. 
\end{algorithmic}
\end{algorithm}

\begin{algorithm}
   \caption{Assessing the fit of $\IRG(\mat{p}_0)$ with edge re-sampling}
   \label{alg:algo2}
\begin{algorithmic}
   \State {\bfseries Input:} Observed network $\mat{x}$ \\
   \qquad \quad null edge probably matrix $\mat{p}_0$\\
   \qquad \quad re-sampling size B \\
   \qquad \quad size of null set $M$ \\
    \qquad \quad choice of graph kernel
   \For{$i=1$ {\bfseries to} $M$}
   \State Simulate $M$ networks from $\IRG(\mat{p}_0)$
   \State Compute a set $\bm{\phi}$ of $\widehat{\text{IRG-gKSS}^2}$ given in \eqref{t_stat_s_net_resamp}, \\
   \quad \, from a sample of vertex pairs $s_{1,i}, \ldots, s_{B,i}$, \\
   \quad \, chosen uniformly at random and with \\
   \quad \, replacement, from each simulated network. 
   \EndFor
   \State Find the empirical quantiles $\gamma_{\alpha/2}$ and $\gamma_{1-\alpha/2}$ from the set $\bm{\phi}$
   \State Sample a set of vertex pairs $\{s_1, \ldots, s_B$\} uniformly at random and with replacement from the observed network $\mat{x}$
   \State Compute $\phi = {\widehat{\text{IRG-gKSS}}^2}(\mat{p}_0;\mat{x})$ using only the sample of vertex pairs as given in \eqref{t_stat_s_net_resamp}
   \State {\bfseries Output:} If $\phi < \gamma_{\alpha/2}$ or $\phi > \gamma_{1-\alpha/2}$ reject the hypothesis that the $\IRG(\mat{p}_0)$ fits the observed network against a general alternative. 
\end{algorithmic}
\end{algorithm}

For large networks, we use IRG-gKSS with edge re-sampling as given in Algorithm \ref{alg:algo2}. In these tests, $\gamma_\alpha$ is used to denote the empirical $\alpha$-quantile of the set of test statistics computed from the networks simulated under IRG$(\mat{p}_0)$. Ties are broken at random, and interpolation is used when required. We note that this test is a pure significance test, in the sense that only the distribution under the null hypothesis is specified.

In constructing the IRG-gKSS statistic, we evaluate the Stein operator using graph kernels that capture the similarity between the observed graph and its one-edge–perturbed versions (i.e., neighbours at Hamming distance one), and then average only over this neighbourhood, not over the whole space of graphs under the null. As a result, the resulting statistic is not centred at zero and does not represent a discrepancy measure in the usual sense. Consequently, the IRG-gKSS test is two-sided: it takes the supremum of an absolute value, and both unusually low or unusually high values relative to the null distribution are considered evidence against the null hypothesis. The  two-sided $P$-value is $2 \min \left(\frac{\#(\phi_i \le \phi)}{M+1}, \frac{\#(\phi_i \ge \phi)}{M+1}\right)$, with $\bm{\phi} = \{\phi_i\}_{i=1}^M$, the set of IRG-gKSS statistics calculated from $M$ networks simulated under the null model.

\subsection{The Re-sampled IRG-gKSS Test} 

For large networks, evaluating \eqref{t_stat_s_net} can be computer-intensive. As in \citet{xu2021stein}, an edge re-sampling procedure can instead be used. We re-sample a fixed number $B$ of vertex pairs from the network with replacement; let $s_b$, $b = 1, \ldots, B$, be the vertex pairs sampled from $E$. The re-sampled Stein operator is 
\begin{equation}
    \widehat{\AAA_{\IRG}^B}f(\mat{x}) = \frac1B \sum_{b \in [B]} \AAA_{\IRG}^{(s_b)}f(\mat{x}); 
\end{equation}
we estimate its supremum by 
\begin{equation} \label{t_stat_s_net_resamp}
   {\widehat{{\text{IRG-gKSS}}}^2}(\mat{p};\mat{x}) = \frac{1}{B^2} \sum_{b,b' \in [B]} h_{\mat{x}}(s_b,s_b').
\end{equation}
We use the test statistic \eqref{t_stat_s_net} and \eqref{t_stat_s_net_resamp} as a statistic for a Monte Carlo testing procedure. The detailed test procedure is given in Algorithm \ref{alg:algo2}.

\subsection{Choice of Kernel in IRG-gKSS}

The calculated values of IRG-gKSS statistics depend on the kernel used and its corresponding RKHS. Following \citet{xu2021stein}, since the operator in \eqref{SE_ermm_op} embeds a notion of conditional probability, we use a vector-valued reproducing kernel Hilbert space (vvRKHS) introduced in \citet{xu2021stein} to separate the treatment of $x_s$ and $x_{-s}$. For more details on vvRKHS and graph kernels, see the Supplementary Material of \citet{xu2021stein}; Supp.\,Mat.\,Sections \ref{app:kernel} and \ref{sec:add_syn_exp} provide a detailed discussion on kernel choice.

\begin{figure*}
  \centering
  \includegraphics[width=\textwidth]{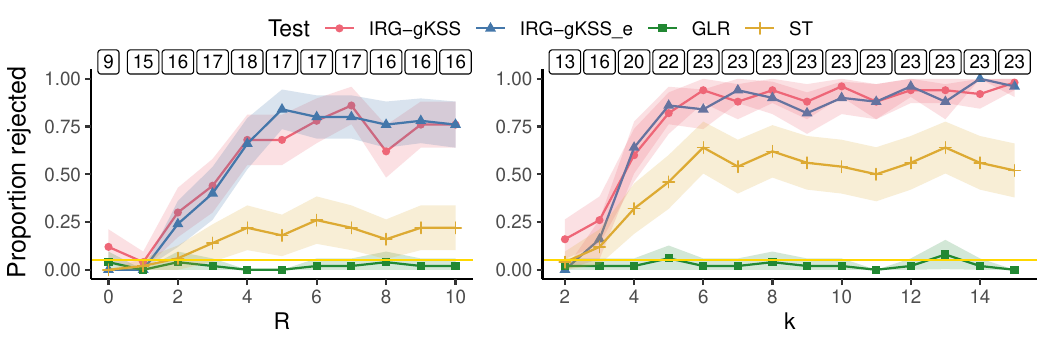}
  \caption{Power of the test to assess the fit of an ERMM$(\mat{n}_{{ub}}, \mat{Q})$ to the network of size {30} with planted hubs. The numbers in boxes at the top of the plot are the average maximum degree observed in $m=50$ repetitions of the test for each setting on the $x$-axis. Left: we fix $k={3}$, the size of the hub, and let $R$, the number of hubs, vary; right: we fix $R= {2}$ and let $k$ vary. Straight yellow line: 5\% level.
  }
  \label{fig:planted_hubs_main}
\end{figure*}

\section{EXPERIMENTS} 
\label{sec:applications}
In this section, we illustrate the IRG-gKSS test by a series of experiments. We compare the IRG-gKSS test to its estimated version, the spectral test (ST) of \citet{Lei2016} with true group memberships and the generalised likelihood ratio (GLR) test. {We note that these three tests target similar, though not identical, null hypotheses; see Supp.\,Mat.\,Section \ref{sec:background} for further details.} All tests are carried out at the 5\% level of significance. In the figures in this section, the shaded area around each point is a confidence band which is computed as $\hat{p}\pm 2\hat{p}(1-\hat{p})/m$, where $\hat{p}$ denotes the proportion of rejections of the null hypothesis across $m$ repetitions of the test. 
First, we use simulated networks before moving to real-world benchmark networks. In all synthetic experiments, unless otherwise stated, we simulate $m=50$ networks for each setting and record the proportion of times the test rejects the fit of the null model. We use the WL graph kernel with $h = 3$.

\paragraph{Implementation Details} 
All experiments are executed on a PC with an Intel(R) Core(TM) i7-4790S CPU and 16.0 GB RAM. Using our {\texttt{R}} code, a single computation of IRG-gKSS$^2$ (WL kernel with $h=3$) for a network, simulated from an ERMM$(\mat{n}, \mat{Q})$ with $\mat{n}$ and $\mat{Q}$ given in the next section, takes $13.69$ seconds. Further discussion about the computation time is given in Supp.\,Mat.\,Section \ref{app:complex}. The code for the experiments can be found at {\url{https://github.com/AnumFatima89/IRG-gKSS}}.

\subsection{Synthetic Data Experiments: Alternatives with Independent Edges}

First, we simulate networks from two alternative models with complex vertex features and test the fit of an ERMM with parameters assumed fixed, and set as the maximum likelihood estimates. Tables \ref{tab:clfalse} and \ref{tab:cltrue} show the results (the rejection rate in $m= 50$ repetitions of the experiment); WL denotes IRG-gKSS using the WL kernel with $h=3$, Graphlet denotes IRG-gKSS using the graphlet kernel, and hats denote the versions in which the edge probabilities are re-estimated from the generated networks. 

\textbf{1.} 
Similarly to \cite{karwa2024monte}
we simulate $m=50$ networks of size $n = 27$  from two DCSBMs with edge probabilities
$\mathbb{P}(u \sim v) = e^{\beta_u + \beta_v + \alpha_{g_u g_v}}$, where the $\beta$'s are chosen uniformly and independently at random,   $\beta_u \sim \log(\text{Unif}(0,1))$, for each simulated network, while the matrix $\alpha$ is fixed. We consider two instances, $\alpha_1 = \begin{pmatrix}
    \log(0.6) & \log(0.2) \\
    \log(0.2) & \log(0.6)
\end{pmatrix}$ and 
$\alpha_2 = \begin{pmatrix}
     \log(0.6) & \log(0.1) \\
   \log(0.1) & \log(0.3)
\end{pmatrix}$. As the expectation of $e^{\beta_u + \beta_v}$ is $\frac14$, up to random fluctuations, the approximating ERMM should have edge probabilities $0.25*e^{\alpha_{g_u,g_v}}$, matching the expected edge probabilities in the DCSBM. For the approximating ERMM, in $m=50$ simulations, we always obtain a total variation distance of 1 (up to computer precision), so these two models should be easy to distinguish. In this experiment using IRG-gKSS, for each of the simulated networks, we test first the fit of an ERMM with $\mat{p}_0 = \{0.25*e^{\alpha_{g_u,g_v}}\}_{u, v = 1, \ldots n}$ and second, the fit of an ERMM with $\hat{\mat{p}}$ estimated from the simulated network using the true group memberships. We note that ST may fit a different model as it tests the null hypothesis of a two-group SBM given the provided group memberships.

The rejection rates in $m=50$ runs of this experiment are given in Table \ref{tab:clfalse}. As observed in this experiment, ST rejects the hypothesis of a two-group SBM in only a small number of runs, failing to detect the vertex degree heterogeneity within groups. The IRG-gkSS test with WL and graphlet kernel detects the departure from the ERMM structure with edge probabilities given by $\mat{p}_0$ in a higher number of runs than ST. For the fit of ERMM with estimated edge probabilities given by $\hat{\mat{p}}$, the IRG-gKSS test with WL kernel fails to reject the null in any of the runs, while using the graphlet kernel, the rejection rate is higher.

\textbf{2.} 
A key advantage of  IRG-gKSSt compared to ST is that it applies to data from IRG models, no matter whether or not they have a block structure. To illustrate this point, we generate graphs on $n=30$  vertices from a Chung-Lu model (see \cite{newman2018networks}), as follows. For each run we choose $w_1, \ldots w_n$ uniform from $[2,8]$ and take as edge probabilities $p_{ij} = w_i w_j / \sum_k w_k,$ truncated at 1. The value 2 is then a lower bound on the smallest expected degree, and the value 8 is an upper bound on the largest expected degree, resulting in graphs which are moderately dense. For the alternative hypothesis, we take $v_i = w_i + 3$ and repeat the above construction; now the lower bound on the smallest expected degree is 5, and the upper bound on the largest expected degree is 11.  We repeat this experiment 50 times and test at the 5\% level. 

ST has a rejection rate of only 8\%, testing the null hypothesis of $K = 1$, and typically not detecting the change in distribution. The IRG-gKSS test, with both kernels, rejects the fit of IRG$(\mat{p}_0)$ in a large number of runs, whereas for the IRG model with estimated edge probabilities, it rejects the fit in only a few runs. Similarly, when actually simulating from the null distribution, so that the null hypothesis is true, ST has a rejection rate of 12\% (not included in the tables).

\begin{table}
    \centering
    \caption{Rejection rate for  the false null model}
    \begin{tabular}{c c c c c c}
    Model & ST & WL & Graphlet & $\widehat{\text{WL}}$  & $\widehat{\text{Graphlet}}$ \\
    \hline \\
    DCSBM$_1$ & 0.08 & 0.42 & 0.32 & 0.00 & 0.26 \\
    DCSBM$_2$ & 0.00  & 0.44 & 0.4 & 0.00 & 0.16 \\
    Chung Lu & 0.08 & 1.00 & 0.80 & 0.00 & 0.04 \\
    \end{tabular}
    \label{tab:clfalse}
\end{table}

\begin{table}
    \centering
    \caption{Rejection rate for the true null model}
    \begin{tabular}{c c c c c}
    Model & WL & Graphlet & $\widehat{\text{WL}}$  & $\widehat{\text{Graphlet}}$ \\
    \hline \\
    DCSBM$_1$ & 0.1 & 0.08 & 0.00  & 0.04 \\
    DCSBM$_2$ & 0.02 & 0.04 & 0.00  & 0.04 \\
    Chung Lu & 0.06 & 0.04 & 0.00  & 0.04 \\
    \end{tabular}
    \label{tab:cltrue}
\end{table}

The values in Table \ref{tab:cltrue} show that the IRG-gKSS test is well calibrated at the $0.05$ level of significance.

Supp.\,Mat.\,Section \ref{ssec:nlpa} also gives results for a non-linear preferential attachment model with dependent edges as an alternative; in these experiments, ST performs comparably to IRG-gKSS. Moreover, Supp.\,Mat.\,Section \ref{experi:1} shows the test performance when the alternative is an ERMM; this is the setting for which a likelihood ratio test applies, which naturally has higher power than IRG-gKSS, but not by a large margin.

\subsection{Synthetic Data Experiments: Planting Anomalies}

Next, we simulate synthetic networks from an IRG$(\mat{p})$ model, plant anomalies, and test the fit of IRG$(\mat{p})$ using the IRG-gKSS test, its estimated version, ST with true group memberships, and the generalised likelihood ratio (GLR) test, to assess and compare the performance of these tests in detecting anomalies in the network. For IRG-gKSS, we use Algorithm \ref{alg:algo1} with $M = 200$. To evaluate the effect of parameter estimation, we also re-estimate the edge probabilities from each simulated network and treat them as the edge probability matrix $\mat{p}$ for the null model and again use Algorithm \ref{alg:algo1} with $M = 200$; we denote the corresponding test by IRG-gKSS\_e.

\subsubsection{Planted Hubs} \label{sec:plantedhubs}

In this experiment, we simulate networks from an ERMM$(\mat{n}, \mat{Q})$ model, with $\mat{n}_{{ub}} = ({8}, {22})$ and $\mat{Q} = \begin{pmatrix}
    {0.29} & 0.01 \\
    0.01 & {0.22}
\end{pmatrix}$. 
We plant hubs (vertices with high degree) in the simulated networks without disturbing the edge density, using Algorithms \ref{alg:hubs} and \ref{alg:hub} given in Supp.\,Mat.\,Section \ref{ssec:planted}, and test the fit of ERMM$(\mat{n}, \mat{Q})$. In this experiment, $k$ is a parameter for the number by which we try to increase the degree of the selected vertex to turn it into a hub, and $R$ is the number of repetitions of Algorithm \ref{alg:hub}, intended to create $R$ hubs in the network. Our construction leaves the overall edge density invariant. Thus, the original increase in the degree and number of hubs created might be smaller than $k$ and $R$ if the number of edges present in the network cannot accommodate these choices. More details are found in Supp.\,Mat.\,Section \ref{sec:add_syn_exp}. The results illustrated in Figure \ref{fig:planted_hubs_main} show that IRG-gKSS can detect some deviation from the model with independent edges even when $R$, the intended number of hubs, is as low as $2$, and $k$, the intended increase in the degree of selected hub vertices, is as low as $3$. ST detects the hubs in a much smaller proportion of runs. The GLR test fails to detect the anomaly. In this experiment, the performance of IRG-gKSS and the estimated version, IRG-gKSS\_e, is very similar.

In Supp.\,Mat.\,Section \ref{ssec:planted}, we also show experimental results for the balanced case of an equal number of vertices in each group;  while the qualitative behaviour is similar, in the balanced case, IRG-gKSS has lower power.

While the number of edges in the simulated network is fixed, planting hubs introduces heterogeneity. In the top line of Figure \ref{fig:planted_hubs_main}, we also report the maximum degree as an indicator of the heterogeneity. The larger the heterogeneity, the easier it is for IRG-gKSS and ST to detect the difference; for the GLR test, increased heterogeneity has no advantage. 

\subsubsection{Planted Cliques}

Next, we simulate $m= 50$ repetitions of ER$({30}, 0.06)$ networks and attempt to plant a clique of size $K$ (a complete subgraph of $K$ vertices) in each of these networks, without changing the edge density, {using Algorithm \ref{alg:clique}} in Supp.\,Mat.\,Section \ref{ssec:clique}; if there are not at least $\binom{K}{2}$  edges, permitting the construction of cliques, then the network is not used. We then test the fit of the ER$({30}, 0.06)$ model on these networks with a planted clique. The results of this experiment are illustrated in Figure \ref{fig:planted_clique_main}. ST begins to signal the presence of more than one community in the network once a clique of size 4 appears, the IRG-gKSS test, its version with estimated edge probabilities IRG-gKSS\_e, and the edge resampling versions starts detecting the lack of fit of the ER$(30,0.06)$ model when $K$ is at least $6$, whereas the GLR test fails to identify any discrepancy. We further observe that the IRG-gKSS test exhibits very similar power when using the estimated edge probabilities compared to the true probabilities, while the edge resampling estimates, denoted by IRG-gKSS\_B, with $B\%$  edge resampling, $B \in \{5, 25\}$ (Algorithm \ref{alg:algo2}), exhibit good power to detect a clique while providing a significant gain in computation time. These results support our theoretical argument that parameter estimation has a limited impact on the distribution of the test statistic and, consequently, on the validity of the test. The computational gain using edge-resampling to estimate IRG-gKSS is illustrated in Supp.\,Mat.\,Section \ref{app:complex}.

The results for more parameter settings are given in Supp.\,Mat.\,Section \ref{ssec:clique}; the qualitative behaviour is similar. 

     \begin{figure}[hbt!]
         \centering
        \includegraphics[width=0.45\textwidth]{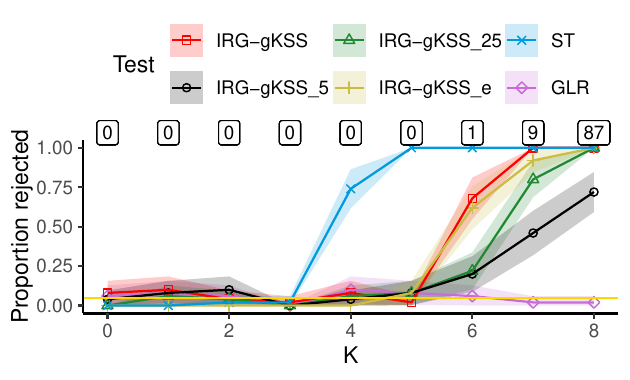} 
        \vspace{.1in}
         \caption{Power of the tests for the fit of an ER$(30,0.06)$ to the network of size 30 with a planted clique of size $K$ using different proportions of edge resampling as well as the no edge resampling version of the IRG-gKSS test statistic. The numbers in the boxes are the total number of networks sampled that had fewer than $\binom{K}{2}$ edges and were, therefore, not used in the experiment.
         }
         \label{fig:planted_clique_main}
     \end{figure}
     
\subsection{Real Data Networks} \label{real_net}

In this section, we use IRG-gKSS to test the fit of IRG models to two often used network data sets, Lazega's lawyers networks from \citet{lazega2001collegial} and Zachary's karate club network from \citet{zachary1977information}. 
Further real-world networks are analysed in Supp.\,Mat.\,Section \ref{sec:applic_contd}. As a proxy for the true edge probabilities, we use the edge probabilities estimated from the observed network; we thus use the IRG-gKSS\_e test.

\subsubsection{Lazega's Lawyers' Networks}

Lazega's lawyers' networks are constructed from questionnaire data collected in a US corporate law firm. This data set is used to create a Work network, an Advice network, and a Friendship network between the 71 attorneys (vertices) of this firm. Various vertex attributes are also part of the dataset, including seniority, formal status, gender, office in which they work, years with the firm, age, practice, and law school they attended. For more details, see \cite{lazega2001collegial}.

We test the fit of different IRG models to the three networks in the data set using IRG-gKSS\_e and ST, with ER, ERMM, and DCSBM as null hypotheses. For the ERMM and the DCSBM, groups are assigned according to formal status, or on office in which they work. The test results for the Friendship network are recorded in Table \ref{table:lazega}; both tests reject the fit of an ER model and the ERMM with `formal status' as group membership. For the ERMM with `office where the lawyers work' as group membership, the IRG-gKSS\_e test does not reject the model, but ST does. Furthermore, IRG-gKSS\_e does not reject the fit of the DCSBM with `formal status' as group membership. We cannot use ST to test the fit of a DCSBM as it is not tailored to this task. The detailed parameter settings and results for ERMMs and DCSBMs with groups constructed using other vertex attributes, and for the other two lawyer networks, are given in Supp.\,Mat.\,Section \ref{ssec:lazega}.

\begin{table}[ht]
    \caption{$P$-values for testing the fit of IRG models on Lazega lawyers' friendship networks} 
    \label{table:lazega}
    \begin{center}
        \begin{tabular}{lcc}
            \textbf{MODEL}  &  \textbf{IRG-gKSS\_e} & \textbf{ST}\\
            \hline \\
            ER & 0.02985 & 0.0000 \\
            ERMM (Status) & 0.02985 & 0.0000 \\
            DCSBM (Status) & 0.42786 & - \\       
            ERMM (Office) & 0.13930 & 0.0000 \\ 
        \end{tabular}
    \end{center}
\end{table}

\subsubsection{Zachary's Karate Club Network} \label{kclub_main}
Zachary’s karate club network from \cite{zachary1977information} is a friendship network of 34 members of a karate club at a US university, which split into two factions as a result of an internal dispute (the group memberships after the split is shown in Figure \ref{fig:karate} in the Supp.\,Mat.). This network is often used as a benchmark network dataset for community detection algorithms. \citet{karrer_newman_2011} fit a stochastic block model (SBM) and a degree-corrected stochastic block model (DCSBM) to this network. For this data set, the test proposed by \citet{karwa2024monte} rejects the fit of SBM with two groups and does not reject the SBM with four groups, at the 5\% level of significance, whereas in \citet{Jin2025}  the fit of an SBM with two groups is not rejected. 

We test the fit of an ER, two ERMMs and a DCSBM using IRG-gKSS\_e, and use ST to test the hypothesis of an ERMM with two groups and an ERMM with four groups in Zachary's karate club network. The $P$-values for the IRG-gKSS\_e and ST tests are presented in Table \ref{table:kclub}. The fit of all these models is rejected by both tests, suggesting dependence between edges. The parameters used for the null models and the results of ST are reported in Supp.\,Mat.\,Section \ref{ssec:karate}.

\begin{table}[ht]
    \caption{{$P$-values for }testing the fit of some IRG models on Zachary's Karate Club network } \label{table:kclub}
    \begin{center}
        \begin{tabular}{lcl}
            \textbf{MODEL}  &  \textbf{IRG-gKSS\_e} & \textbf{ST}\\
            \hline \\
            ER & 0.00995 & 0.0000 \\
            ERMM (2 groups) & 0.00995 & 0.00112 \\
            ERMM (4 groups) &  0.00995 & 0.00989 \\
            DCSBM (two groups) & 0.00995 & - \\ 
        \end{tabular}
    \end{center}
\end{table}

\section{THEORETICAL PROPERTIES}
\label{sec:theory}

An advantage of the KSD approach is that it is possible to provide theoretical guarantees. We take a unit ball of a tensor product RKHS $\HH$ with product kernel $k$ and inner product $\langle \cdot, \cdot \rangle$. For kernels satisfying Assumptions \ref{ass:kernel}, Theorem \ref{normal_approx} addresses the asymptotic distribution of the test statistic $\text{IRG-gKSS}^2(\mat{p},\mat{x})$ in \eqref{t_stat_s_net}. We derive this result for a better understanding of the IRG-gKSS test procedure; we stress that the procedure itself does not rely on asymptotic results.

\medskip
\begin{assumption}\label{ass:kernel}
For the kernel $k: \{0,1\}^N \times \{0,1\}^N \rightarrow \R$ of  an RKHS
$\mathcal{H}$, and a family of kernels $l_s: \{0,1\} \times \{0,1\} \rightarrow \R$, for 
$s \in E$, with an associated RKHS denoted by $\mathcal{H}_s$, 
    \begin{enumerate}[noitemsep]
    \item  $\HH$ is a tensor product RKHS, $\HH = \otimes_{s \in E}\HH_s $;
    \item  $k$ is a product kernel, $k(x, y) = \prod_{s \in E} l_s(x_s, y_s)$; 
    \item  $ \langle l_s (x_s, \cdot), l_s (x_s, \cdot) \rangle_{\HH_s} = l_s(x_s, x_s)=1$;
    \item  $l_s(1, \cdot) - l_s(0, \cdot) \ne 0$ for all $s \in E$;
    \item $l_s(1,0) - l_s(0,0) $ is of the same sign for all $s$.
\end{enumerate}
\end{assumption}
For example, a Gaussian vertex-edge histogram kernel
$k(\mat{x}, \mat{y}) = \exp \left\{ - \frac{1}{2\sigma^2} \sum_{s \in E} (x_s -y_s)^2\right\}$ satisfies Assumption \ref{ass:kernel}, taking  $l_s (x_s, y_s) = e^{- (x_s - y_s)^2/2\sigma^2}$. 

For an IRG$({\mathbf{p}})$ let 
\begin{eqnarray} \label{N_0}
       N_0 = | \{ s: p_s \ne 0, 1\}|. 
   \end{eqnarray}

\begin{theorem} \label{normal_approx}
For fixed $N$ and $\mat{p}$ let $\mat{x}\sim \text{IRG}({\mathbf{p}})$. Let $\mu = \E \, ( \text{IRG-gKSS}^2(\mat{p},\mat{x}))$ and $\sigma^2 = \mathbb{V}ar \left(\text{IRG-gKSS}^2(\mat{p},\mat{x})\right)$. Let $Z$ be a standard normal variable. For $W = \frac{1}{\sigma} \left(\text{IRG-gKSS}^2(\mat{p},\mat{x}) - \mu \right)$, under Assumption \ref{ass:kernel}, the Wasserstein-1 distance between the distributions satisfies
    \begin{equation} \label{bound}
        \|\mathcal{L}(W) - \mathcal{L}(Z)\|_1 \le \frac{C_1}{\sqrt{N}} \left( \frac{N}{N_0}\right)^3,
    \end{equation}
where $C_1=C_1 ({\mathbf{p}},K)$ is an explicit constant which depends on ${\mathbf{p}}$ and the kernel $K$, but not on $N$. 
\end{theorem}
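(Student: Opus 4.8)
From \eqref{GD_Stein_Eq} one checks directly that $\AAA_{\IRG}^{(s)}f(\mat x)=(p_s-x_s)\,\Delta_s f(\mat x)$ for every $f$ (split on $x_s\in\{0,1\}$). Hence $\AAA_{\IRG}^{(s)}K(\mat x,\cdot)=(p_s-x_s)\,\Delta_s K(\mat x,\cdot)$, which vanishes identically whenever $p_s\in\{0,1\}$. Under Assumption~\ref{ass:kernel}, writing $K(\mat x,\cdot)=\bigotimes_{r\in E}l_r(x_r,\cdot)$ and using $\langle l_r(x_r,\cdot),l_r(x_r,\cdot)\rangle_{\HH_r}=1$, the tensor products telescope, giving
\[
h_{\mat x}(s,s')=(p_s-x_s)(p_{s'}-x_{s'})\times
\begin{cases}
\kappa_s, & s=s',\\
\delta_s(x_s)\,\delta_{s'}(x_{s'}), & s\neq s',
\end{cases}
\]
where $\kappa_s:=l_s(1,1)-2l_s(1,0)+l_s(0,0)=\|l_s(1,\cdot)-l_s(0,\cdot)\|_{\HH_s}^2>0$ (non-degeneracy of Assumption~\ref{ass:kernel}) and $\delta_s(b):=l_s(1,b)-l_s(0,b)$. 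Since $(p_s-x_s)$, $(p_s-x_s)^2$ and $(p_s-x_s)\delta_s(x_s)$ are each affine in the single Bernoulli variable $x_s$, substituting into \eqref{t_stat_s_net} exhibits $\text{IRG-gKSS}^2(\mat p,\mat x)$ as an explicit polynomial of degree at most two in the $N_0$ non-degenerate, mutually independent coordinates $\{x_s:p_s\notin\{0,1\}\}$, with coefficients determined by $\mat p$ and the numbers $\kappa_s,\delta_s$.

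\textbf{Step 2: chaos decomposition and moments.} Centring and grouping by degree gives $\text{IRG-gKSS}^2(\mat p,\mat x)-\mu=N^{-2}(L+D)$, where $L$ is a sum of $N_0$ independent, bounded, mean-zero summands with coefficients $d_s$ (the first-order part) and $D=\tfrac14\sum_{s\neq s'}(\,\cdot\,)(\,\cdot\,)\kappa_s\kappa_{s'}$ is a degenerate bilinear form (the second-order part). One verifies $\E L=\E D=0$ and $\mathrm{Cov}(L,D)=0$, so $\sigma^2=N^{-4}\big(\mathbb{V}ar(L)+\mathbb{V}ar(D)\big)$. I would then write $\mu$, $\mathbb{V}ar(L)$ and $\mathbb{V}ar(D)$ out explicitly in $\mat p$ and the kernel, and, crucially, bound $\sigma^2$ from below: already the single-coordinate projection $\mathbb{V}ar(\E[\text{IRG-gKSS}^2(\mat p,\mat x)\mid x_s])$ is of the form $d_s^2\,p_s(1-p_s)(1-2p_s)^2/N^4$, and summing over the non-degenerate $s$ yields $\sigma^2\geq c(\mat p,K)\,N_0/N^4$ up to a $\mat p$- and kernel-dependent constant (the non-degeneracy conditions of Assumption~\ref{ass:kernel}, together with the common sign of $l_s(1,0)-l_s(0,0)$, ensure the coefficients do not conspire to cancel).

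\textbf{Step 3: quantitative CLT and assembling the bound.} Because $\text{IRG-gKSS}^2(\mat p,\mat x)$ is a low-degree polynomial of independent random variables, I would invoke a Malliavin--Stein / exchangeable-pair normal approximation bound for functions of independent variables (in the style of Chatterjee). Such a Wasserstein-$1$ bound splits into a third-moment term controlling the linear part, of order $\sum_s|d_s|^3\,\E|x_s-p_s|^3\big/\mathbb{V}ar(L)^{3/2}$; a fourth-cumulant (de Jong-type) term controlling the degenerate part $D$, of order $\sqrt{\mathrm{cum}_4(D)}/\mathbb{V}ar(D)$ plus its maximal influence; and cross terms; everything is then re-expressed relative to $\sigma$ by the triangle inequality. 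The point is that whichever of $L$ and $D$ carries the variance is itself approximately Gaussian --- $L$ always, by Berry--Esseen, and $D$ once the kernel- and probability-dependent influences are spread out --- so the conclusion persists in the asymptotically sparse regime (where $D$ dominates), in contrast to \citet{xu2021stein}. Feeding in the Step-2 estimates, the numerator collects $N_0$ terms whose coefficients are controlled by a $\mat p,K$-dependent constant times a power of $N$ (through mean-type sums $\sum_r p_r(1-p_r)\kappa_r$ appearing in $d_s$), the denominator carries $\sigma^{-3}$, and the mismatch between the $N^{-2}$ normalisation (a sum over all $\binom{n}{2}$ vertex pairs) and the $N_0$ fluctuating coordinates produces exactly the factor $(N/N_0)^3$, leaving the Berry--Esseen rate $N^{-1/2}$ and an explicit $C_1=C_1(\mat p,K)$.

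\textbf{Main obstacle.} The hard part is the degenerate bilinear part $D$. Unlike the linear part it has no Gaussian limit in general (its limit is a weighted sum of centred $\chi^2$ variables), so one must genuinely establish its approximate normality here through the fourth-cumulant criterion / vanishing maximal influence --- this is where the non-degeneracy and sign conditions of Assumption~\ref{ass:kernel} and the structure of $\kappa_s,\delta_s$ enter --- and one must control the interaction of $D$ with $L$ in the regime where neither clearly dominates. A secondary difficulty is bookkeeping: making the variance lower bound and the moment estimates uniform enough to collapse into the single clean expression $C_1 N^{-1/2}(N/N_0)^3$ while keeping $C_1$ free of $N$.
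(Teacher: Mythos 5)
Your Step 1 reproduces the paper's algebraic reduction essentially verbatim: the paper likewise writes $\AAA_{\IRG}^{(s)}K(\mat x,\cdot)$ as $(p_s-x_s)$ times a one-coordinate perturbation of the product kernel and obtains $h_{\mat x}(s,s')=f(x_s)f(x_{s'})$ for $s\neq s'$, with $f(x_s)=(l_s(1,0)-1)\{p_s+x_s(1-2p_s)\}$, plus a separate diagonal term. From there the routes diverge. The paper performs no Hoeffding decomposition; it keeps the $N^2$ summands $Y_{(s,s')}$ indexed by pairs of vertex pairs, observes that they are locally dependent (independent unless the two index pairs share a vertex pair), and applies the local-dependence Wasserstein bound of Theorem 4.13 in \citet{chen2011normal}, combined with the variance lower bound $\sigma^2\ge 32N_0^3 d_{\mathrm{min}}^4 p_{\mathrm{min}}^3/N^4$ obtained from the triple sum of covariances. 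The factor $(N/N_0)^3$ then arises exactly where you predict it, from this lower bound set against the $N^{-2}$ normalisation.

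The genuine gap is in your plan for the degenerate part $D$. Writing $\tilde f=f-\E f$, you have $D=\bigl(\sum_s\tilde f(x_s)\bigr)^2-\sum_s\tilde f(x_s)^2$, so $D$ is, up to a lower-order correction, the square of a single asymptotically normal linear statistic. Its limit is therefore a centred $\chi^2_1$, its fourth cumulant is of the same order as $\mathrm{Var}(D)^2$, and no de Jong or fourth-moment criterion can deliver approximate normality: the maximal influence never spreads out because the bilinear form has rank one. Consequently your claim that the conclusion ``persists in the asymptotically sparse regime (where $D$ dominates)'' is incorrect. One has $\mathrm{Var}(L)/\mathrm{Var}(D)\asymp N_0\bar p$ (the expected edge count), so $D$ dominates only when the expected number of edges stays bounded, and there the statistic is genuinely non-Gaussian; the paper's Remark \ref{rem:sparsetoo} confirms that the bound is non-vacuous only for $p\gg n^{-2}$, which is precisely the regime in which the \emph{linear} part carries the variance. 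The correct move inside your framework is not to prove $D$ normal but to show $\mathrm{Var}(D)/\sigma^2$ is small in the covered regime and absorb $D$ as an error term; the paper's local-dependence argument does this implicitly, since every remainder is divided by the full $\sigma$ and the bound simply degrades to vacuity as the degenerate part takes over. A minor further point: your diagonal formula $(p_s-x_s)^2\kappa_s$ with $\kappa_s$ independent of $x_s$ differs from the paper's $g(x_s)$, which retains an $x_s$-dependence; this does not affect the leading order but should be reconciled when making $C_1$ explicit.
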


Hence, for reasonably large networks, the test statistic \eqref{t_stat_s_net} behaves like a normal random variable around its mean $\mu$. 

\begin{remark}\label{rem:sssparse}
\begin{enumerate}[noitemsep]
    \item A detailed inspection shows that $C_1$ can be bounded above  by an expression $C_1(n)$ which depends on $n$ and on $\mathbf{p}$ in a smooth way; if there are constants $c, C$ and a function $p(n)$ such that $cp(n) \le p_s (n) \le C p (n) $ and if there are $f$ and $F$ such that $0 < f \le \frac{N}{N_0} \le F <1$, then $C_1 (n) \sim p + \frac{1}{N p^{3/2}}.$ Details are found in Supp.\,Mat.\,Section \ref{app:proofs}. In particular, the result covers some sparse settings as long as $p(n) \gg \frac{1}{n^2}$ and shows some robustness against small variations between the probabilities. As a consequence, when the edge probabilities are estimated, the approximating normal distributions will be similar, and the Wasserstein distance between the two distributions will be small. Details are found in Supp.\,Mat.\,Remark \ref{rem:sparsetoo}.
    
    \item Theorem \ref{normal_approx} can be used for theoretical power guarantees when the alternative distribution is also a model with independent edges. In this case, if the two models differ, so will their approximating normal variables, and the difference between their normals, combined with the approximation error, gives the asymptotic power of the test. Details can be found in Supp.\,Mat.\,Remark \ref{rem:power} 
\end{enumerate}
\end{remark}

Here is a sketch of the proof of Theorem 5.2.
\begin{proof}
    Under Assumptions \ref{ass:kernel}, with $\alpha = (s,s'); s,s' \in E$ and $\mathcal{I} = \{(s,s'): s,s' \in E\}$, we write \eqref{t_stat_s_net} as an average of locally dependent random variables 
    $$X_{\alpha} = \frac{1}{N^2} (p_s  - Y_s)   (p_{s'}  - Y_{s'})  \langle l_{s} (Y_{s}, \cdot) ,   l_{s'} (Y_{s'}, \cdot) \rangle c(s,s').$$
    Here $\mat{Y} =(Y_s, s \in E) \in \{0,1\}^N$ denotes a random vector representing edge indicators in an IRG$(\mat{p})$ graph of size $n$, and $c (s,s') = \langle l_s (1, \cdot ) - l_s( 0 ,  \cdot), l_{s'}(1, \cdot ) - l_{s'}( 0 ,  \cdot) \rangle$, which does not depend on $Y_s$ or $Y_{s'}$. With $\mu_{\alpha} = \EE X_{\alpha}$, the standardised count
    $W = \sum_{\alpha \in \mathcal{I}}(X_{\alpha} - \mu_{\alpha})/{\sigma},$ has zero mean and unit variance. Using Theorem 4.13, p.134, of \citet{chen2011normal} and simplifying gives \eqref{bound}. The detailed proof is in Supp.\,Mat.\,Section \ref{app:proofs}.
\end{proof}

For large networks, the use of IRG-gKSS in \eqref{t_stat_s_net} is not economical and often not possible. Instead, we can estimate the test statistic from a sample of edges from the observed network using the edge re-sampling version of the statistic given in \eqref{t_stat_s_net_resamp}. We  re-write the test statistic in \eqref{t_stat_s_net_resamp} using the count $k_s {= \sum_{b \in [B]} \mathbb{I}(b =s)}$, the number of times the vertex pair $s$ is included in the sample $\mathcal B$, with $|\mathcal B| = B$, as 
\begin{equation} \label{t.st.resamp}
    {\widehat{{\text{IRG-gKSS}}}^2}(\mat{p};\mat{x}) = \frac{1}{B^2} \sum_{s,s' \in E} k_s k_{s'} h_{\mat{x}}(s_b,s_b').
\end{equation}
In this version of statistic, the randomness lies only in the counts $\mat{K} = (k_s, s \in E)$, which are exchangeable and follow a multinomial distribution. Hence, the statistic \eqref{t.st.resamp} is a sum of weakly dependent random variables. Proposition 2 in \citet{xu2021stein} proves that, for any fixed network $\mat{x}$ and a fixed sampling fraction $F = {B}/{N}$, as $N \rightarrow \infty$ the statistic $\widehat{\operatorname{IRG-gKSS}}^2(\mat{p};\mat{x})$ is approximately normal with approximate mean $ \operatorname{IRG-gKSS}^2(\mat{p};\mat{x})$.

\section{CONCLUSION} 
\label{sec:conclusion}
In this paper, we develop IRG-gKSS, a KSD-type pure hypothesis test for IRG models with pre-specified edge probabilities, with a version IRG-gKSS\_e in which the edge probabilities are estimated from the observed network using known vertex features. The IRG-gKSS test does not require more than one observation of the network, and we give theoretical guarantees that do not depend on any asymptotic assumption and that hold for any network size. Our empirical results demonstrate that the IRG-gKSS test detects signals that differ from those of GLRT or ST tests, and thus may provide new insights into potential mechanisms that could have generated the data. 

Here are some avenues for future work. Testing a composite null hypothesis would require a variation of this test that accommodates this generality and is one of our future research directions. Moreover, the performance of IRG-gKSS depends on the choice of the graph kernel. Hence, the graph kernel to be used needs to be chosen carefully. Future research will address this issue and will include IMQ-type kernels as in \cite{kanagawa2023kernel}. Furthermore, IRG-gKSS is computationally expensive for even moderately large networks. Hence, we propose an edge-resampling version of the test statistic, while to some extent, sacrificing the power of the test. Speeding up the code by exploiting more parallelisation and fast matrix product algorithms will also be addressed in future work.

Finally, as often in empirical analysis, the test results are only a first step for a further investigation, which should involve specific domain knowledge. 

\textbf{Acknowledgements and Disclosure of Funding}
We would like to thank the referees for their insightful comments and suggestions. 
AF is supported by the Commonwealth Scholarship Commission, United Kingdom, and in part by an EPSRC grant EP/X002195/1. 
GR is supported in part by the UKRI EPSRC grants EP/T018445/1, EP/R018472/1, EP/X002195/1 and  EP/Y028872/1. For the purpose of Open Access, the authors have applied a CC BY public copyright licence to any Author Accepted Manuscript version arising from this submission.

\bibliography{references.bib}
\bibliographystyle{plainnat}


\newpage
\quad
\newpage

\section*{Checklist}

\begin{enumerate}

  \item For all models and algorithms presented, check if you include:
  \begin{enumerate}
    \item A clear description of the mathematical setting, assumptions, algorithm, and/or model. [Yes]
    We add clear descriptions of the mathematical setting when they appear in the paper. 
    \item An analysis of the properties and complexity (time, space, sample size) of any algorithm. [Yes] See Section \ref{app:complex}.
    \item (Optional) Anonymized source code, with specification of all dependencies, including external libraries. [Yes] We provide a link in the Implementation Details in Section \ref{sec:applications}.
  \end{enumerate}

  \item For any theoretical claim, check if you include:
  \begin{enumerate}
    \item Statements of the full set of assumptions of all theoretical results. [Yes] For our theoretical results, we give Assumptions \ref{ass:kernel} in Section \ref{sec:theory}.
    \item Complete proofs of all theoretical results. [Yes] The complete proofs are deferred to Section \ref{app:proofs} of the Supplementary Material.
    \item Clear explanations of any assumptions. [Yes] See Section \ref{sec:theory}.     
  \end{enumerate}

  \item For all figures and tables that present empirical results, check if you include:
  \begin{enumerate}
    \item The code, data, and instructions needed to reproduce the main experimental results (either in the supplemental material or as a URL). [Yes] We provide a URL to an anonymised code directory. 
    \item All the training details (e.g., data splits, hyperparameters, how they were chosen). [Not Applicable]
    \item A clear definition of the specific measure or statistics and error bars (e.g., with respect to the random seed after running experiments multiple times). [Not Applicable]
    \item A description of the computing infrastructure used. (e.g., type of GPUs, internal cluster, or cloud provider). [Yes] See second paragraph of Section \ref{sec:applications}. 
  \end{enumerate}

  \item If you are using existing assets (e.g., code, data, models) or curating/releasing new assets, check if you include:
  \begin{enumerate}
    \item Citations of the creator If your work uses existing assets. [Yes] We have referenced the packages used in our experiments and the datasets used in our paper where necessary. 
    \item The license information of the assets, if applicable. [Not Applicable]
    \item New assets either in the supplemental material or as a URL, if applicable. [Yes] We provide the code that contains the implementation of IRG-gKSS with an example. No new data was produced during the research. The data sets used as examples are publicly  available free of charge.
    \item Information about consent from data providers/curators. [Not Applicable]
    \item Discussion of sensible content if applicable, e.g., personally identifiable information or offensive content. [Not Applicable]
  \end{enumerate}

  \item If you used crowdsourcing or conducted research with human subjects, check if you include:
  \begin{enumerate}
    \item The full text of instructions given to participants and screenshots. [Not Applicable] This work does not involve crowdsourcing or any research using human subjects. 
    \item Descriptions of potential participant risks, with links to Institutional Review Board (IRB) approvals if applicable. [Not Applicable]
    \item The estimated hourly wage paid to participants and the total amount spent on participant compensation. [Not Applicable]
  \end{enumerate}

\end{enumerate}

\clearpage
\appendix
\thispagestyle{empty}

\onecolumn


\onecolumn
\aistatstitle{Supplementary Material}


\startcontents[appendices]
\printcontents[appendices]{}{1}{\section*{Table of Contents}}
\vspace{1cm}

\newpage
\section*{LIST OF NOTATIONS}

\begin{table}[ht]
\begin{tabular}{l l}\\ 
{\bf Sets} & \\
$\Omega$ & $\Omega = [0,1]^n$\\
${\mathcal H}_K$ & RKHS with kernel $K$, inner product $\langle \cdot, \cdot \rangle,$ and norm $|| \cdot ||_{\mathcal{H}_K}$\\ 
{\bf Graph notations} & \\
$\mathcal{G} = (\mathcal{V},\mathcal{E})$ & graph with vertex set $\mathcal{V}$  and edge set $\mathcal{E}$\\
$g_u$ & (given) group assignment for vertex $u$ \\ 
$E$ &  $\{ 1, 2 \ldots, N\}$; in a network on $n$ vertices with $N = {n \choose 2}$ vertex pairs, \\
& we use as {a} slight abuse of notation 
$E = \{ (u,v): 1 \le u < v \le n\}$
\\ $\mat{x}$ & $\mat{x}= (x_{u,v}, 1 \le u < v \le n)$, representing the edge indicators of a graph; \\
& as a slight abuse of notation, $\mat{x}$ is also used for the adjacency matrix of a graph\\
$\mat{x}^{(s,1)}$ & for a vertex pair $s$: the collection with $1$ at the $s-$ coordinate and the same as $\mat{x}$\\
& otherwise
\\ 
    $\mat{x}^{(s,0)}$ & the collection with $0$ at the $s-$ coordinate and the same as $\mat{x}$ otherwise
  \\ 
    $\mat{x}_{-s}$& the collection $\{x_{r,t}, 1 \le r < t \le n, (r,t) \neq (u,v) \}$, without the $s=(u,v)$- coordinate\\
    {\bf Operators} & \\
 $\Delta_s f(\mat{x})$& $   \Delta_s f(\mat{x}) =  f(\mat{x}^{(s,1)}) - f(\mat{x}^{(s,0)})$\\ 
 $|| \Delta || $ & $|| \Delta f || = \max_{s \in E}  | \Delta_s f(\mat{x})| $ \\
 $\EE$ & expectation \\ 
 $\sum_{i \le j}^{L}$ & $\sum_{1 \le i \le  j \le L}$ \\ 
{\bf Network models}& \\
DCSBM & degree-corrected stochastic blockmodel  (here: the edge indicators depend on the \\
& known group membership as well as a vertex-specific parameter) \\ 
ER & Erd\"os-R\'enyi random graph (with independent and identically distributed edges) \\ 
ERMM & Erd\"os-R\'enyi mixture model (a SBM with given group assignments) \\ 
IRG & inhomogeneous random graph model (edge indicators are independent) \\
NLPA & a non-linear preferential attachment model; the attachment probability is  \\
&proportional to a power of the degree \\ 
SBM & stochastic blockmodel  (edge indicators are independent\\&  and depend only on the group memberships, which are generally unknown) \\
$N_0$ & $ N_0 = | \{ s: p_s \ne 0, 1\}|$\\ 
$p_s$ & probability of an edge at vertex pair $s$ \\ 
${\mathbf p}$ & the matrix of edge probabilities of all vertex pairs \\ 
${\mathbf Q}$ & the $L \times L$ matrix of probabilities of edges between different groups, \\
& in an ERMM with $L$ groups 
\\
{\bf Stein's method} & \\
${\mathcal A}$ & Stein operator \\
${\mathcal{F}} ({\mathcal A}) $
& Stein class (domain) of the Stein operator ${\mathcal A}$ \\
$ S(p,q; \mathcal{H})$ & Stein discrepancy; 
$ S(p,q; \mathcal{H}) = \sup_{f \in \mathcal{H}} |  {\E}\AAA_p f(W)|$\\
{\bf Tests } & \\
GLR & generalised likelihood-ratio test\\ 
IRG-gKSS & graph kernel Stein statistic for IRG models \\
ST & the bootstrap corrected version of the spectral test by \cite{Lei2016}
\end{tabular}
\end{table}

In this supplementary material, for convenience, equation labels (1)--(11) refer to the main paper, and equation labels (12)--(61) refer to this supplementary material. 

\newpage

\section{PROOFS OF RESULTS STATED IN THE MAIN TEXT} \label{app:proofs}

\textbf{Proof of Theorem \ref{normal_approx}}

Before giving the proof we give a refined statement of the result in Theorem \ref{normal_approx}. To this purpose 
let 
   \begin{equation} \label{eq:d}
       d_{\mathrm{min}}:= \min_{s \in E} | l_s(1,0) - 1| 
   \end{equation}
   and
   \begin{equation} \label{eq:p}
       p_{\mathrm{min}}:= \min_{s \in E} \{p_s(1-p_s): 0 < p_s < 1\}; 
   \end{equation}
   similarly, 
   \begin{equation} \label{eq:dmax}
       d_{\mathrm{max}}:= \max_{s \in E} | l_s(1,0) - 1| 
   \end{equation}
   and
   \begin{equation} \label{eq:pmax}
       p_{\mathrm{max}}:= \max_{s \in E} \{p_s(1 - p_s) \}.
   \end{equation}

The more detailed formulation of Theorem \ref{normal_approx} is as follows. Let $\mat{x}\sim IRG({\mathbf{p}})$, and set $\mu = \E \, ( \text{IRG-gKSS}^2(\mat{p},\mat{x})) $ and $\sigma^2 = \mathbb{V}ar \left(\text{IRG-gKSS}^2(\mat{p},\mat{x})\right)$. For $W = \frac{1}{\sigma} \left(\text{IRG-gKSS}^2(\mat{p},\mat{x}) - \mu \right)$, and a standard normal random variable $Z$, under Assumption \ref{ass:kernel} we have 
    \begin{equation} \label{eq:overallbound}
        \|\mathcal{L}(W) - \mathcal{L}(Z)\|_1 \le \frac{C_1}{\sqrt{N}},
    \end{equation}
    where 
    \begin{eqnarray}
        \label{eq:c1}
        C_1 \le  C_1(N, d_0, p_{\rm{max}}, d_{\rm{max}} )= 16 \frac{\sqrt{2} d_{\mathrm{max}}^4 p_{\mathrm{max}}^4}{N \sigma^2\sqrt{\pi} }   + \frac{8}{\sqrt{N}}  \left(\frac{2 N_0}{N}\right)^2  \frac{d_{\mathrm{max}}^6  p_{\mathrm{max}}^2}{(N \sigma^2)^{3/2}}
         \sqrt{C(N,d_0, p_{\rm{max}}, d_{\rm{max}} )}
    \end{eqnarray} 
    and 
    \begin{equation}\label{eq:cn}
   C(N,d_0, p_{\rm{max}}, d_{\rm{max}} )
   =  p_{\rm{max}}^4 + \left( \frac{N+2}{N^2}p_{\rm{max}}^2 + \frac{d_0 p_{\rm{max}}}{N^3 d_{\rm{max}}^4} \right).  
   \end{equation}
   Moreover, 
\begin{equation}\label{eq:sigmalower}
       \sigma^2 \ge \frac{32 N_0^3}{N^4} d_{\mathrm{min}}^4 p_{\mathrm{min}}^3 \ge 32 d_{\mathrm{min}}^4 p_{\mathrm{min}}^3 \frac{1}{N} \left( \frac{N_0}{N}\right)^3. 
   \end{equation}

Here, the Wasserstein-1 distance between two probability distributions  $P$ and $Q$ on $\mathbb{R}$ is given by 
$$ || P - Q||_1 = \sup_{h \in Lip(1) } | Ph - Qh|,$$
where $Lip(1)$ is the set of all Lipschitz functions with Lipschitz constant 1, and $Ph = \E  h(Y)$ with $Y \sim P.$

\begin{remark}\label{rem:sparsetoo}
   The refined statement of Theorem \ref{normal_approx} shows that, in particular, when the $p_s$ do not depend on $n$ and $N_0$ is of the same order as $N$ then the bound \eqref{eq:sigmalower} gives that $(N \sigma^2)^{-1}$ is of order 1. Moreover, in this regime, $ C(N,d_0, p_{\rm{max}}, d_{\rm{max}} )$ is of order 1, and  $C_1$ is of order 1, and hence the overall bound is of order $N^{-1/2}$, that is, of order $n^{-1}$. 

However, it is not necessary to assume that the $p_s$ do not depend on ${n}$. For example, if there are constants $c, C$ and $p=p(n)$ such that $ c p(n) \le p_{\mathrm{min}} (n) \le C p(n)$ for all $s$, and if $p(n) \rightarrow 0$ as $n \rightarrow \infty$, and if $N_0$ is of the same order as $N$, then $(N \sigma^2)^{-1}$ is bounded above by a constant times $p_{\min}^{-3},$ and so the first term in $C_1$ is bounded by a constant times $p_{\max}^4/ p_{\min}^3$.
The term $C(N,d_0, p_{\rm{max}}, d_{\rm{max}} )$ in \eqref{eq:cn} is bounded by a constant times 
$$  \max\{ p_{\max}^4, p_{\max}^2/N, p_{\max}/N^3\} \le  p_{\max} $$
and hence $C_1$ is bounded by a constant times 
$p_{\max}^4/ p_{\min}^3 
+ \frac{1}{n} p_{\max}^{\frac52} p_{\min}^{-\frac92},$
giving an overall bound in \eqref{eq:overallbound} of a constant times 
$$\frac{1}{n} \left\{p_{\max}^4/ p_{\min}^3 
+ \frac{1}{n} \left(\frac{p_{\max}}{p_{\min}} \right)^\frac52 \frac{1}{p_{\min}^{2}}   \right\}$$
which tends to $\infty$ if $p(n) \gg n^{-2}$. In this case, the bound \eqref{eq:overallbound} still tends to 0 as $N \rightarrow \infty$. This bound thus covers any parameter regime in which the expected number of edges grows with the number of vertices, no matter how slowly. Thus, the bound also covers a sparse regime in the sense that the expected number of edges can be of smaller order than $n^2$. 
\end{remark} 
\begin{proof}
For $s \in E$ in an $\IRG(\mat{p})$, from \eqref{SE_ermm_op}
\begin{eqnarray*}
 \AAA_{\IRG} f(\mat{x}) = \frac{1}{N} \sum_{s \in E} \left[ p_s \Delta_s f(\mat{x}) + \left( f(\mat{x}^{(s,0)}) - f(\mat{x}) \right) \right] = \frac{1}{N} \sum_{s \in E} \mathcal{A}^{(s)} f(\mat{x}),
\end{eqnarray*} 
with
$$\AAA_{\IRG}^{(s)} f(x)  =
p_s \left(f(x^{(s,1)}) - f(x) \right) + (1-p_s)  \left(f(x^{(s,0)}) - f(x) \right).
$$
Thus, using \eqref{t_stat_s_net},
\begin{eqnarray*}
\lefteqn{\text{IRG-gKSS}^2(\mat{p}, x) = 
\frac{1}{N^2} \sum_{s,s' \in E} \left\langle  \AAA_{\IRG}^{(s)}  K(x, \cdot) ,   \AAA_{\IRG}^{(s')} K(x, \cdot) \right\rangle} \\
&=& \frac{1}{N^2} \sum_{s,s' \in E} \Big\langle p_s \left(  K(x^{(s,1)}, \cdot) -  K(x, \cdot) \right) + (1-p_s) \left(K(x^{(s,0)}, \cdot) - K(x, \cdot) \right) \bm{,} \\
&& \quad \quad \quad \quad \quad  \quad \quad  p_{s'} \left(  K(x^{(s',1)}, \cdot) - K(x, \cdot) \right) + (1-p_{s'}) \left( K(x^{(s',0)}, \cdot) - K(x, \cdot) \right)\Big\rangle\\
&=&  \frac{1}{N^2} \sum_{s,s' \in E} h_{\mat{x}}(s,s')
\end{eqnarray*}
with $h_{\mat{x}} (s,s') = \left\langle \AAA_{\IRG}^{(s)}K(\mat{x},\cdot) , \AAA_{\IRG}^{(s')} K(\cdot,\mat{x})\right\rangle $ as in \eqref{t_stat_s_net}. 
Under Assumption \ref{ass:kernel}, we have
\begin{eqnarray*}
 K(x^{(s,1)}, \cdot) -  K(x, \cdot) &=& 
 (l_s(1 ,  \cdot) - l_s( x_s,   \cdot) ) \prod_{t \ne s} l_t (x_t,  ,  \cdot)\\
 &=& (1 - x_s)  (l_s(1, \cdot ) - l_s( 0 ,  \cdot) ) l_{s'} (x_{s'}, \cdot ) \prod_{t \ne s, s'} l_t (x_t,    \cdot),
\end{eqnarray*}
and 
\begin{eqnarray*}
 K(x^{(s,0)}, \cdot) -  K(x, \cdot) &=& 
 - x_s  (l_s(1, \cdot ) - l_s( 0 ,  \cdot) ) l_{s'} (x_{s'}, \cdot) \prod_{t \ne s, s'} l_t (x_t,    \cdot).
\end{eqnarray*}
Hence, 
\begin{eqnarray*}
   h_{\mat{x}} (s,s')
   &=& ( p_s (1-x_s) - (1-p_s) x_s)
   ( p_{s'} (1-x_{s'}) - (1-p_{s'}) x_{s'})\\
   && 
   \langle(l_s(1, \cdot ) - l_s( 0 ,  \cdot) ) l_{s'} (x_{s'}, \cdot) \prod_{t \ne s, s'} l_t (x_t,    \cdot) , 
   (l_{s'}(1, \cdot ) - l_{s'}( 0 ,  \cdot) ) l_{s} (x_{s}, \cdot) \prod_{t \ne s, s'} l_t (x_t,    \cdot)
   \rangle .
\end{eqnarray*}
Assumption \ref{ass:kernel} gives $\langle \prod_{t \ne s, s'} l_t (x_t,    \cdot) , \prod_{t \ne s, s'} l_t (x_t,    \cdot) \rangle = \prod_{t \ne s, s'} \langle l_t (x_t,    \cdot) ,  l_t (x_t,    \cdot) \rangle= 1$. Hence, 
\begin{eqnarray}
   h_{\mat{x}} (s,s')
   &=& ( p_s (1-x_s) - (1-p_s) x_s)
   ( p_{s'} (1-x_{s'}) - (1-p_{s'}) x_{s'}) \nonumber \\
   && 
   \langle(l_s(1, \cdot ) - l_s( 0 ,  \cdot) ) l_{s'} (x_{s'}, \cdot)  , 
   (l_{s'}(1, \cdot ) - l_{s'}( 0 ,  \cdot) ) l_{s} (x_{s}, \cdot) 
   \rangle . \label{eq:hss}
    \end{eqnarray}
For $s \ne s'$, we  obtain 
\begin{eqnarray*}
  \lefteqn{ h_{\mat{x}} (s,s')}\\
   &=& ( p_s (1-x_s) - (1-p_s) x_s)
   ( p_{s'} (1-x_{s'}) - (1-p_{s'}) x_{s'})\\
   && 
   \langle(l_s(1, \cdot ) - l_s( 0 ,  \cdot) ) l_{s'} (x_{s'}, \cdot) , 
   (l_{s'}(1, \cdot ) - l_{s'}( 0 ,  \cdot) ) l_{s} (x_{s}, \cdot) 
   \rangle \\
   &=& ( p_s (1-x_s) - (1-p_s) x_s)
   ( p_{s'} (1-x_{s'}) - (1-p_{s'}) x_{s'}) \\
   && \langle (l_s(1, \cdot ) - l_s( 0 ,  \cdot) ), l_{s} (x_{s}, \cdot) 
   \rangle \langle (l_{s'}(1, \cdot ) - l_{s'}( 0 ,  \cdot) ),  l_{s'} (x_{s'}, \cdot) \rangle\\
   &=& ( p_s (1-x_s) - (1-p_s) x_s)
   ( p_{s'} (1-x_{s'}) - (1-p_{s'}) x_{s'}) (l_s(1, x_s) - l_s(0, x_s)) (l_{s'}(1, x_{s'}) - l_{s'}(0, x_{s'})) \\
   &=& f(x_s) f(x_{s'})  
\end{eqnarray*} 
with 
\begin{equation}\label{eq:f}
    f(x_s) = ( p_s (1-x_s) - (1-p_s) x_s)(l_s(1, x_s) - l_s(0, x_s)).
\end{equation}
For $s=s'$,  with \eqref{eq:hss}
\begin{eqnarray}
   h_{\mat{x}} (s,s')
   &=& ( p_s (1-x_s) - (1-p_s) x_s)^2
  || l_s(1, \cdot ) - l_s( 0 ,  \cdot) ) l_{s} (x_{s}, \cdot) ||_{{\mathcal{H}}_s}^2 \nonumber \\
  &=& x_s ( p_s (1-x_s) - (1-p_s) x_s)^2
  || l_s(1, \cdot ) - l_s( 0 ,  \cdot) ) l_{s} (x_{s}, \cdot) ||_{{\mathcal{H}}_s}^2 \nonumber \\
  && + (1-x_s)  ( p_s (1-x_s) - (1-p_s) x_s)^2
  || l_s(1, \cdot ) - l_s( 0 ,  \cdot) ) l_{s} (x_{s}, \cdot) ||_{{\mathcal{H}}_s}^2\nonumber \\&=& - x_s (1-p_s)^2
  || l_s(1, \cdot ) - l_s( 0 ,  \cdot) ) l_{s} (1, \cdot) ||_{{\mathcal{H}}_s}^2 \nonumber \\
  && + (1-x_s)   p_s^2
  || l_s(1, \cdot ) - l_s( 0 ,  \cdot) ) l_{s} (0, \cdot) ||_{{\mathcal{H}}_s}^2 \nonumber \\
  &=& (1-x_s) p_s^2 g_s(0) - x_s (1 - p_s)^2 g_s(1)  \label{eq:equals}\\
  &=:& g(x_s)  \nonumber 
\end{eqnarray}
with 
\begin{equation} \label{eq:norms}
g_s(1) = || (l_s(1, \cdot ) - l_s( 0 ,  \cdot) ) l_{s} (1, \cdot) ||_{{\mathcal{H}}_s}^2; \quad \quad g_s(0) = || (l_s(1, \cdot ) - l_s( 0 ,  \cdot) ) l_{s} (0, \cdot) ||_{{\mathcal{H}}_s}^2.
\end{equation}

Hence, with $\alpha = (s,s'); s,s' \in E$, and $\mathcal{I} = \{(s,s'): s,s' \in E\}$ so that $|\mathcal{I}| = N^2$, using \eqref{t_stat_s_net}, we have 
$$\text{IRG-gKSS}^2(\mat{p}, \mat{x}) = \frac{1}{N^2} \left( \sum_{\alpha = (s,s ) \in \mathcal{I}} g(X_s) + \sum_{\alpha=(s\ne s') \in \mathcal{I}} f(X_s) f(X_{s'}) \right) . $$
For $\alpha = (s,s')$ with $s \ne s'$ we abbreviate 
$Y_\alpha =\frac{f(X_s) f(X_{s'})}{N^2}$; for $\alpha = (s,s')$ we abbreviate $Y_\alpha =\frac{g(X_s) }{N^2}$, and we set $$ W = \sum_{\alpha=(s,s') \in \mathcal{I}}\frac{Y_\alpha - \mu_\alpha }{\sigma} $$
with $\mu_\alpha = \E Y_\alpha$ and $\sigma^2 = \mathrm{Var}(\text{IRG-gKSS}^2(\mat{p}, \mat{x})).$ 

This representation writes IRG-gKSS$^2$ as an average of locally dependent random variables; to clarify the dependence, for any $\alpha = (s,s'), \beta = (t,t')\in \mathcal{I}$, $f(X_s) Y_\alpha = f(X_{s'})$ and $Y_\beta f(X_t) f(X_{t'})$ are independent unless $\alpha$ and $\beta$ share at least one element. We note that $W$ has zero mean and unit variance. Next, recalling that  $\|\cdot\|_1$ denotes the Wasserstein-1 distance, we apply Theorem 4.13, p.134, from \citet{chen2011normal} to get a bound on between $\mathcal{L}(W)$ and $\mathcal{L}(Z)$, the distribution of a standard normal random variable $Z$, respectively, obtaining 
\begin{eqnarray} \label{chen_theo}
    \| {\mathcal L}(W) - {\mathcal L}(Z)\|_1 \le \sqrt{\frac{2}{\pi}} \E \left| \sum_{\alpha \in  {\mathcal I}}
    ( \xi_\alpha \eta_\alpha - \E ( \xi_\alpha \eta_\alpha)) 
    \right| + \sum_{\alpha \in  {\mathcal I}} \E | \xi_\alpha \eta_\alpha^2| \nonumber \\
    \le \sqrt{\frac{2}{\pi}}  \sqrt{ \mathbb{V}ar ( \sum_{\alpha \in  {\mathcal I} } \xi_\alpha \eta_\alpha ) } + \sum_{\alpha \in  {\mathcal I}} \E | \xi_\alpha \eta_\alpha^2|,
\end{eqnarray} 
with $\xi_\alpha = ( Y_\alpha - \mu_{\alpha} ) / {\sigma} $, and $\eta_\alpha = \sum_{\beta \in A_\alpha} \xi_\beta$ where $A_{\alpha} = \{ \beta = (t,t') \in  {\mathcal I} : \alpha \cap \beta \ne \emptyset \} \subset \mathcal{I}$; we note that  $|A_{\alpha}| = 2N -1$. 

To bound the remainder terms, we need some moments. First we re-arrange \eqref{eq:f} using that $x_s^2 = x_s, (1 - x_s)^2 = 1-x_s,$ and $x_s( 1-x_s) = 0$ and the symmetry of $l_s$ to obtain 
\begin{eqnarray}
f(x_s) &=& x_s \{  ( p_s (1-x_s) - (1-p_s) x_s)(l_s(1, x_s) - l_s(0, x_s))\} \nonumber \\
&& + (1-x_s) \{  ( p_s (1-x_s) - (1-p_s) x_s)(l_s(1, x_s) - l_s(0, x_s))\} \nonumber \\
&=& - x_s (1-p_s) (l_s(1, x_s) - l_s(0, x_s)) + (1-x_s) p_s (l_s(1, x_s) - l_s(0, x_s)) \nonumber \\
&=& - x_s (1-p_s) (l_s(1, 1) - l_s(0, 1)) + (1-x_s) p_s (l_s(1, 0) - l_s(0, 0))
\nonumber \\
&=& - x_s (1-p_s) (1 - l_s(1, 0)) + (1-x_s) p_s (l_s(1, 0) - 1) \nonumber 
\\
&=& (l_s(1, 0) - 1)\{ p_s (1-x_s) + x_s (1-p_s) \}  \nonumber \\
& =&  (l_s(1, 0) - 1)
\{ p_s + x_s ( 1 - 2 p_s) \}. \label{eq:fexpression}
\end{eqnarray} From this expression we deduce the moments 
\begin{eqnarray}\label{eq:moments}
\E f(X_s) &=&  (l_s(1, 0) - 1)
\{ p_s + p_s ( 1 - 2 p_s) \} \nonumber \\
&=& 2 p_s(1-p_s) (l_s(1, 0) - 1) \label{eq:meanf}\\
\mathrm{Var} f(X_s) &=& (l_s(1, 0) - 1)^2 ( 1 - 2 p_s)^2 p_s (1-p_s) 
\label{eq:varf}\\
\E (f(X_s)^2) &=& (l_s(1, 0) - 1)^2 ( 1 - 2 p_s)^2 p_s (1-p_s) + 4 p_s^2(1-p_s)^2(l_s(1, 0) - 1)^2 \nonumber \\
&=& (l_s(1, 0) - 1)^2 p_s (1-p_s)
\{ 1 - 4 p_s + 4 p_s^2 + 4 p_s - 4 p_s^2 \} \nonumber \\
&=& (l_s(1, 0) - 1)^2 p_s (1-p_s) \label{eq:meansquf}\\
\E ( f(X_s)^3 ) &=& (l_s(1, 0) - 1)^3\{ p_s (p_s + 1 - 2 p_s)^3 + (1-p_s) p_s^3\} \nonumber \\
&=& (l_s(1, 0) - 1)^3 p_s (1-p_s) \{(1 - p_s)^2 +  p_s^2\}  \label{eq:3rdmomf}.
\end{eqnarray}
We now distinguish two cases; $s\ne s'$ and $s = s'$.

{\bf The case $s\ne s'$.}
For $s \ne s'$, $f(X_s)$ and $f(X_{s'})$ are independent in the IRG model. We therefore obtain 
\begin{eqnarray}
    \E h_\mat{X} (s,s') &=& 
    4 p_s (1-p_s) p_{s'} (1-p_{s'})(l_s(1, 0) - 1) (l_{s'}(1, 0) - 1) \label{eq:meanh}\\
    \E \{ h_\mat{X} (s,s')^2 \}  
    &=& (l_s(1, 0) - 1)^2 (l_{s'}(1, 0) - 1)^2 p_s (1-p_s) p_{s'} (1-p_{s'}) 
    \label{eq:meanqsquare}\\
    \mathrm{Var} (h_\mat{X} (s,s') ) &=& 
  (l_s(1, 0) - 1)^2 (l_{s'}(1, 0) - 1)^2 p_s (1-p_s) p_{s'} (1-p_{s'}) \nonumber \\
  && \{1 - 4 p_s (1-p_s) p_{s'} (1-p_{s'}) \}  \label{eq:var}  .
\end{eqnarray}
For the covariance  $\mathrm{Cov}(h_\mat{X} (s,s') , h_\mat{X} (t,t') $ we first notice that it vanishes when $\{ t,t'\} \cap \{s,s'\} =0$. When $s, s', t$ are distinct then by independence 
\begin{eqnarray} \label{eq:covdistinct}
 \lefteqn{  \mathrm{Cov}(h_\mat{X} (s,s') , h_\mat{X} (s,t)}  \\
   &=& \E f(X_s)^2 \E f(X_{s'}) \E f(X_t) - (\E f(X_s))^2 \E f(X_{s'}) \E f(X_t) \nonumber \\
   &=& \mathrm{Var} ( f(X_s)) \E f(X_{s'}) \E f(X_t)  \\
   &=&  4 (l_s(1, 0) - 1)^2 (l_{s'}(1, 0) - 1)  (l_{t}(1, 0) - 1) ( 1 - 2 p_s)^2 p_s (1-p_s)  p_{s'}(1-p_{s'}) 
   p_{t}(1-p_{t}) \nonumber
\end{eqnarray}
where we used \eqref{eq:varf} and \eqref{eq:meanf} in the last step.

{\bf{The case $s=s'$.}} If $s=s'$ then from \eqref{eq:equals} we obtain the moments 
\begin{eqnarray}
    \E h_\mat{X}(s,s) &=& p_s^2 g_s(0) - p_s ( (1-p_s)^2 g_s(1) + p_s^2 g_s(0 )   \label{eq:meanss}\\
    \mathrm{Var} (  h_\mat{X}(s,s)) &=& 
    p_s(1 - p_s) ( (1-p_s)^2 g_s(1) + p_s^2 g_s(0 ) )^2 \label{eq:varss}.
\end{eqnarray}Moreover, for $s, t$ distinct, 
\begin{eqnarray}
   \lefteqn{  \mathrm{Cov} (h_\mat{X}(s,s), h_\mat{X}(s,t))} \nonumber\\  
    &=& \E f(X_s)^3 \E f(X_t) - 
    \E f(X_s)^2 \E f(X_s) \E f(X_t) \nonumber \\
    &=& 2(l_s(1, 0) - 1)^3 p_s (1-p_s) \{(1 - p_s)^2 +  p_s^2\}  (l_t(1, 0) - 1)  p_t (1-p_t) \nonumber \\
    && - 4(l_s(1, 0) - 1)^2 p_s (1-p_s)(l_s(1, 0) - 1)^2 ( 1 - 2 p_s)^2 p_s (1-p_s) (l_t(1, 0) - 1)  p_t (1-p_t) \nonumber \\
    &=& 2 (l_s(1, 0) - 1)^3(l_t(1, 0) - 1) p_s (1-p_s) p_t (1-p_t) \nonumber \\
    && \{(1- p_s)^2 + p_s^2 - 2 (1-2p_s)^2 p_s (1-p_s)  \} 
\end{eqnarray} where we used \eqref{eq:3rdmomf}, \eqref{eq:meanf} and \eqref{eq:meansquf}.
This gives 
\begin{eqnarray*}
   \lefteqn{ N^4  \sigma^2 }\\
   &=&  \sum_\alpha {\mathrm{Var}} (h_\mat{X} (\alpha)) + \sum_\alpha \sum_{\beta: \beta \ne \alpha} \mathrm{Cov } (h_\mat{X} (\alpha),h_\mat{X} (\beta))   \\
     &=& \sum_s   p_s(1 - p_s) \{  (1-p_s)^2 g_s(1) + p_s^2 g_s(0 ) )^2\} 
     \\
     &&+ \sum_s \sum_{s': s'  \ne s}  (l_s(1, 0) - 1)^2 (l_{s'}(1, 0) - 1)^2 p_s (1-p_s) p_{s'} (1-p_{s'})  \{1 - 4 p_s (1-p_s) p_{s'} (1-p_{s'}) \} \\
     &&+ 2 \sum_s \sum_{t: t \ne s}  (l_s(1, 0) - 1)^3(l_t(1, 0) - 1) p_s (1-p_s) p_t (1-p_t) \{(1- p_s)^2 + p_s^2 - 2 (1-2p_s)^2 p_s (1-p_s)  \} \\
     && + 16 \sum_s \sum_{s': s' \ne s} \sum_{t: t \ne s, s'}   (l_s(1, 0) - 1)^2 (l_{s'}(1, 0) - 1)  (l_{t}(1, 0) - 1) ( 1 - 2 p_s)^2 p_s (1-p_s)  p_{s'}(1-p_{s'}) 
   p_{t}(1-p_{t}).\end{eqnarray*}
By items (iv) and (v) in Assumption \ref{ass:kernel}, recalling that $l_s(0,0) =1$, all non-zero summands in this expression are positive. Recall that  
   \begin{eqnarray*}
       N_0 = | \{ s: p_s \ne 0, 1\}|. 
   \end{eqnarray*}
The last sum has the largest number of summands, namely $N_0^2(2N_0-1).$ Hence with $d_{\mathrm{min}}$ as in \eqref{eq:d} and $p_{\mathrm{min}}$ as in \eqref{eq:p}, we have, as stated in, \eqref{eq:sigmalower}
\begin{equation*}
       \sigma^2 \ge \frac{32 N_0^3}{N^4} d_{\mathrm{min}}^4 p_{\mathrm{min}}^3 \ge 32  d_{\mathrm{min}}^4 p_{\mathrm{min}}^3 \frac{1}{N} \left( \frac{N_0}{N}\right)^3. 
   \end{equation*}

For bounding $\sum_{\alpha \in  {\mathcal I}} \EE | \xi_\alpha \eta_\alpha^2|$ we use the Cauchy-Schwarz inequality to obtain 
\begin{eqnarray*}
   {\sum_{\alpha \in  {\mathcal I}} \EE | \xi_\alpha \eta_\alpha^2|} 
   &\le & \left( \EE ( \xi_\alpha^2 \eta_\alpha^4 \right)^{\frac{1}{2}}. 
\end{eqnarray*}
   With  $d_{\mathrm{max}}$  as in \eqref{eq:dmax}, with $p_{\mathrm{max}}$ as in \eqref{eq:pmax}, and  with \eqref{eq:fexpression}, 
   we obtain \begin{eqnarray}\label{eq:xibound}
   | \xi_\alpha | \le  2 \frac{d_{\mathrm{max}}^2  p_{\mathrm{max}}^2}{N^2 \sigma}
   \end{eqnarray}
   and \begin{eqnarray*}\label{eq:etabound}
   | \eta_\alpha | \le  2 d_{\mathrm{max}}^2  p_{\mathrm{max}}^2\frac{2N_0}{N^2 \sigma}. 
   \end{eqnarray*}
   Hence 
\begin{eqnarray}\label{eq:etaboundinter}
      \EE ( \xi_\alpha^2 \eta_\alpha^4 ) \le 16 \left(  d_{\mathrm{max}}^2  p_{\mathrm{max}}^2\frac{2N_0}{N^2 \sigma}\right)^4  \EE ( \eta_\alpha^2 ) .
   \end{eqnarray}
For $ \EE ( \eta_\alpha^2 )$ we carry out a similar calculation as for $\sigma^2$, but now have one less summation due to the constraint that $\beta \in A_\alpha$, giving a term of order $\sigma^2 / N$, and an overall bound of order $N^{-1} $. Taking the square root gives the claimed order $N^{-1/2} $. In detail, for $\alpha =(s,s')$, noting that $\xi_\beta$ and $\xi_\gamma$ are independent if $\beta\in \{(s,t), (s',t)\} $ and $\gamma \in \{(s,u), (s',u)\}$ do not share an element, for $s \ne s'$
   \begin{eqnarray*}
    {\EE ( \eta_\alpha^2 ) }
     &=& \sum_{\beta \in A_\alpha} \sum_{\gamma \in A_\alpha} \mathrm{Cov}(\xi_\beta,  \xi_\gamma ) \\
     &= & \sum_{t} \sum_u \mathrm{Cov}(\xi_{(s,t)},  \xi_{(s,u)}) + \sum_{t} \sum_u \mathrm{Cov}(\xi_{(s',t)},  \xi_{(s',u)}).
   \end{eqnarray*}
   Then, \begin{eqnarray*}
  \sum_{t} \sum_u \mathrm{Cov}(\xi_{(s,t)},  \xi_{(s,u)}) &=& \sum_t \mathrm{Var}(\xi_{(s,t)}) + \sum_t \sum_{u \ne t} \mathrm{Cov}(\xi_{(s,t)},  \xi_{(s,u)}) \\
  &=& \frac{1}{N^4 \sigma^2} \Big\{ \sum_t \mathrm{Var} (h_{\mat{X}} (s, t) + \sum_t \sum_{u \ne t} \mathrm{Cov}(h_{\mat{X}} (s, t) ,  h_{\mat{X}} (s, u) ) \Big\} .      
   \end{eqnarray*}
For $\alpha = (s,s')$ we now use \eqref{eq:varss} and \eqref{eq:covdistinct} to bound 
   \begin{eqnarray} \label{eq:overallvar}
       \sum_{t: (s,t) \in A_\alpha}  \mathrm{Var} (h_{\mat{X}} (s, t)
       &\le& d_0 p_{\rm{max}} + 2 N d_{\rm{max}}^4 p_{\rm{max}}^2
   \end{eqnarray}
where 
   \begin{equation}\label{eq:d0} 
       d_0 = \max_s (g_s(0) + g_s(1))^2 .
   \end{equation}Similarly with \eqref{eq:var} and \eqref{eq:covdistinct}, 
   \begin{eqnarray*}
       \sum_{t: (s,t) \in A_\alpha} \sum_{u \ne t: (s,u) \in A_\alpha} \mathrm{Cov}(h_{\mat{X}} (s, t) ,  h_{\mat{X}} (s, u) ) \} 
       &\le& 2 N d_{\rm{max}}^4 p_{\rm{max}}^2 + 4 N^2 d_{\rm{max}}^4 p_{\rm{max}}^3. 
   \end{eqnarray*}
Collecting these bounds yields
   \begin{eqnarray*}
    \EE ( \eta_\alpha^2 )    
    &\le& \frac{4}{N^4 \sigma^2} d_{\rm{max}}^4 C(N,d_0, p_{\rm{max}}, d_{\rm{max}} )
   \end{eqnarray*}
where 
   \begin{equation*}
   C(N,d_0, p_{\rm{max}}, d_{\rm{max}} )
   =  p_{\rm{max}}^4 +  \frac{N+2}{N^2}p_{\rm{max}}^2 + \frac{d_0 p_{\rm{max}}}{N^3 d_{\rm{max}}^4} 
   \end{equation*}
as given in \eqref{eq:cn}. Thus, with \eqref{eq:etaboundinter}, \begin{eqnarray}\label{eq:etatermbound}
       \sum_{\alpha \in  {\mathcal I}} \EE | \xi_\alpha \eta_\alpha^2|
       &\le& \frac{8}{N} \left(  d_{\mathrm{max}}^2  p_{\mathrm{max}}^2\frac{2N_0}{\sqrt{N\sigma^2}}\right)^2   \frac{1}{\sqrt{N \sigma^2}} d_{\rm{max}}^2 \sqrt{C(N,d_0, p_{\rm{max}}, d_{\rm{max}} )}.
   \end{eqnarray}
In particular, with \eqref{eq:sigmalower}, this bound is of order $N^{-1}$ if $N_0 /N$ is bounded away from 0.

The last term to bound is 
$\sqrt{Var \left( \sum_{\alpha \in  {\mathcal I} } \xi_\alpha \eta_\alpha \right)}.$
Now, $$ \mathrm{Var} \left( \sum_{\alpha \in  {\mathcal I} } \xi_\alpha \eta_\alpha \right) = 
\mathrm{Var}\left( \sum_{\alpha \in  {\mathcal I} } \sum_{\beta \in A_\alpha }\xi_\alpha \xi_\beta  \right) 
=  \sum_{\alpha \in  {\mathcal I} } \sum_{\beta \in A_\alpha }
 \sum_{\gamma \in  {\mathcal I} } \sum_{\delta \in A_\gamma } \mathrm{Cov} (\xi_\alpha \xi_\beta , \xi_\gamma \xi_\delta).  
 $$
 The covariance of $\xi_\alpha \xi_\beta$ and $\xi_\gamma \xi_\delta$ is zero unless two of the 3-tuples of vertex pairs $\{\alpha, \beta\} $ and $\{\gamma, \delta\} $ share at least one vertex pair between them,  where $\beta \in A_\alpha$ and $\delta \in A_\gamma$. Thus, using the rough bound  \eqref{eq:xibound} to obtain
 \begin{eqnarray*}
   \mathrm{Cov} (\xi_\alpha \xi_\beta , \xi_\gamma \xi_\delta) 
   &\le& \frac{16}{N^6}   d_{\mathrm{max}}^8  p_{\mathrm{max}}^8  \frac{1}{( N \sigma^2)^2} 
 \end{eqnarray*}
gives 
 \begin{eqnarray*}
   \mathrm{Var} \left( \sum_{\alpha \in  {\mathcal I} } \xi_\alpha \eta_\alpha \right) 
   &\le& \frac{16 N (2N)^4 }{N^6}   d_{\mathrm{max}}^8  p_{\mathrm{max}}^8  \frac{1}{( N \sigma^2)^2} \\
   &=& \frac{144}{N}   d_{\mathrm{max}}^8  p_{\mathrm{max}}^8  \frac{1}{( N \sigma^2)^2}. 
 \end{eqnarray*}
Hence
\begin{eqnarray}\label{eq:vartermbound}
     \sqrt{\mathrm{Var} \left( \sum_{\alpha \in  {\mathcal I} } \xi_\alpha \eta_\alpha \right)  }
     &\le& \frac{16}{\sqrt{N}}   d_{\mathrm{max}}^4 p_{\mathrm{max}}^4  \frac{1}{N \sigma^2}. 
 \end{eqnarray}
 Combining \eqref{eq:etatermbound} and \eqref{eq:vartermbound} with \eqref{chen_theo} gives the assertion. 
\end{proof}

\begin{remark}\label{rem:power}
Remark \ref{rem:sssparse} shows that this result also covers sparse networks, as long as the expected number of edges increases with $n$. The remark also indicates that Theorem \ref{normal_approx} can 
be used for theoretical power guarantees when the alternative distribution is a model with independent edges also. Here we give the details. First, we recall that the Wasserstein distance can be used to bound the Kolmogorov distance; in particular, for a random variable $X$ with distribution $\mathcal{L}(X)$, and $\mathcal{N}(0,1)$ denoting the standard normal distribution, 
\begin{align}\label{eq:dtv}
    \sup_x | \PP ( X \le x) - \Phi (x) | \le 2 \sqrt{|| \mathcal{L}(X) - \mathcal{N}(0,1)||}, 
\end{align}
see for example, (C.2.6) in \cite{nourdin2012normal}. Theorem \ref{normal_approx} hence gives    
\begin{align} \label{power1}
    \PP_1 (W_0 < x \mbox{ or } W_0 > y) 
    & \ge 1 - \Phi \left( \frac{1}{\sigma_1} ( \sigma_0 y  + \mu_0 - \mu_1) \right) + \Phi \left( \frac{1}{\sigma_1} ( \sigma_0 x  + \mu_0 - \mu_1) \right) - 4 
\left( \frac{R_1}{\sqrt{N}} \right)^\frac12.
\end{align}  
If $\alpha$, the level of the test, is fixed, and if $x$ and $y$ are such that $\PP_0 (x < W_0 < y) = 1 - \alpha,$ where $\PP_0$ denotes the probability under $\mat{p}_0$, then by Theorem \ref{normal_approx} and the bound  \eqref{eq:dtv} we have  
\begin{align*}
  1 - \alpha &=  \PP_0 (x < W_0 < y) = \Phi(y) - \Phi (x) +  \frac{R_0}{\sqrt{N}}. 
\end{align*}
 Thus,  
$1 - \Phi(y) +  \Phi(x) = \alpha + \frac{R_0}{\sqrt{N}}$, and with \eqref{power1}, using these $x$ and $y$ for the test, 
\begin{align*}
\PP_1 ( \text{ reject }  H_0 )
=&\,   \PP_1 (W_0 < x \text{ or } W_0 > y) \\ 
  \ge &\, 1 - \Phi \left( \frac{1}{\sigma_1} ( \sigma_0 y  + \mu_0 - \mu_1) \right) + \Phi \left( \frac{1}{\sigma_1} ( \sigma_0 x  + \mu_0 - \mu_1) \right) - 4 
\left( \frac{R_1}{\sqrt{N}} \right)^\frac12\\ 
=&\, \Phi(y) - \Phi (x) - 1 + \alpha - \frac{R_0}{\sqrt{N}}\\&
+ 1 - \Phi \left( \frac{1}{\sigma_1} ( \sigma_0 y  + \mu_0 - \mu_1) \right) + \Phi \left( \frac{1}{\sigma_1} ( \sigma_0 x  + \mu_0 - \mu_1) \right) - 4 
\left( \frac{R_1}{\sqrt{N}} \right)^\frac12
\\
=& \alpha + 
\Phi(y) -  \Phi \left( \frac{1}{\sigma_1} ( \sigma_0 y  + \mu_0 - \mu_1) \right)
+ \Phi \left( \frac{1}{\sigma_1} ( \sigma_0 x  + \mu_0 - \mu_1) \right) - \Phi (x) \\
&  - \frac{R_0}{\sqrt{N}} - 4 
\left( \frac{R_1}{\sqrt{N}} \right)^\frac12.
\end{align*}
\end{remark}

\section{COMPUTATIONAL COMPLEXITY}\label{app:complex}

The computation time of IRG-gKSS$^2$ depends on the size $n$ of the network and the computational complexity of the underlying graph kernel, here denoted by $C(K)$. To compute a single value of IRG-gKSS$^2$, the algorithm computes the kernel matrix for a list of $\binom{n}{2}+1$ networks of size $n$ and uses it to calculate a value of the final test statistic in a way that requires some simple matrix operations, resulting in computational complexity of order $C(K) O(n^2)$ to compute a single value of the IRG-gKSS test statistic. A full test requires computing an IRG-gKSS$^2$ for the observed network and a set of IRG-gKSS$^2$ for all $M$ simulated networks under the null model, with an overall computational complexity of order $M C(K)O(n^2)$. For the estimated IRG-gKSS$^2$ with edge-resampling (using $B$ edge indicators instead of the full set of all possible edge indicators), the list of networks for which we need to compute the kernel matrix reduces to $C(K)O(MB)$. For a WL kernel, $C(K)= O(hm)$, with $h$ denoting the height of the subtree in the WL kernel and $m$ the number of edges of the network. 
    
Figure \ref{fig:execution_time} shows the execution time (in seconds) for ER graphs with $p=0.06$ and varying size, using the WL kernel with height $h=3$ on a system with an Intel Core i7-8700, 32 GB DDR4-2666 RAM, and Windows 11 Enterprise using \texttt{R} 4.5.0. The scaling with respect to size is almost quadratic, reflecting that  $O(n^2)$ network comparisons are carried out, whereas the scaling with respect to $M$, the number of simulated networks, is almost linear. The gain in computational complexity due to edge resampling is reflected in Table \ref{tab:comp_resamp} and Figure \ref{fig:execution_time_resamp}.
    
\begin{figure}[ht] \label{fig:comptime}
  \centering
  \includegraphics[width=\textwidth]{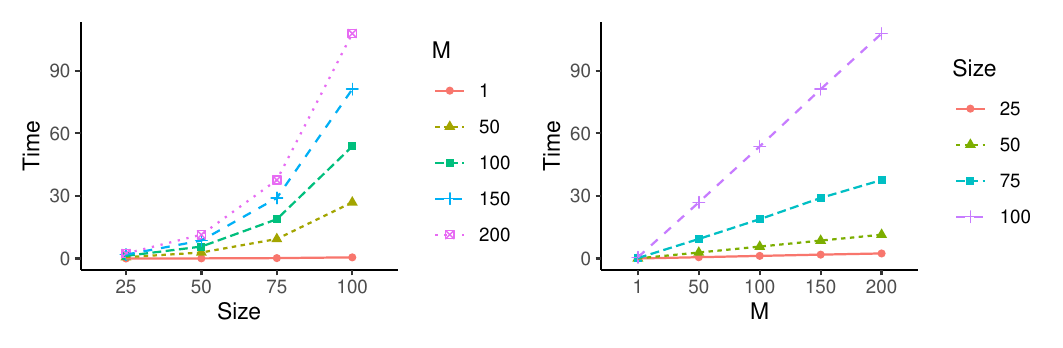}
  \vspace{.1in}
  \caption{Execution time {(minutes)} to calculate IRG-gKSS, using the WL kernel with $h=3$, for networks simulated from ER$(n, 0.06)$ models. Left: dependence on size $n$, for different numbers $M$ of simulated networks; right: dependence on $M$, for different sizes $n$.
  }  \label{fig:execution_time}
\end{figure}

\begin{figure}[ht] \label{fig:comptime_resamp}
  \centering
  \includegraphics[width=0.6\textwidth]{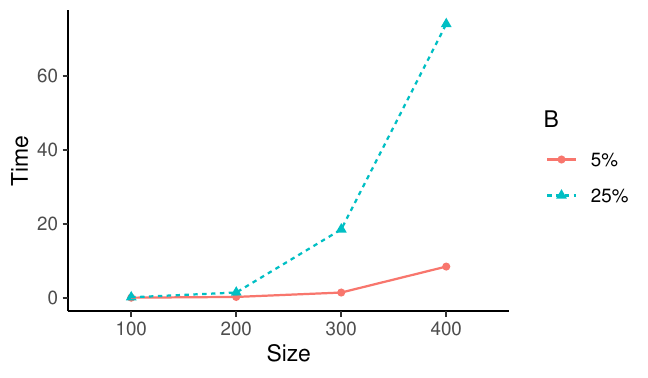}
  \vspace{.1in}
  \caption{Execution time (minutes) for a single computation of the re-sampled version IRG-gKSS,  obtained by edge resampling, using the WL kernel with $h=3$, for networks simulated from ER$(n, 0.06)$ models. The size $n$ is given on the $x$-axis, and the edge resampling proportion $B$ is given in the legend.
  }  \label{fig:execution_time_resamp}
\end{figure}
\begin{table}
    \centering
        \caption{Execution time (minutes) to calculate estimated IRG-gKSS using edge resampling}
    \begin{tabular}{lcccc}
    \textbf{n}     & \textbf{100} & \textbf{200} & \textbf{300} & \textbf{400} \\
    \hline \\
    IRG-gKSS\_5 & 0.03 & 0.21 & 1.16 & 8.98 \\
    IRG-gKSS\_25 & 0.13 & 1.32 & 17.55 & 73.98 \\
    \end{tabular}
    \label{tab:comp_resamp}
\end{table}
\begin{figure}
  \centering
  \includegraphics[width=0.7\textwidth]{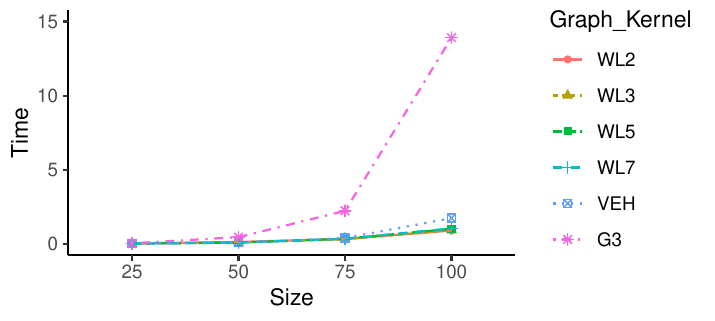}
  \vspace{.1in}
  \caption{Execution time (minutes) to calculate a single value of IRG-gKSS test statistic using different graph kernels. The kernels used are the Weisfeiler Lehman (WL) kernel with different subtree height $h$ named as WLh in the legend, the vertex edge histogram Gauss kernel (VEH) and the graphlet kernel with graphlets of size $k=3$. The networks used are simulated from ER$(n, 0.06)$ models of size $n$ given on the x-axis.
  }  \label{fig:execution_time_diff_kernels}
\end{figure}

\paragraph{Code details} 
For simulating networks, we use a similar implementation as in the {\texttt{GoodFitSBM}} (0.0.1) {\texttt{R}} package. For graph kernels in {\texttt{R}} we use the  {\texttt{graphkernel}} ($1.6.1$) package. The code to compute IRG-gKSS from the kernels extends the code of \cite{xu2021stein}. The code also uses the {\texttt{R}} libraries {\texttt{Matrix}} and {\texttt{HelpersMG}}. For the graphlet kernel, we also use {\texttt{grakel}} version {0.1.8} from Python. The code is available at the GitHub site \url{https://github.com/AnumFatima89/IRG-gKSS}.
   
\section{ADDITIONAL DETAILS AND RESULTS FOR SYNTHETIC EXPERIMENTS} \label{sec:add_syn_exp}

This section gives further details of the synthetic experiments from the main paper as well as some additional results. All tests are carried out at the 5\% level.

\subsection{Other Tests for Assessing the Fit to Inhomogeneous Random Graph Models} \label{sec:background}

In this paper, we use the following two testing methods to compare the performance of IRG-gKSS. 

\subsubsection{The Generalised Likelihood Ratio Test}
For an observed network $\mat{y}$ with $q(\mat{y}) = \mathbb P (\mat{Y} = \mat{y})$, the true likelihood of the observed network, a classical goodness-of-fit test for a specific IRG among a larger class of IRGs is a generalised likelihood ratio (GLR) test, based on the ratio of the likelihoods \eqref{eq:irg} for the null model and the alternative model. Let $\mat{Q}_0$ be the $n \times  n$ probability matrix with  entries $p_{0;u,v},$ for $1 \le u < v \le n$, specifying a fixed $\ERMM$ model. To test the null hypothesis $H_0: \mat{Y} \sim \ERMM(\mat{n},\mat{{Q}}_0)$ against a general IRG alternative model $\mat{p} \in \Omega$, the GLR test statistic is 
\begin{equation}
    \Lambda = \frac{\prod_{{1 \le u < v \le n}} \left(1-p_{0;u,v} \right)^{1 - x_{u,v}} \left(p_{0;u,v} \right)^{x_{u,v}}}{\sup_{\mat{p} \in \Omega}\left( \prod_{{1 \le s < r \le n}} \left(1-p_{s,r} \right)^{1 - x_{s,r}} \left(p_{s,r}\right)^{x_{s,r}} \right)}.\label{eq:glrt}
\end{equation} 
Then under the null hypothesis, by Wilks' Theorem, $-2\ln \Lambda$ is asymptotically $\chi^2_{k}$ distributed, where $k$ is the number of free parameters under $\Omega$. The GLR test here is a one-sided test;  the null hypothesis is rejected at level $\alpha$ if $-2\ln \Lambda > \chi^2_k(\alpha)$. In many test situations, this test is optimal, see \cite{zeitouni1992generalized}. However, it strongly relies on the alternative distribution being an IRG.

\subsubsection{The Spectral Gap Test by Lei}

The second test we consider for comparison is the spectral gap test proposed by \citet{Lei2016}. The test statistic is based on the largest singular value of a residual matrix obtained by subtracting the estimated block mean effect from the adjacency matrix. For large networks, this statistic asymptotically follows a Tracy-Widom distribution of index $1$. The corresponding goodness-of-fit test evaluates the null hypothesis of $K_0$ communities in a stochastic blockmodel (SBM) against the alternative that there are more than $K_0$ communities. \citet{Lei2016} further suggests using this test sequentially to estimate the number of communities in the network. We summari{s}e the test statistic and procedure below.

Let $A$ denote the observed adjacency matrix, and let $g \in \{1, \ldots, L\}^n$ denote the group membership vector, where $L$ is the number of communities in an SBM on $n$ nodes. Let $B \in [0,1]^{L \times L}$ denote the symmetric community-wise edge probability matrix, so that the edge probability between vertices $u$ and $v$ is
$$\PP(u \sim v) = P_{uv} = B_{g_u, g_v},$$ 
\citet{Lei2016} defines the residual matrix
$$\Tilde{A}^*_{uv} = \frac{A_{uv} - P_{uv}}{\sqrt{(n-1)P_{uv}(1-P_{uv})}},  \quad i \neq j,  \quad \text{and} \quad \Tilde{A}^*_{uv} = 0, i = j,$$
which subtracts the block mean effect and rescales the entries. The test assumes relatively balanced communities; in detail, for $K=K^{(n)}$ denoting the number of communities and community assignments $g_i^{(n)}, i=1, \ldots, n$,  it assumes that there is a $c_0>0$  such that $\min_{1 \le k \le K^{(n)}} \{i: g_i^{(n)} = k\} \ge c_0 n / K^{(n)} $   for
all $n$. Moreover, the community assignment is assumed to be asymptotically consistent, $K^{(n)} = O(n^{1/6 - \tau})$ for some $\tau >0$, and it is assumed that the edge probabilities are uniformly bounded away from 0 and 1. Under these conditions,
$$n^{2/3}[\lambda_1(\Tilde{A}^*) - 2] \xrightarrow{d} TW_1 \quad \quad \text{and} \quad \quad n^{2/3}[\lambda_n(\Tilde{A}^*) - 2] \xrightarrow{d} TW_1,$$
where $\lambda_1$ and $\lambda_n$ denote the largest and smallest eigenvalues of $\Tilde{A}^*$, respectively, and $TW_1$ is the Tracy-Widom distribution of index 1. When $g$ and $B$ are unknown, they are replaced by estimates $(\hat{g}, \hat{B})$. \citet{Lei2016} argues that, under the null hypothesis $K = K_0$, the estimates $(\hat{g}, \hat{B})$ yield similar asymptotic behaviour for the eigenvalues of the estimated residual matrix $\Tilde{A}$. Consequently, in \cite{Lei2016} the {\it spectral gap test}  statistic
$$T_{n,K_0} = n^{2/3}[\sigma_1(\Tilde{A}) -1]$$
is introduced, where $\sigma_1$ denotes the largest singular value of $\Tilde{A}$. The test rejects $H_0$ if $T_{n,K_0} \ge t(\alpha/2)$, where $t(\alpha/2)$ is the $(1-\alpha/2)$ quantile of the $TW_1$ distribution.

The spectral gap test is asymptotic; for a given realisation of one network, it is not obvious whether the assumptions are satisfied because there is no sequence of networks available. For smaller networks, \citet{Lei2016} recommends a bootstrap-corrected version, for which no theoretical guarantees are given. In this approach, the theoretical centring and scaling are replaced by empirical estimates obtained from $M$ simulated adjacency matrices generated under the SBM with $(\hat{g}, \hat{B})$. The bootstrap-corrected test statistic is
\begin{equation} \label{eq:bootspec}
    T_{n,K_0}^{(boot)} = \mu_{tw} + s_{tw}\max\left(\frac{\lambda_1(\Tilde{A}) - \hat{\mu}_1}{\hat{s}_1}, -\frac{\lambda_n(\Tilde{A}) - \hat{\mu}_n}{\hat{s}_n}\right),
\end{equation}
where $\mu_{TW}$ and $s_{TW}$ are the mean and standard deviation of the Tracy-Widom distribution of index 1, and $(\hat{\mu}_1, \hat{s}_1)$ and $(\hat{\mu}_n, \hat{s}_n)$ are the sample mean and standard deviation of the largest and smallest eigenvalues of the simulated matrices, respectively. In this paper, we use this bootstrap corrected version of Lei's spectral gap statistic; we denote it by ST. 

\subsection{Additional Experiment: A Non-Linear Preferential Attachment Model}
\label{ssec:nlpa}

As an additional experiment, in which the alternative distribution has dependent edges, we simulate networks on 50 vertices from a non-linear preferential attachment model (NLPA) with different parameter settings and test the ER$(n,p)$ model fit on these simulated networks. The algorithm to simulate a network under an NLPA$(m, \alpha)$ model starts with an initial network with $m_0 \ge m$ vertices. At each step, we add a vertex, and sample $m$ distinct vertices from the existing vertices in the network, sampling vertex $i$ with probability 
$$p_i = \frac{k_i^{\alpha}}{\sum_j k_j^{\alpha}}.$$
Here $\alpha >0$ is the {\it power} parameter of the model, $k_i$ is the degree of vertex $i$ at the current step, and the sum is over all pre-existing vertices $j$ (so that the denominator results in twice the current number of edges in the network). If vertex $i$ is sampled, then an edge between the new vertex and vertex $i$ is created. The process continues until we have a network with $n$ vertices. We use the `\textit{sample\_pa}' function from the \textit{igraph} library in \texttt{R} to simulate these networks. The resulting networks are different from ER networks through their construction; edge indicators are not independent, and in particular, for large networks, one would anticipate to see more vertices with large degrees. Figure \ref{fig:nplamodel} shows some instances of this model on $50$ vertices and different parameter settings.

\begin{figure}[ht]
    \centering
    \includegraphics[width=0.6\linewidth]{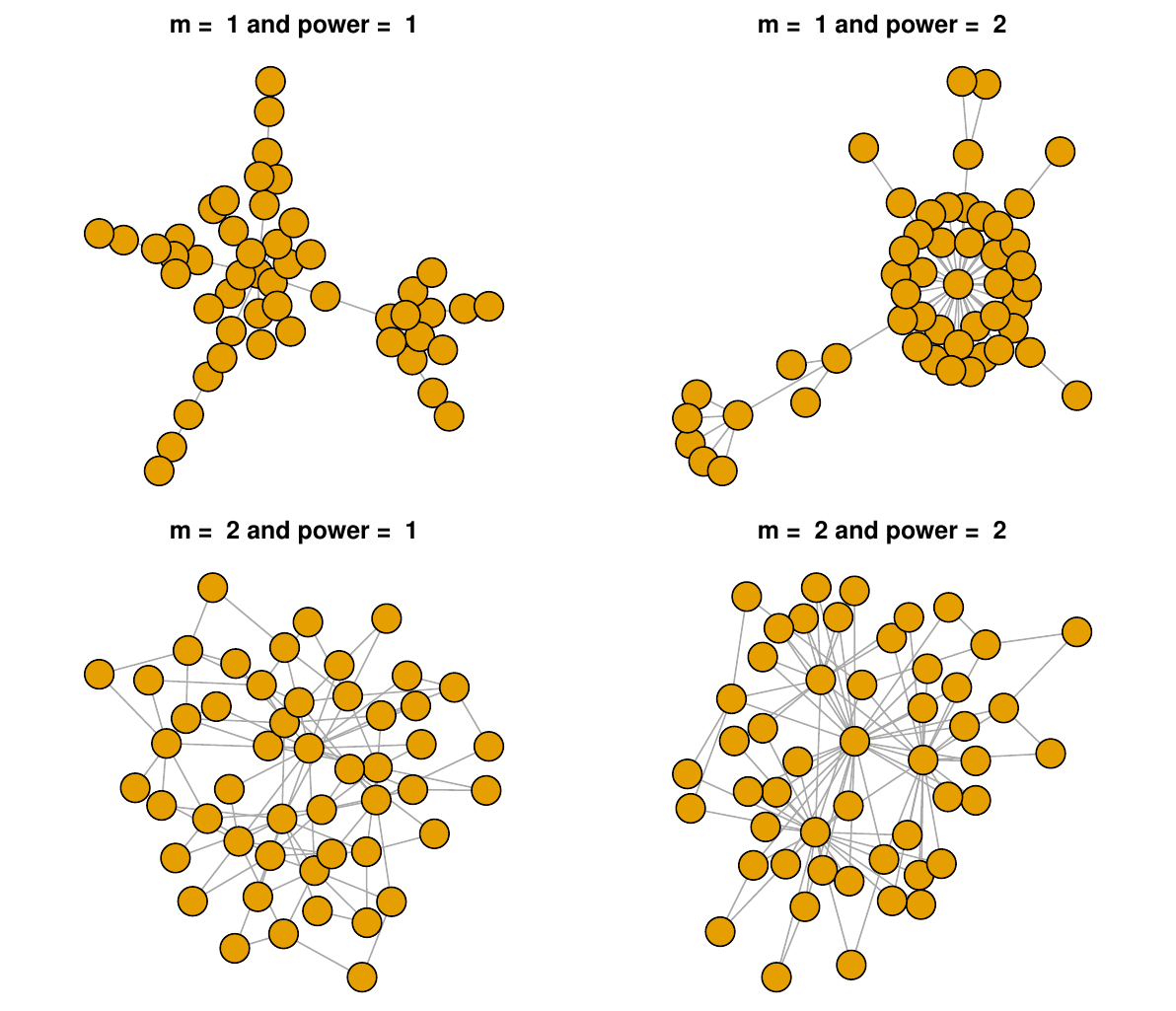}
    \vspace{.1in}
    \caption{Networks on $50$ vertices created from an NLPA model with different parameter settings.}
    \label{fig:nplamodel}
\end{figure}

The results in Table \ref{table:ER_PA} give the proportion of times that the IRG-gKSS test at level $5\%$ rejected the fit of an ER$(n,p)$ model to a network simulated from an NLPA$(m, \alpha)$ model in 50 repetitions. We used IRG-gKSS with three kernels, namely the WL kernel with $h=2$, the WL kernel with $h=3$ and the graphlet kernel with $k=3$,  and recorded their results separately in Table \ref{table:ER_PA}. 
We find that for large $n$, the difference to an ER model is clearly detected. For small $n$, only some instances are deemed by the test to be clearly different from an ER model. For the WL kernels, the difference is easier to detect for $m=1$ compared to $m=2$. In contrast, for the graphlet kernel, the difference is easier to detect for $m=2$ compared to $m=1$. In particular, none of the kernels has uniformly more power than the other kernels; also, the spectral test (ST) performs comparably. 

\begin{table}
  \caption{Testing the fit of the ER model on a network simulated from a preferential attachment model}
  \label{table:ER_PA}
  \centering 
  \begin{tabular}{lllcccc}
&&&&&&\\ 
\multicolumn{3}{l}{\textbf{MODEL}} & \multicolumn{4}{c}{\textbf{PROPORTION REJECTED}} \\
\cmidrule(lr){4-7}
 \multicolumn{3}{l}{\textbf{PARAMETERS}} &  \textbf{ WL with $h=2$} &\textbf{ WL with $h=3$} & \textbf{GRAPHLET with $k=3$} & \textbf{ST} \\
\hline \\
$n=20$ & $\alpha = 1$ & $m = 1$ & 0.42 & 0.46 & 0.14 & 0.24 \\

&& $m = 2$ & 0.06 & 0.02 & 0.42 & 0.14 \\

&$\alpha = 2$ & $m = 1$ & 1.00 & 1.00 & 0.94 & 0.92 \\

&& $m = 2$ & 0.80 & 0.90 & 1.00 & 0.90 \\
 \addlinespace
 
$n=50$ & $\alpha = 1$ & $m = 1$ & 0.94 & 0.78 & 0.04 & 0.64 \\

&& $m = 2$ & 0.38 & 0.18 & 0.82 & 0.80 \\

&$\alpha = 2$ & $m = 1$ & 1.00 & 1.00 & 1.00 & 1.00 \\

&& $m = 2$ & 1.00 & 1.00 & 1.00 & 1.00 \\
 \addlinespace
 
$n=100$ & $\alpha = 1$ & $m = 1$  & 1.00 & 0.98 & 0.06 & 0.92 \\

&& $m = 2$ & 0.98 & 0.96 &  0.96 & 1.00 \\

&$\alpha = 2$ & $m = 1$ & 1.00 & 1.00 & 1.00 & 1.00 \\ 

&& $m = 2$ & 1.00 & 1.00 & 1.00 & 1.00 \\
\end{tabular}
\end{table}

\subsection{Additional Experiment: Testing the Fit of an ERMM} \label{experi:1}

When testing within a parametric model family, a (generalised) likelihood ratio test (LRT)  is usually most powerful, see for example \cite{zeitouni1992generalized}. How much less powerful is an IRG-gKSS test? To assess this question, in this experiment, which complements those in the main text, we test the fit of an ERMM$(\mat{n}, \mat{Q}_0)$ to a network simulated from an ERMM$(\mat{n}, \mat{Q}_1)$ using Algorithms \ref{alg:algo1} and \ref{alg:algo2} and a likelihood ratio test (LRT). More specifically, we simulate networks of size $50$ from an ERMM$(\mat{n}, \mat{Q}_1)$ with two groups, where the within-block probabilities are identical but the between-block probabilities are different. We use $n=50$ and the following setting for $\mat{Q}_0$ and $\mat{Q}_1$;
\begin{align}\label{eq:matq01}
    \mat{Q}_{01} = \begin{pmatrix}
    0.20 & 0.01 \\
    0.01 & 0.20
\end{pmatrix} \quad \mbox{ and } \quad 
\mat{Q}_{02} = \begin{pmatrix}
    0.6 & 0.1 \\
    0.1 & 0.6
\end{pmatrix},
\end{align} 
and 
\begin{align} \label{eq:matq11}
    \mat{Q}_{11} = \begin{pmatrix}
    0.20 & Q_{12} \\
    Q_{12} & 0.20
\end{pmatrix} \quad \mbox{ and } \quad 
\mat{Q}_{12} = \begin{pmatrix}
    0.6 & Q_{12} \\
    Q_{12} & 0.6
\end{pmatrix},
\end{align}
with $Q_{12}$ a parameter. We use unbalanced (subscript ub) and balanced group splits  given by 
\begin{align} \label{eq:nnumbers}
    \mat{n}_{ub} = (10,40) \quad \text{and} \quad \mat{n} = (25,25).
\end{align}

We compare the IRG-gKSS test, without and with edge resampling, using a WL kernel with height $h=3$, to the likelihood ratio (LR) test, for which we specify the parameters in the alternative model as given in \eqref{eq:nnumbers} and \eqref{eq:matq11}.

 \begin{figure}
         \centering
    \includegraphics[width=\columnwidth]{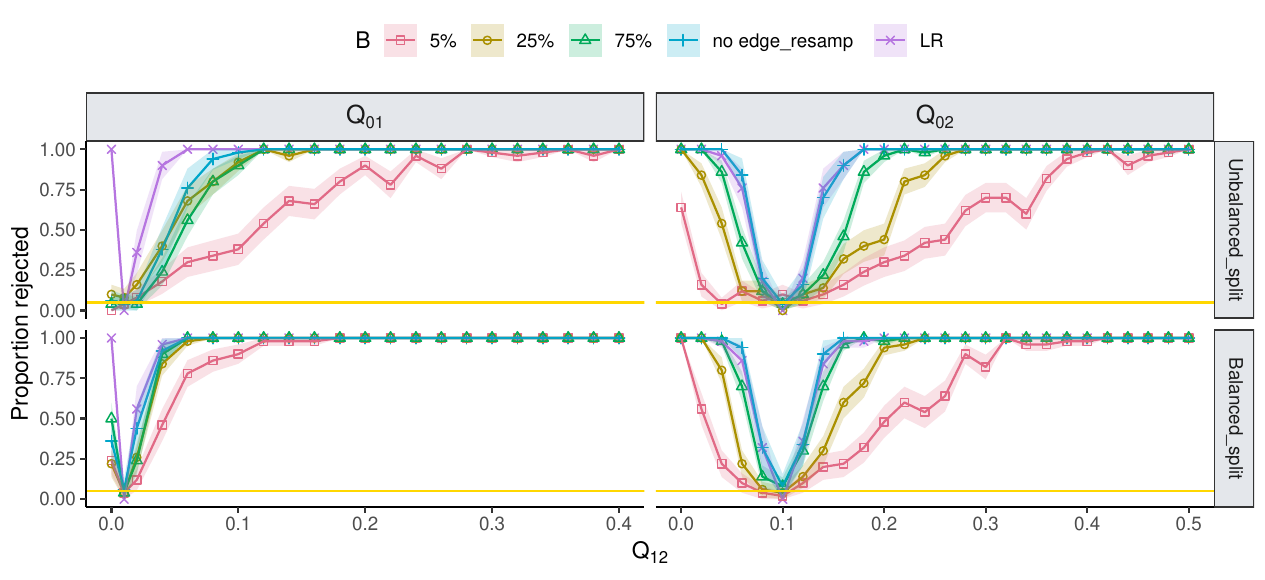} 
    \vspace{.1in}
         \caption{Power of the test at level $5\%$ for  the fit of an $\text{ERMM}(\mat{n}, \mat{Q})$, with $\mat{n}$ and $\mat{Q}$ from \eqref{eq:nnumbers} and \eqref{eq:matq01}, to $50$ networks simulated from an $\text{ERMM}(\mat{n}, \mat{Q})$ model, with $\mat{n}$ and $\mat{Q}$ from \eqref{eq:nnumbers} and \eqref{eq:matq11}, using a WL kernel with ${h = 3}$. The yellow line indicates the significance level (0.05).
         }
         \label{fig:experiment_ERMM}
     \end{figure}

Figure \ref{fig:experiment_ERMM} shows the proportion of tests that reject the null model at level $5\%$. For the test with edge-resampling using Algorithm \ref{alg:algo2}, we repeat the test for different values of $B$ in the test statistic \eqref{t_stat_s_net_resamp}. In Figure \ref{fig:experiment_ERMM}, the values of $B$ are reported as a proportion of $N = \binom{n}{2}$, the number of possible edges in the network of size $n$. 

From Figure \ref{fig:experiment_ERMM}, we observe that the power of IRG-gKSS is lower than that of the LR test, which is supposed to perform best in this experimental setting, but not by a large margin. Also, with edge-resampling, IRG-gKSS performs reasonably well even when sampling only $25\%$ of the vertex pairs. Moreover, in the sparse network setting, using as few as $5\%$ of the vertex pairs still gives a test with some power. 

\subsection{Additional Details: Planted Hubs}
\label{ssec:planted}

As mentioned in Section \ref{sec:plantedhubs} of the main text, the planted hubs experiment is based on two choices, the intended number $R$ of hubs, and the intended excess degree parameter $k$, and uses different network models. We denote by $s_d=s_d(G)$ the standard deviation of the vector of degrees in a network $G$. We simulate a series of networks $G$ from an ERMM$(\mat{n}, \mat{Q})$ model. We then attempt to plant $R$ hubs in the these networks which increase the largest degree by $k  s_d$ without disturbing the edge density of the original network, using Algorithms \ref{alg:hubs} and \ref{alg:hub}, and test the fit of the ERMM$(\mat{n}, \mat{Q})$ model on these networks with planted hubs.

\begin{algorithm}[ht]
   \caption{Iterative Planted Hubs Construction}
   \label{alg:hubs}
\begin{algorithmic}
\State {\bfseries Input:} \, $G$:  Input graph with $n$ vertices \\
   \qquad  \quad \quad $R$: Number of hub planting attempts\\
   \qquad  \quad \quad $k$: Scaling factor for maximum degree increase \\
   \qquad  \quad \quad $C$: A numeric vector of vertex group labels
   
   \State {\bfseries Output:} A modified graph $G_{hubs}$ with up to $R$ planted hubs

   \State {\bfseries Initialize:} \\
    \quad $G_{hubs} = G$ (copy of the original graph) \\
    \quad Get {$\mat{d}=(d_0, \ldots, d_{n-1})$}, the vector with entry $d_\ell$ the {degree of vertex $\ell$} in $G$ \\
    \quad Get $d_m = \max(\mat{d})$, 
    the  maximal degree
    \\
    \quad Set iteration counter $i = 1$

    \While{$d_m != \text{NA}$ and  $i \le R$}
    \State Apply Algorithm \ref{alg:hub} (planting a hub) to $G_h$ using, $k$, and $C$ from input and current $d_m$ 
    \State Update $G_{hubs}$ with the modified graph 
    \State Increment $i = i + 1$ 
    \State Recompute the updated degree vector  $\mat{d}$ from the updated graph $G_{hubs}$
    \State Sort the entries of the updated $\mat{d}$ in descending order 
    \State Set $d_m$ to equal the $i$-th largest unique degree 
   \EndWhile
   \State \textbf{return} The modified graph $G_{hubs}$
\end{algorithmic}
\end{algorithm}

\begin{algorithm}[ht!]
   \caption{Planting a hub in a simulated network}
   \label{alg:hub}
\begin{algorithmic}
    \State {\bfseries Input:} $G$: Input graph  \\
   \qquad \quad \, $d_m$: Initial degree of the hub vertex\\
   \qquad \quad \, $k$: Scaling factor for maximum degree increase  \\
   \qquad \quad \, $C$: A numeric vector of vertex group labels 
   
   \State {\bfseries Output:} A modified graph $G_{hub}$ with a "planted hub" (or original $G$ if no hub planting is possible).

   \State {\bfseries Compute vertex degrees and standard deviation:}\\
    \quad Get $\mat{d}$, the vector with entry $d_\ell$ the {degree of vertex $\ell$} in $G$ 
    \\
    \quad Compute $s_d$, the standard deviation of the vector $\mat{d}$ 
    
    \State {\bfseries Identify hub candidate:} \\
    \quad Set $u_{hub}$, the first vertex in the vertex list of graph $G$ with degree exactly $d_m$
    
    \State {\bfseries Set the number of new neighbours for $u_{hub}$:} \\
    \quad Get $d^* = \max(1, \lceil k * s_d \rceil)$ \\
    \quad Set $n_{new\_nei} = d^* - d_{u_{hub}}$, where $d_{u_{hub}}$ is the degree of $u_{hub}$  

    \State {\bfseries Find the potential additional neighbour set $target\_V$ for $u_{hub}$:} \\
    \quad $target\_V = V(G)$ excluding the current neighbours of $u_{hub}$ and $u_{hub}$ itself \\
    \quad \textbf{if} $d^* = d_{u_{hub}}$ or $target\_V$ is empty \textbf{then} return $G$ (in this case no hub planting is possible)
    \State {\bfseries Filter $target\_V$ by a group membership constraint:} \\
    \quad For each $v$ in $target\_V$, keep $v$ if  at least one neighbour of $v$ has the group membership of $u_{hub}$ \\
    \quad Store these in $filtered\_targets$ \\
    \quad \textbf{if} $filtered\_targets$ is empty \textbf{then} return $G$ (in this case no hub planting is possible)
    \State {\bfseries Sample potential neighbours to add:}
    \If{$length(filtered\_targets) \le n_{new\_nei}$} 
     set $pot\_nei = filtered\_targets$
    \Else \, Randomly sample $n_{new\_nei}$ vertices from $filtered\_targets$ and store in $pot\_nei$
    \EndIf

    \State {\bfseries Prepare for edge replacement:} \\
    \quad Initialize empty vectors $to\_del$ and $new\_nei$ 

    \For{$v$ in $pot\_nei$} \\
    Get neighbors of $v$ of the same type as $u_{hub}$ and store in $N_v$
    \If{$N_v$ is empty} skip to next $v$ 
    \Else \, Sample a neighbour $del\_vertex$ from $N_v$ to disconnect with $v$  \\
    \qquad \quad \quad Concatenate the pair $(v, del\_vertex)$ with $to\_del$ vector \\
    \qquad \quad \quad Check whether the resulting vector ($to\_del$), does not contain duplicate pairs (edges) 
    \\
    \qquad \quad \quad If it does: resample the neighbour to delete until the resulting vector does not contain  \\
    \qquad \quad \quad duplicate pairs; if no such neighbour can be found: drop $v$ and move to the next candidate \\
    \qquad \quad \quad  in $pot\_nei$\\
    \qquad \quad \quad Add $v$ to $new\_nei$
    \EndIf
    \EndFor

    \State {\bfseries Post-processing edge list:} \\
    \quad \textbf{if} $to\_del$ is empty or does not match expected size ($2 * length(new\_nei)$) \textbf{then} return $G$
    \State {\bfseries Apply graph edits:} \\
    \quad {Delete the edges in $to\_del$ from $G$}  \\
    \quad Add an edge between $u_{hub}$ and each vertex in $new\_nei$ 
    \State {\textbf{return} The modified graph $G_{hub}$}
\end{algorithmic}
\end{algorithm}

For the simulations, we choose $n=30$. In Section \ref{sec:plantedhubs}, results are given for a two-group ERMM with even split between the groups and probability matrix 
\begin{align} \label{eq:matQ}
    {\mat{Q}}
    = \begin{pmatrix}
    0.2{9} & 0.01 \\
    0.01 & 0.2{2}
\end{pmatrix}
\end{align} 
resulting in the within-group subgraphs being fairly dense, and the between-group subgraphs being fairly sparse. In the main text, we showed results from the unbalanced split $\mat{n}_{ub} = ({8},{22})$; here, we show results for the equal split, given by
\begin{align}
 \mat{n}
= ({15},{15}).  \label{eq:nnumber} 
\end{align}

In each of these plots, we also report the results for ST and for the GLR test. The conclusion is as in the main paper: The GLR test is not able to detect the planted structure; ST fails to detect it in the majority of the runs.

\begin{figure}
  \centering
  \includegraphics[width=\textwidth]{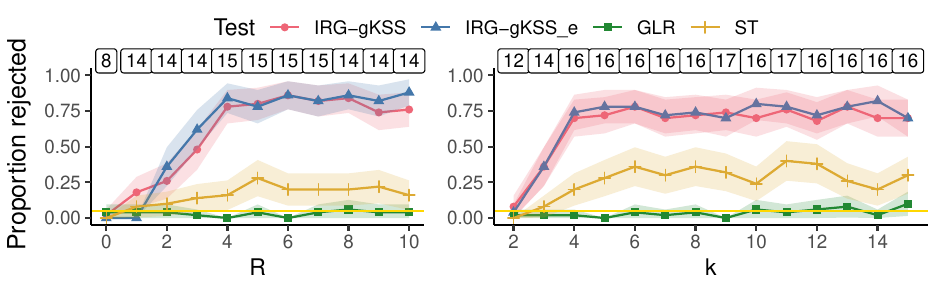}
  \vspace{.1in}
  \caption{Power of the tests for the fit of an ERMM$(\mat{n}, \mat{Q})$ model to a network of size 30 with planted hubs, with $\mat{n}$ from \eqref{eq:nnumber} and $\mat{Q}$  from \eqref{eq:matQ}. The numbers in the boxes at the top of the plot are the average maximum degrees observed in $m=50$ repetitions of the test for each setting on the $x$-axis. For the figure on the left side, we fix $k=4$ and let $R$ vary, and in the right side figure, we fix $R=3$ and let $k$ vary. For this experiment, we use a WL kernel with $h = 3$.}
  \label{fig:plantedhub_b2}
\end{figure}

\subsection{Additional Details: Planted Cliques}\label{ssec:clique}

In the next experiments, we simulate a network from an ER$(n,p)$ or an ERMM$(\mat{n}, \mat{Q})$ model, move its existing edges to plant a clique of size $K$, if possible, keeping the number of edges of each type as of the original simulated network, and test the fit of the ER$(n,p)$ or the ERMM$(\mat{n}, \mat{Q})$ to the transformed network with a planted clique. If planting a clique is not possible, because the network contains fewer than ${K \choose 2}$ edges, we discard that network and simulate a new network to plant the clique; we also record the frequency with which this happens. By planting a clique, we change the structure of the network while keeping the edge density of each edge type constant. Thus, the null model is misspecified for the resulting network with a planted clique. For ER$(n,p)$, in the figures in this subsection, the numbers inside the boxes in the plots are the total number of networks in $m=50$ samples from an ER$(n,p)$ that had fewer than $\binom{K}{2}$ edges and were, therefore, not used in the experiment. We repeat the pure hypothesis test for 50 simulated networks with a planted clique of size $K$, while using the set of $M=200$ IRG-gKSS values calculated from $M$ networks simulated from the null model  ERMM$(\mat{n}, \mat{Q})$. 

\begin{algorithm}
   \caption{Planting a clique in a simulated network}
   \label{alg:clique}
\begin{algorithmic}
    \State {\bfseries Input:} $G$: Input graph \\
   \qquad \quad \, $K$: Size of the clique to add \\
   \qquad \quad \, $C$: A numeric vector of vertex group labels \\
   \qquad \quad \, max\_rep: Maximum number of repetitions for the search loop
   
   \State {\bfseries Output:} A modified graph $G_{K}$ with a ``planted clique" of size $K$, original $G$ if the selected vertices already form a clique or an empty graph or if planting a clique of size $K$ is not possible

   \State {\bfseries Get lists of edges and non-edges:}\\
    \quad Get \emph{edges}, the edge list of graph $G$\\
    \quad Get \emph{edges\_type}, a vector assigning type labels to all edges in \emph{edges} \\
    \quad Get \emph{non\_edges}, the edge list of the complement graph of $G$\\
    \quad Get \emph{edges\_type}, a vector assigning type labels to all edges in \emph{non\_edges} \\
    \quad Get \emph{edge\_labels} a set of all unique edge labels
    
    \State {\bfseries Create containers for edges/non\_edges  with respect to their type:} 
    \quad \For{$i$ in \emph{edge\_labels}} \\
      \quad\emph{edge\_type\_i} = edges[WHERE edges\_type == $i$] \\
      \quad \emph{non\_edge\_type\_i} = edges[WHERE non\_edges\_type == $i$]
      \quad \EndFor
    \State {\bfseries Iteratively attempt to add a clique:} \\
    \quad $counter = 0$
    \Repeat \\
    \quad $counter += 1$ 
    \State{\bfseries Sample K nodes and form all possible edges (clique):} \\
    \quad Get  a random sample \emph{smpl} of $K$ vertices from $G$ \\
    \quad Create  a list \emph{new\_edges} of all 2-node combinations from \emph{smpl} \\
    \quad Sort each pair in \emph{new\_edges} 
    \State{\bfseries Identify which of these new edges are non-edges in $G$:} \\
    \quad Get  the  list of pairs \emph{edges\_to\_add} from \emph{new\_edges} that do  not already hold an edge \\
    \quad Get  the  vector \emph{add\_edge\_type} containing labels of edges from \emph{edges\_to\_add}

    \If{number of $edges\_to\_add == 0$}
            Break the loop and \textbf{return} $G$, so that, $G_{K} = G$
    \EndIf
    \State{\bfseries Count new edges by type:} \\
    \quad Get the frequency table \emph{new\_type}  of \emph{add\_edge\_type} 
    \State{\bfseries Identify candidate edges to delete (those not in new\_edges):} \\
    \quad Get  the  list of edges \emph{poten\_del} to delete, which are not in \emph{new\_edges} \\
    \quad Get the  vector \emph{to\_del\_edge\_type}  of corresponding types of \emph{poten\_del}
    \State{\bfseries Check if enough edges of each type exist to delete:}\\
    \quad Check if there are at least as many edges  of each type available for deletion in \emph{poten\_del} as there are new edges to add in \emph{edges\_to\_add} \\
    \quad Break the loop if there are enough edges to delete or
    \If{$counter > max\_rep$} 
        Break the loop \\
        \quad Print the message " Network too sparse. Not enough edges to delete." \\
        \quad \textbf{return} An empty graph and the counter
        \EndIf
    \Until{One of the Breaks are executed}
    \State{\bfseries Sample edges to delete by type:} \\
    \quad Get \emph{edges\_to\_del} randomly selecting edges, of each type, from \emph{poten\_del}, with number of selected edges corresponding to the number of edges added of each type
    \State{\bfseries Update the graph:} \\
    \quad Delete \emph{edges\_to\_del} edges from $G$ and add \emph{edges\_to\_add} edges to $G$ and save the new graph in $G_K$
    \State {\textbf{return} The modified graph $G_K$}
\end{algorithmic}
\end{algorithm}

In addition to Figure \ref{fig:planted_clique_main} in the main paper, here we give results for more parameter settings. Figures \ref{fig:planted_clique_SI2} and \ref{fig:planted_clique_SI} illustrate the power of the edge resampling version compared to the version without edge resampling of the IRG-gKSS test to test the fit of ER$(50,p)$ models using the WL kernel with $h=2$, for different $p$. In Figure \ref{fig:planted_clique_SI2}, ST and the GLR test are added for comparison; the behaviour is very similar to that in Figure \ref{fig:planted_clique_main} in the main text. 

These figures illustrate the power of the IRG-gKSS test to detect discrepancies from the null model in the presence of a clique in the network, with the minimum size of the clique detected by the test depending on the edge density of the network. As the expected degree of a vertex in an ER$(n,p)$ network is $(n-1)p$, we see that cliques can be detected even if they are only slightly larger than the expected degree. In our experiments, the GLR test does not detect the difference.
 
We repeat this experiment for ERMMs with parameters given in \eqref{eq:matQ} and \eqref{eq:nnumber}.
The results in Figure \ref{fig:planted_clique_ERMM} illustrate the power of the IRG-gKSS test to detect the presence of a clique in the network, depending on the size of the clique, the density of the network, and the type of split, as well as the GLR test and ST; these tests are provided with the true group memberships of the vertices. While for the networks generated with a single group and a planted clique, ST performs better than the IRG-gKSS test, for the networks simulated from an ERMM, either with equal or unbalanced numbers of vertices in two groups, the IRG-gKSS test performs better than ST. The GLR test struggles to detect the signal. 

In a case when ST needs to estimate the group memberships before testing the hypothesis of the number of groups in a network with a planted clique, it estimates the clique as a third group, which is a complete subgraph with unit within-group edge probabilities, and hence the standardisation in ST fails, resulting in NaN results. Hence, ST is not able to detect planted cliques when there is more than one group in the original network, unless it is provided with the true group memberships of the vertices. Therefore, in our experiments, ST is given the true group memberships.

\begin{figure}
         \centering
    \includegraphics[width=0.87\textwidth]{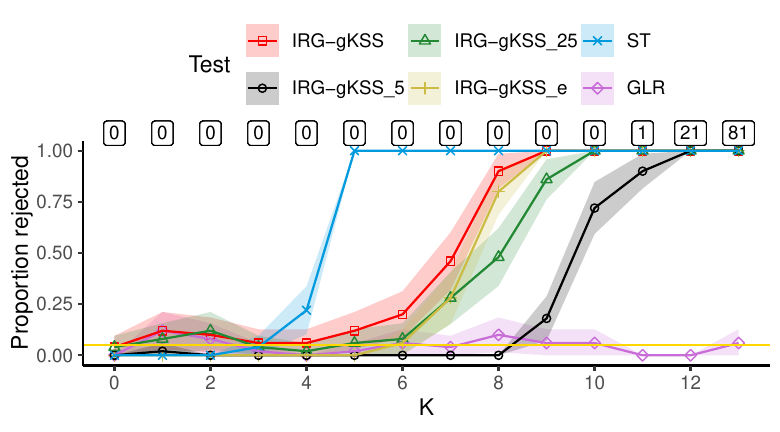} 
        \vspace{.1in}
         \caption{Power of the tests for the fit of an ER$(50, 0.06)$ to the network of size 50 with a planted clique of size $K$ using different proportions of edge resampling as well as the no edge resampling version of the IRG-gKSS test statistic. The numbers in the boxes are the total number of networks sampled from the ER$(50, 0.06)$ model that had fewer than $\binom{K}{2}$ edges and were, therefore, not used in the experiment. The number is the total of $m=50$ repetitions of the test. We use a WL kernel with $h = 2$.}
         \label{fig:planted_clique_SI2}
     \end{figure}
     
    \begin{figure}
         \centering
        \includegraphics[width=\textwidth]{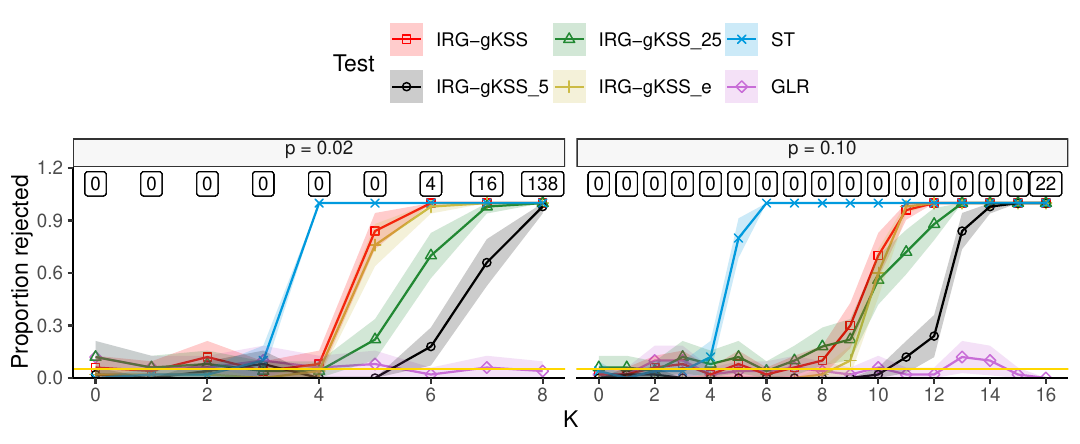} 
        \vspace{.1in}
         \caption{Power of the tests for the fit of an ER$(50,p)$ to the network of size 50 with a planted clique of size $K$ using different proportions of edge resampling as well as the no edge resampling version of the IRG-gKSS test statistic. The numbers in the boxes are the total number of networks sampled from the ER$(50,p)$ model that had the number of edges less than $\binom{K}{2}$ to plant a clique of size $K$ and were, therefore, not used in the experiment. The number is the total of $m = 50$ repetitions of the test.
         }
         \label{fig:planted_clique_SI}
     \end{figure}

    \begin{figure}
         \centering \includegraphics[width=\textwidth]{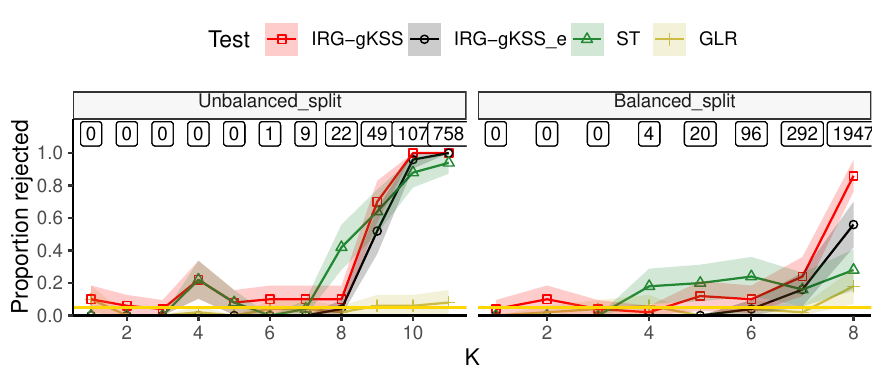} 
         \vspace{.1in}
         \caption{Power of the tests for the fit to the network with a planted clique; left: of an ERMM$((8,22), \mat{Q})$; right: of an ERMM$((15, 15), \mat{Q})$, with $\mat{Q}$ from \eqref{eq:matQ}. For this experiment, we use a WL kernel with $h = 3$.
         }
         \label{fig:planted_clique_ERMM}
     \end{figure}

\subsection{Kernel Choice} \label{app:kernel}

To calculate IRG-gKSS we use \eqref{t_stat_s_net}, in \eqref{t_stat_s_net}, $h_{\mat{x}}(s, s')$ is the inner product of the Stein operator \eqref{GD_Stein_Eq} for the vertex pairs $s$ and $s'$, calculated using the underlying kernel $K$ and inner product $\langle \cdot, \cdot\rangle$ of the associated RKHS. The test statistic is the average of $N^2$ values of $h_{\mat{x}}(s, s')$ calculated for all pairs of $s$ and $s'$. To calculate $h_{\mat{x}}(s, s')$, we create a list of $N+1$ networks, the observed network and others by flipping one edge each time and then calculate the graph kernel matrix for this list. Hence, the underlying graph kernel gives the type of signal from the networks which we use to calculate $h_{\mat{x}}(s, s')$ scores. In this section, we describe the graph kernels used in this paper and discuss their effect on the power of IRG-gKSS. 

\paragraph{Weisfeiler-Lehman kernels} For many of our experiments, we use a Weisfeiler-Lehman (WL) kernel, which is based on the Weisfeiler-Lehman graph isomorphism test. It assigns an attribute to each vertex. In our experiments,  we use group membership as an initial vertex attribute for WL kernels; for ER graphs, all vertices are labelled as group 1. Iteratively, for each vertex, the attributes of its immediate neighbours are aggregated to compute a new attribute for the target vertex. This relabelling is carried out for $h$ iterations. For each graph $G$, after $h$ iterations the algorithm has created $h$ related graphs $G=G_0, G_1, \ldots, G_h.$ For two graphs $G $ and $G'$, with sequences $G=G_0, G_1, \ldots, G_h$ and $G'=G'_0, G'_1, \ldots, G'_h,$   the WL  kernel is given by 
$$ K_{WL}(G,G') = \sum_{\ell=0}^h k(G_\ell, G'_\ell)$$
where $k(G_\ell, G'_\ell)$ is the ``subtree'' kernel which counts the common labels in the two graphs $G_\ell$ and $G'_\ell$; in this instance, $h$ is also referred to as the {\emph{(subtree) height}}. 
This way, the WL kernel compares the local structure in the two networks. Each step of the WL iterations incorporates information about the next order neighbourhood of the vertex into its new label. Since WL kernels compare the local structure in the two networks, they prove a reasonable choice to be used in IRG-gKSS for detecting structural anomalies in the network. 

The $K_{WL} $ values naturally increase with increasing number of iterations $h$. We observe empirically that the choice of $h$ affects the power of the IRG-gKSS test. As will be detailed in Section \ref{ssec:planted} and Section \ref{ssec:clique}, we find that the WL kernel with $h=2$ performs better in detecting the presence of cliques (an anomaly in the local structure), while the WL kernel with $h=3$ performs better in detecting hubs in the network (a more global structure).  

While a WL kernel does not directly depend on graph density, graph density affects the label refinement process, which in turn influences the kernel value. We observe anecdotally that in dense graphs, the neighbourhood structure can change substantially with each WL iteration. In sparse graphs, in contrast, the labels which the WL algorithm assigns often stabilise quickly, so that there will then be fewer changes in the WL refinement process. Thus, denser graphs often lead to higher feature complexity and better discrimination, while sparse graphs may suffer from label similarity, reducing kernel effectiveness.    

\paragraph{Graphlet kernels} The graphlet kernel is based on counts of subgraph patterns ({\emph graphlets}) of a fixed size in a graph. Motifs based on graphlets are building blocks of complex networks. Thus, they are a plausible choice for detecting structural anomalies. The time required to compute the graphlet kernel scales exponentially with the size of the considered graphlets and the edge density of the network, making them computationally more expensive than the WL kernel. Hence, we use a graphlet kernel to illustrate its effectiveness in some of the experiments and use the WL kernel for most of our experiments due to its computational efficiency.

\paragraph{Vertex-histogram kernels} The vertex-edge histogram kernel counts how many times each label (vertex and edge) appears in both networks and how many of those common labels are shared. It thus only signals the structural similarities of the vertex and edge labels of the networks compared, making it fast to compute. In our setting, however, we assume that the number of vertices and labels are known a priori. Thus, the only signal we obtain from using the vertex-edge histogram kernel is the density of edges of each type. 

To assess the effect of kernel choice in IRG-gKSS, we carry out the planted anomaly experiments using WL kernels with heights $h = 2, 3, 5,$ and $ 7$. We also use the graphlet kernel with graphlet size 3 (G3) and the Gaussian vertex-edge histogram kernel (VEH).

Figure \ref{fig:planted_hubs_diff_WL_h} depicts the results. For the planted hubs in ERMM networks with an unbalanced split $(8,22)$ into two groups, the IRG-gKSS tests computed using the graphlet kernel have the highest power. The IRG-gKSS tests computed using the WL kernel with $h=3$ have slightly higher power than the IRG-gKSS tests using the WL kernel with $h=2, 5$, and $7$. The IRG-gKSS tests computed using the VEH kernel have almost no power to detect model misspecifications. 

One could think of a hub as being a global structure, for which a larger $h$ in a WL kernel could be more appropriate. We also conclude that this example shows that there is no clear choice about which $h$ to employ; the choice of $h$ should be informed by the nature of the alternative hypothesis.

To further explore the effect of the choice of $h$ on the IRG-gKSS test, we simulate a network from an ERMM$(\mat{n}, \mat{Q})$ model, plant a hub using Algorithm \ref{alg:hubs} with $R=2$ and $k=3$, and test the fit of the ERMM$(\mat{n}=(15,15), \mat{Q})$ using the IRG-gKSS test. The original simulated network and one with the hubs are plotted in Figure \ref{fig:example_hubs_20}. The IRG-gKSS test rejects the fit when using the WL kernel with $h=3$ but does not reject it when using $h=2$. The heatmap plots of the matrix representing $h(s, s')$, for the original network simulated from the ERMM$(\mat{n}, \mat{Q})$ model and the corresponding network with planted hubs,  are given in Figure  \ref{fig:example_hubs_heatmap}. The signal is more pronounced for the WL kernel with $h=3$ than for $h=2$.

We repeat this experiment for the planted clique problem and use the IRG-gKSS test to test the fit of the ER$(30, 0.06)$ model on networks with a planted clique. Figure \ref{fig:planted_clique_diff_WL_h} shows the results. The IRG-gKSS test using the WL kernel with $h=2$ has the highest power to detect larger cliques in this experiment compared to the other graph kernels. For smaller cliques, the WL kernel with $h=2$ and the graphlet kernel with $k=3$ perform comparably. We further observe that, in this setting, increasing the height $h$ in the WL kernel leads the IRG-gKSS test to detect only larger cliques. Figure \ref{fig:execution_time_diff_kernels} shows that graphlet kernels are computationally more expensive than the WL kernel. Moreover, for the graphlet kernel, we do not observe a substantial improvement in power when using $k=5$, despite the significantly higher computational cost.

We then repeat the same experiment for the planted clique problem to zoom into the $h(s, s')$ level. For this experiment, we simulate a network from the ER$(30, 0.06)$ model and plant a clique of size $6$; an example is shown in Figure \ref{fig:example_clique}. The IRG-gKSS test rejects the fit of the ER$(30, 0.06)$ model for the network with a planted clique when using the WL kernel with $h=2$, but it does not reject the fit when using the WL kernel with $h = 3$. The heatmap plots for the $h(s, s')$ matrix in \eqref{t_stat_s_net}, in Figure \ref{fig:example_clique_heatmap}, illustrate how this discrepancy could arise. For $h=2$, the difference between the matrices is more pronounced, as this setting captures the direct neighbourhood of vertices and their immediate neighbours. Larger values of $h$, however, attenuate these differences by incorporating similarities in extended neighbourhoods. Notably, cliques and non-cliques become harder to distinguish at higher $h$ because their multi-hop neighbourhood structures become more similar, obscuring local distinctions.

    \begin{figure}
         \centering
         \includegraphics[width=\textwidth]{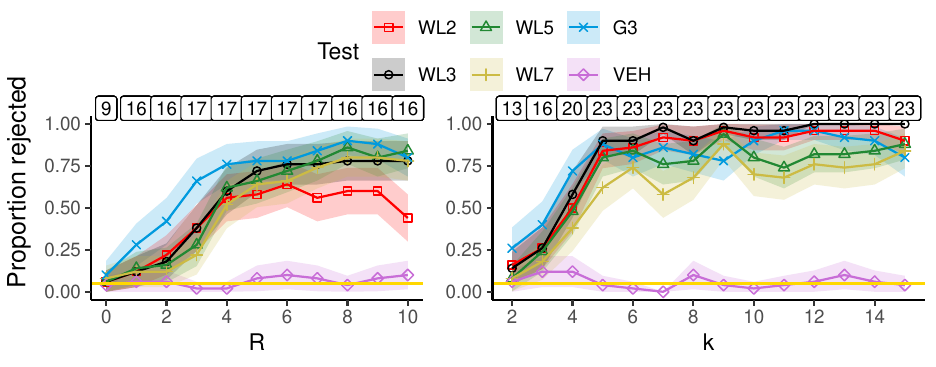} 
         \vspace{.1in}
         \caption{Power of the tests for the fit of an ERMM$(\mat{n}_{ub}, \mat{Q})$  model to the network of size 30 with planted hubs, with $\mat{n}_{ub}=(8,22)$ and $\mat{Q}$  from \eqref{eq:matQ}. The numbers in the boxes at the top of the plot are the average maximum degrees observed in $m = 50$ repetitions of the test for each setting on the x-axis. For the figure, we fix $k=$ and let $R$ vary. We compare the results using WL kernels with different subtree heights $h$, the graphlet kernel with graphlet size 3 and a vertex-edge histogram kernel (VEH).}
    \label{fig:planted_hubs_diff_WL_h}
    \end{figure}

    \begin{figure}
         \centering        \includegraphics[width=0.6\textwidth]{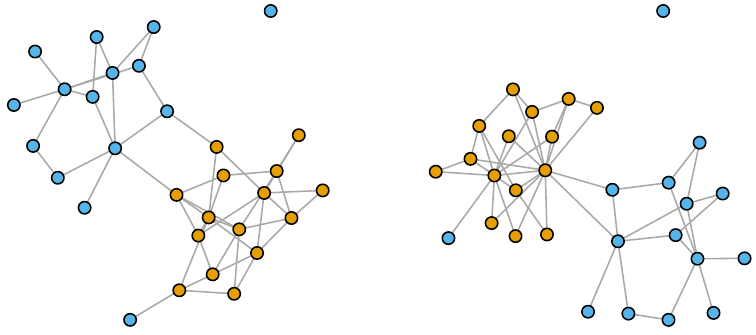} 
         \vspace{.1in}
         \caption{Simulated networks simulated from left: an ERMM$((15, 15), \mat{Q})$ model with $\mat{Q}$ as in \eqref{eq:matQ}; right: starting from the same network, the network with planted hubs using $R=2$ and $k=3$.
         }
         \label{fig:example_hubs_20}
    \end{figure}

    \begin{figure}[ht]
         \centering
        \includegraphics[width=\textwidth]{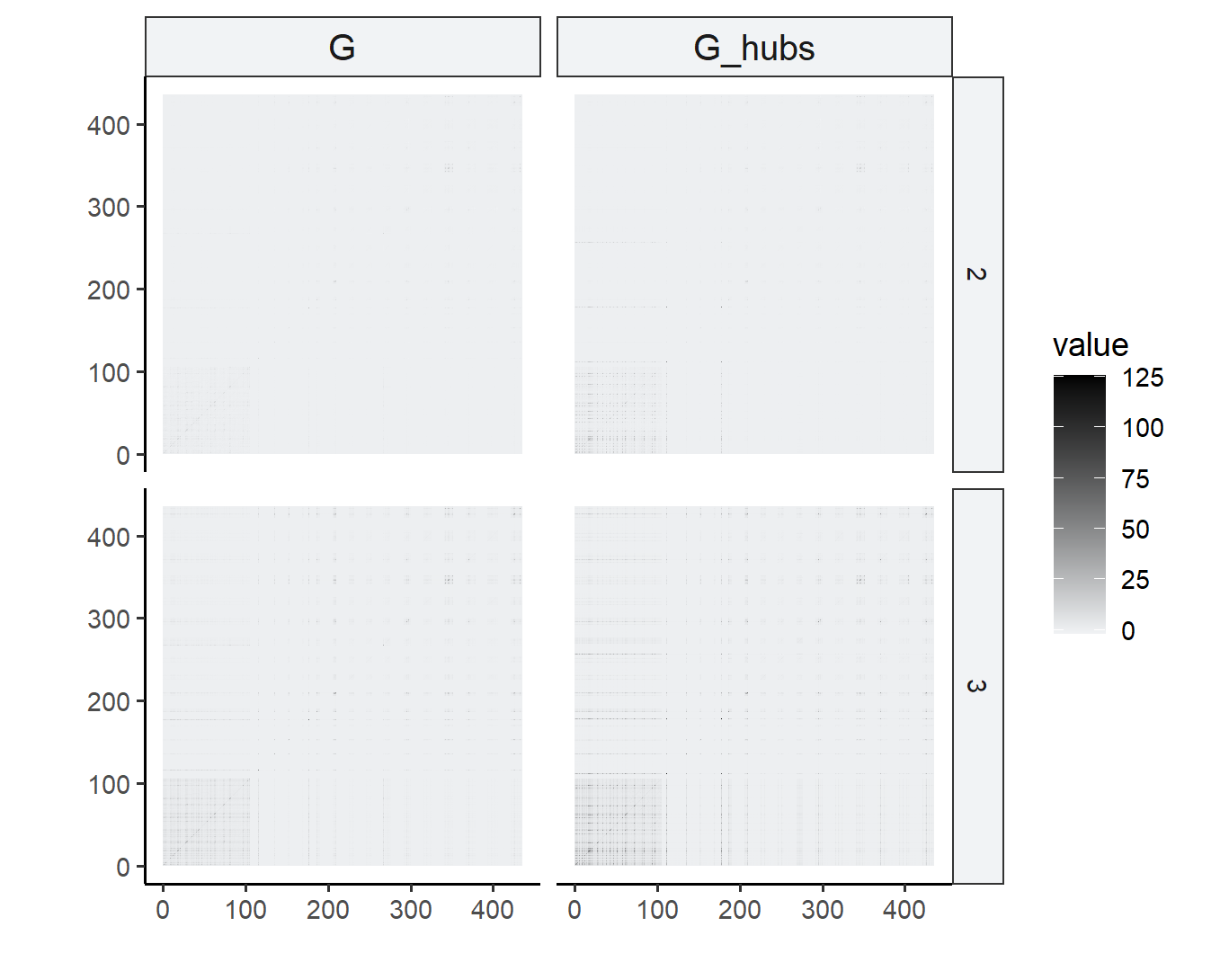} 
        \vspace{.1in}
         \caption{Heatmap plot of $h(s,s')$ matrix from left: original network simulated from an ERMM$((15,15), \mat{Q})$ model; right: the network with planted hubs using $R=2$ and $k= 3$. For this plot, we use WL with $h=2$ (top row) and $h=3$ (bottom row).}
         \label{fig:example_hubs_heatmap}
     \end{figure}

    \begin{figure}
         \centering
        \includegraphics[width=0.80\textwidth]{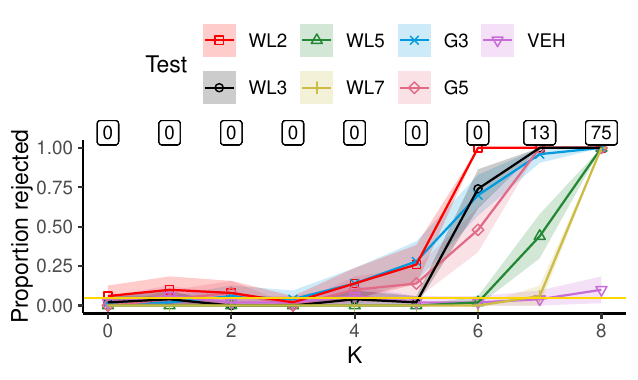} 
         \vspace{.1in}
         \caption{Power of the tests for the fit of an ER$(30, 0.06)$ model to a network with a planted clique of size $K$ using WL kernels with different subtree heights $h$, graphlet kernel with graphlet size 3 (G3) and 5 (G5), and a vertex-edge histogram kernel (VEH).}
         \label{fig:planted_clique_diff_WL_h}
    \end{figure}

    \begin{figure}
         \centering
    \includegraphics[width=0.8\textwidth]{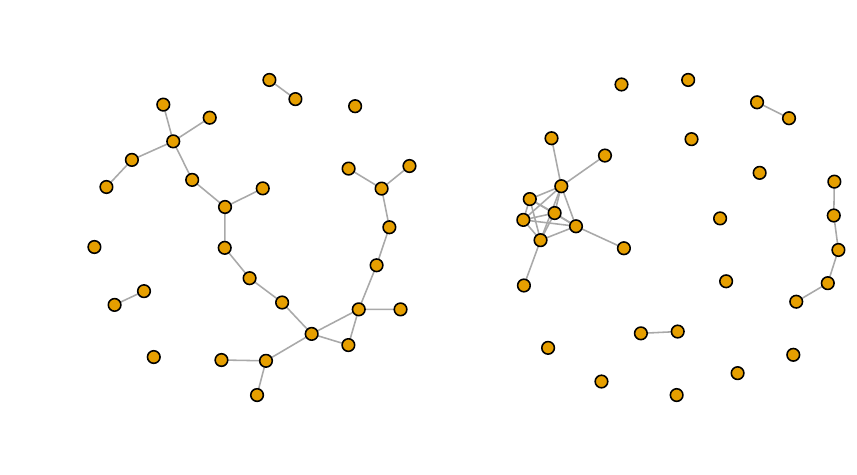} 
        \vspace{.1in}
         \caption{Simulated networks simulated from left: ER$(30, 0.06)$ model; right: starting from the same network, the network with a planted clique of size $K=6$.}
         \label{fig:example_clique}
     \end{figure}

     \begin{figure}
         \centering
    \includegraphics[width=\textwidth]{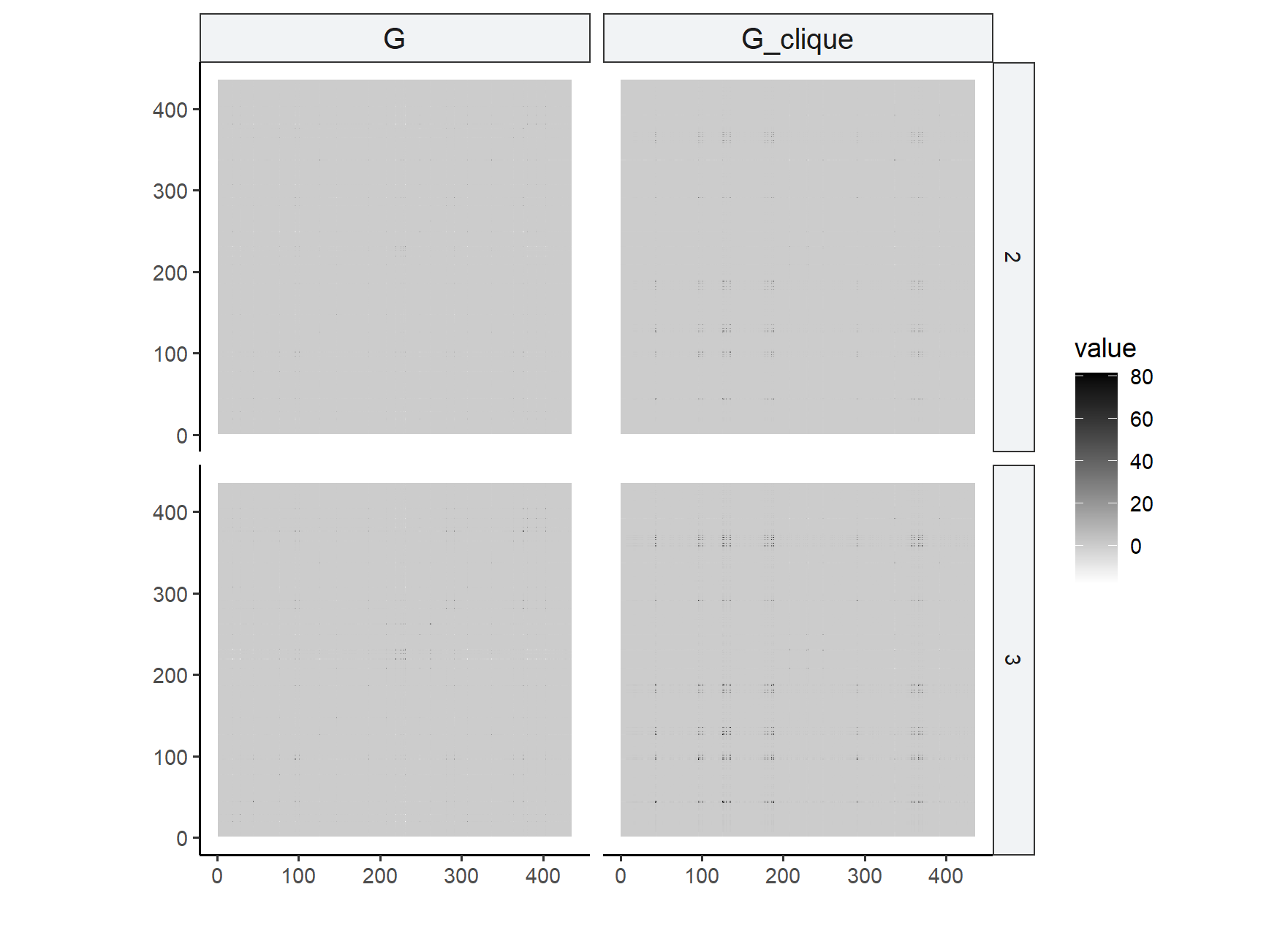} 
        \vspace{.1in}
         \caption{Heatmap plot of $h(s,s')$ matrix from left: original network simulated from an ER$(30, 0.06)$ model; right: the network with planted clique of size $K=6$. For this plot, we use WL with $h=2$ (top row) and $h=3$ (bottom row).}
         \label{fig:example_clique_heatmap}
     \end{figure}

Concluding this exploration, we did not find a uniformly best choice of the kernel to compute IRG-gKSS and recommend using more than one kernel when testing the fit of a network model. However, Figure \ref{fig:planted_clique_diff_WL_h} shows that the vertex-edge histogram does not perform very well. Hence, in general, we do not recommend the use of the vertex-edge histogram kernel to compute IRG-gKSS. 

\clearpage 

\section{REAL DATA APPLICATIONS} 
\label{sec:applic_contd}

\subsection{Parameter Estimation}

For the real data applications, the model parameters have to be estimated. For ERMM models with given group labels, the standard maximum likelihood estimators are used. When the group assignments are not given, but the number of groups is suggested, then we employ the Louvain algorithm by {\cite{blondel2008fast}} to obtain the group assignments. This algorithm assigns vertices to communities based on modularity as a quality measure of a partition. As this algorithm is random, we run it 100 times and select the partition with the highest modularity.

For the DCSBM, we estimate the parameters using the `DCSBM.estimate' function from the \texttt{randnet} package in {\tt{R}}. This package fits a Poissonized DCSBM to the data, as is customary. Here we use these parameter estimates to take $p_{u,v} = 1-e^{-Q_{g_u,g_v}\theta_u \theta_v}$ for all $u$ and $v$ in the network as parameters in the null model to test the fit of a DCSBM.

\subsection{Lazega's Lawyers Networks}\label{ssec:lazega}

\begin{figure}
         \centering
        \includegraphics[width=0.9\textwidth]{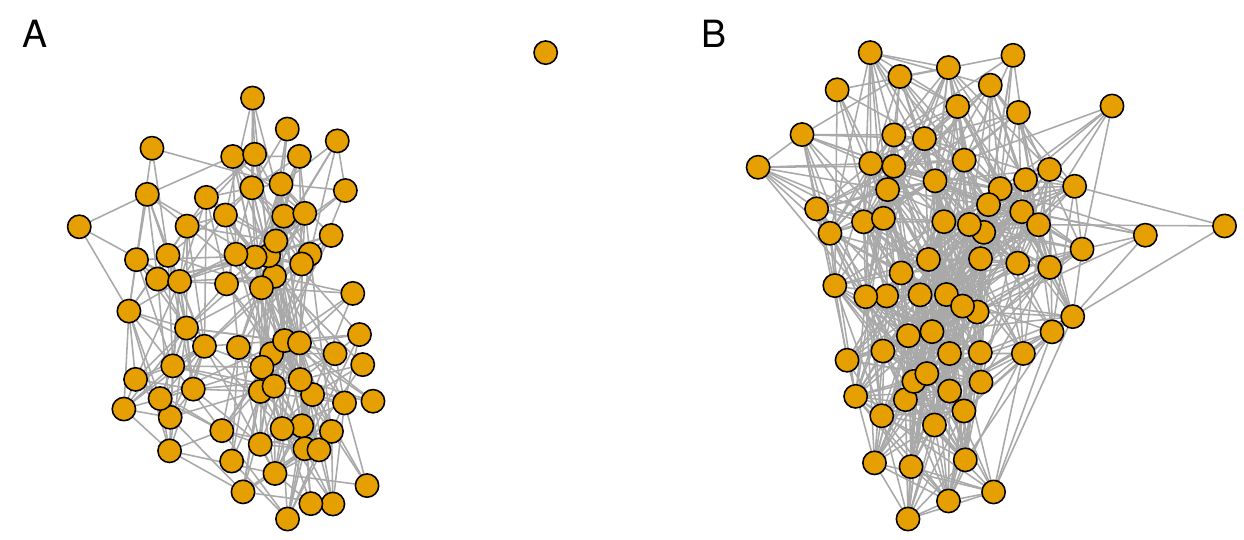} 
        \vspace{.1in}
         \caption{Lazega's lawyers networks: (A) work, (B) advice.
         }
         \label{ll_wa}
     \end{figure}

     \begin{figure}
         \centering
        \includegraphics[width=\textwidth]{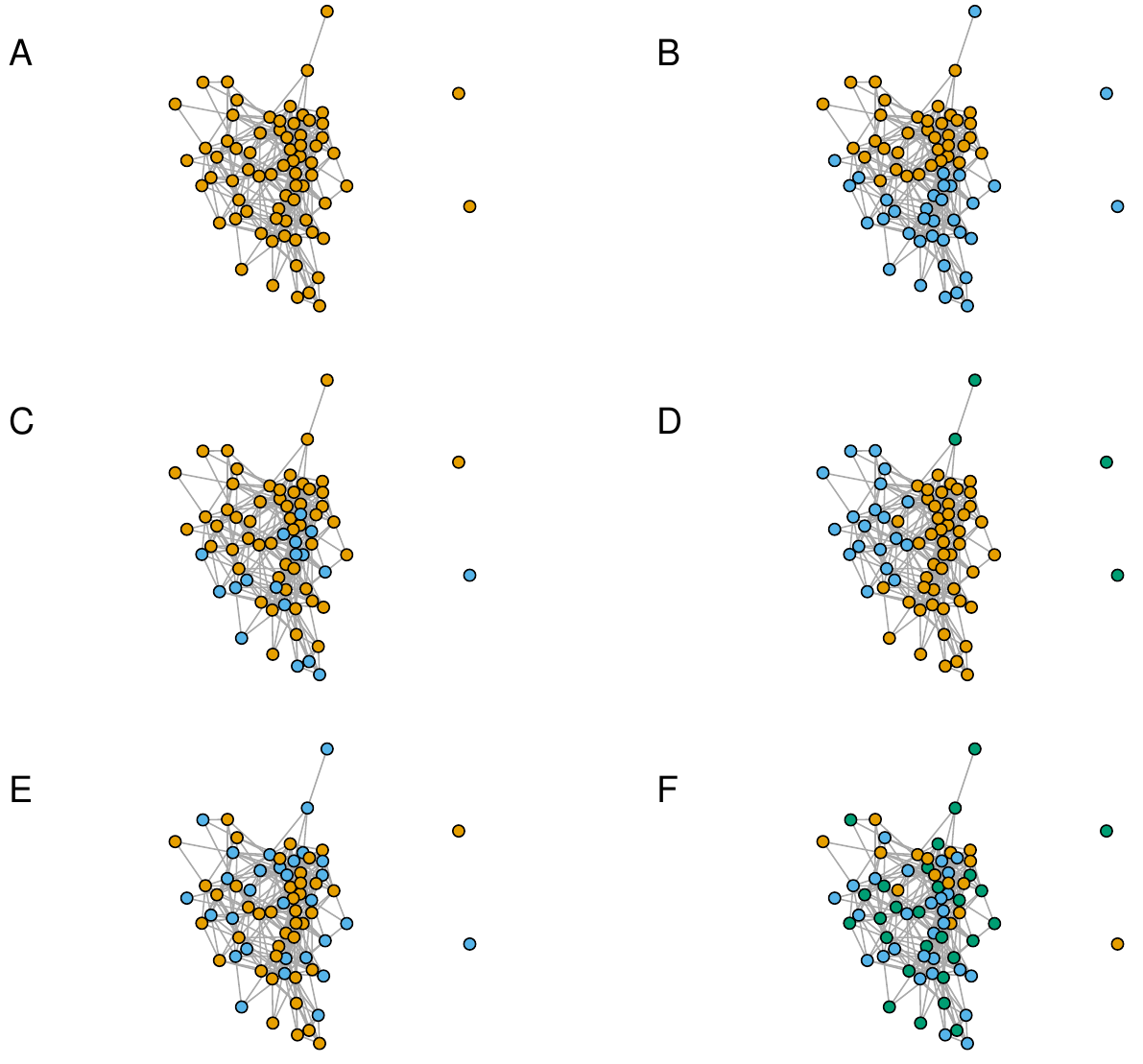} 
        \vspace{.1in}
         \caption{Lazega's lawyers' friendship network with vertices coloured according to (A) single group, (B) status (orange: partner; blue: associate), (C) gender (orange: male; blue: female), (D) office (orange: Boston; blue: Hartford; green: Providence), (E) practice (orange: litigation; blue: corporate), and (F) law school (orange: Harvard, Yale; blue: UCon; green: other).
         }
         \label{ll_f}
     \end{figure}

The collection of Lazega's lawyers' networks \citep{lazega2001collegial} is constructed from a data set collecting results of a study on relationships between 71 attorneys (partners and associates) of a Northeastern US corporate law firm, during 1988-1991.

The data set includes (among others) information on work relationships, advice relationships, and friendship relationships among the 71 attorneys. From this data set, several undirected networks are constructed, with lawyers represented by vertices. In the {\emph{work network}}, two vertices are connected by an edge if the two lawyers have spent time together on at least one case, or have been assigned to the same case, or either of them have read or used a work product created by the other. In the {\emph{advice network}}, an edge represents that they consult each other for basic professional advice for their work, and an edge in the {\emph{friendship network}} indicates that the two lawyers socialise with each other outside work.
 
Various attributes of the lawyers are also part of the dataset, such as formal status (2 groups), office in which they work (3 groups), gender (2 groups), type of practice (2 groups), and law school attended (3 groups). The ethnography, organisational, and network analyses of this data set are available in Lazega (2001).

For each of these networks, we create groups according to different attributes. The parameter estimates of these networks that we use as parameters of the null model, and the IRG-gKSS test results at $5\%$ level, are as follows.

\begin{enumerate}
    \item  For the work network, we propose an ER model with $n = 71$ and $p = 0.152$. For this model, the IRG-gKSS test does not reject the fit.

    \item For the advice network, we propose an ER model with $ n=71 $ and $p = 0.2885$. For this model, the IRG-gKSS test does not reject the fit.

    \item For the friendship network, the situation is more complex. First, we test the fit of an ER model with $n = 71$ and $p = 0.1605634$; IRG-gKSS rejects the fit of this model. Thus, there is a need for a more complex model. We therefore test the fit of different ERMMs and DCSBMs depending upon groups created according to different vertex attributes, as follows. 
\begin{enumerate}
\item {\bf Status.}
For groups created according to status, we test an ERMM with 
$$\mat{n} = (36, 35) \mbox{ and } \mat{Q} = \begin{pmatrix}
    0.2968 & 0.0730 \\
    0.0730 & 0.2017 \\
\end{pmatrix}.$$ 
For a corresponding DCSBM we use $$\mat{n} = (36,35), \quad \quad {\mat{Q} = \begin{pmatrix}
    374 & 92 \\
    92 & 240\\
\end{pmatrix}}$$ and 
\begin{align*}
\bm{\theta} = (&0.0172, 0.0215, 0.0086, 0.0472, 0.0129, 0.0043, 0.0064, 0.0150,  0.0300, 0.0300, \\ & 0.0300, 0.0472, 0.0515, 0.0193, 0.0086, 0.0279, 0.0536, 0.0193, 0.0129, 0.0300, \\ & 0.0322, 0.0236, 0.0150, 0.0536, 0.0322, 0.0515, 0.0408, 0.0279, 0.0279, 0.0172, \\ & 0.0601, 0.0150, 0.0279, 0.0258, 0.0279, 0.0279, 0.0030, 0.0452, 0.0452, 0.0392, \\ & 0.0602, 0.0482, 0.0542, 0.0000, 0.0241, 0.0181, 0.0000, 0.0211, 0.0181, 0.0271, \\ & 0.0241, 0.0422, 0.0090, 0.0361, 0.0090, 0.0361, 0.0452, 0.0422, 0.0181, 0.0241, \\ & 0.0090, 0.0151, 0.0060, 0.0633, 0.0693, 0.0482, 0.0151, 0.0271, 0.0181, 0.0211, \\ & 0.0181).
\end{align*}
\item {\bf Gender.}
For groups created according to gender, we test an ERMM with $$\mat{n} = (53, 18) \mbox{ and } \mat{Q} = \begin{pmatrix}
    0.1821 &  0.1184 \\
    0.1184 & 0.2287 \\
\end{pmatrix}.$$
For a corresponding DCSBM we use $$\mat{n} = (53, 18), \quad \quad 
\mat{Q} = \begin{pmatrix}
    502.001 & 113.001 \\
    113.001 & 70.001 \\
\end{pmatrix}$$ and 
\begin{align*}
\bm{\theta} = (&0.0130, 0.0163, 0.0065, 0.0358, 0.0098, 0.0033, 0.0049, 0.0114, 0.0228, 0.0228, \\ & 0.0228, 0.0358, 0.0390, 0.0146, 0.0065, 0.0211, 0.0407, 0.0146, 0.0098, 0.0228, \\ & 0.0244, 0.0179, 0.0114, 0.0407, 0.0244, 0.0390, 0.1038, 0.0211, 0.0710, 0.0130, \\ & 0.0455, 0.0114, 0.0211, 0.0656, 0.0211, 0.0211, 0.0016, 0.0820, 0.0820, 0.0211, \\ & 0.0325, 0.0260, 0.0984, 0.0000, 0.0130, 0.0328, 0.0000, 0.0383, 0.0098, 0.0146, \\ & 0.0437, 0.0228, 0.0049, 0.019,5 0.0049, 0.0195, 0.0820, 0.0228, 0.0328, 0.0437, \\ & 0.0164, 0.0081, 0.0033, 0.1148, 0.0374, 0.0260, 0.0273, 0.0146, 0.0328, 0.0114, \\ & 0.0328).
\end{align*}

\item {\bf Office.}
For groups created according to office, we test an ERMM with $$\mat{n} = (48, 19,  4) \mbox{ and } \mat{Q} = \begin{pmatrix}
    0.2464 & 0.0647 & 0.0156 \\
    0.0647 & 0.3391 & 0.0000 \\
    0.0156 & 0.0000 & 0.1666 \\
\end{pmatrix}.$$
For a corresponding DCSBM we use $$\mat{n} = (48, 19,  4), \quad 
\mat{Q} = \begin{pmatrix}
    556.001 & 59.001 & 3.001 \\
    59.001 & 116.001 & 0.001 \\
    3.001 & 0.001 & 2.001 \\
\end{pmatrix}$$ and 
\begin{align*}
\bm{\theta} = (&0.0129, 0.0162, 0.0229, 0.0356, 0.0343, 0.0114, 0.0171, 0.0113, 0.0227, 0.0227, \\ & 0.0227, 0.0356, 0.0388, 0.0514, 0.7995, 0.0210, 0.0405, 0.0514, 0.0097, 0.0227, \\ & 0.0243, 0.0178, 0.0113, 0.0405, 0.0857, 0.0388, 0.0307, 0.0743, 0.0210, 0.0457, \\ & 0.1600, 0.0400, 0.0743, 0.0194, 0.0743, 0.0210, 0.1999, 0.0243, 0.0243, 0.0210, \\ & 0.0324, 0.0259, 0.0291, 0.0000, 0.0129, 0.0343, 0.0000, 0.0113, 0.0097, 0.0514, \\ & 0.0457, 0.0227, 0.0049, 0.0194, 0.0049, 0.0194, 0.0243, 0.0800, 0.0343, 0.0129, \\ & 0.0049, 0.0081, 0.0114, 0.0340, 0.0372, 0.0259, 0.0081, 0.0146,  0.0097, 0.0113, \\ & 0.0097).
\end{align*}
\item {\bf {Practice.}}
For groups created according to practice, we test an ERMM with $$\mat{n} = (41, 30) \mbox{ and } \mat{Q} = \begin{pmatrix}
    0.2060 & 0.1268 \\
    0.1268 & 0.1701 \\
\end{pmatrix}.$$
For a corresponding DCSBM we use $$\mat{n} = (41, 30), 
 \quad \mat{Q} = \begin{pmatrix}
    338.001 & 156.001 \\
    156.001 & 148.001 \\
\end{pmatrix}$$ and 
\begin{align*}
\bm{\theta} = (&0.0162, 0.0329, 0.0081, 0.0724, 0.0121, 0.0040, 0.0099, 0.0142, 0.0461, 0.0461, \\ & 0.0283, 0.0724, 0.0486, 0.0296, 0.0132, 0.0428, 0.0822, 0.0182, 0.0197, 0.0283, \\ & 0.0304, 0.0223, 0.0142, 0.0506, 0.0493, 0.0486, 0.0385, 0.0428, 0.0428, 0.0162, \\ & 0.0567, 0.0142, 0.0263, 0.0395, 0.0428, 0.0263, 0.0033, 0.0304, 0.0304, 0.0263, \\ & 0.0405, 0.0526, 0.0364, 0.0000, 0.0263, 0.0197, 0.0000, 0.0230, 0.0121, 0.0296, \\ & 0.0162, 0.0283, 0.0099, 0.0243, 0.0061, 0.0243, 0.0304, 0.0283, 0.0121, 0.0263, \\ & 0.0099, 0.0164, 0.0066, 0.0691, 0.0466, 0.0324, 0.0101, 0.0182, 0.0121, 0.0230, \\ & 0.0121).
\end{align*}
\item {\bf Law school.}
For groups created according to which law school the person attended, we test an ERMM with $$\mat{n} = (15, 28, 28) \mbox{ and } \mat{Q} = \begin{pmatrix}
    0.2571 & 0.1476 & 0.1190 \\
    0.1476 & 0.2089 & 0.1696 \\
    0.1190 & 0.1696 & 0.1269 \\
\end{pmatrix}.$$
For a corresponding DCSBM we use $$\mat{n} = (15, 28, 28) , \quad 
\mat{Q} = \begin{pmatrix}
    54.001 & 62.001 & 50.001 \\
    62.001 & 158.001 & 133.001 \\
    50.001 & 133.001 & 96.001 \\
\end{pmatrix}$$
and 
\begin{align*}
\bm{\theta} = (&0.0482, 0.0602, 0.0241, 0.0789, 0.0170, 0.0120, 0.0108, 0.0251, 0.0843, 0.0502, \\ & 0.0843, 0.0623, 0.0680, 0.0542, 0.0143, 0.0783, 0.1506, 0.0255, 0.0361, 0.0843, \\ & 0.0425, 0.0394, 0.0198, 0.0708, 0.0425, 0.0860, 0.1145, 0.0368, 0.0466, 0.0287, \\ & 0.0793, 0.0251, 0.0466, 0.0340, 0.0466, 0.0466, 0.0036, 0.0425, 0.0904, 0.0783, \\ & 0.0567, 0.0453, 0.0510, 0.0000, 0.0287, 0.0170, 0.0000, 0.0251, 0.0170, 0.0255, \\ & 0.0287, 0.0502, 0.0108, 0.0430, 0.0108, 0.0340, 0.0425, 0.0502, 0.0170, 0.0227, \\ & 0.0108, 0.0142, 0.0057, 0.0595, 0.0824, 0.0573, 0.0142, 0.0323, 0.0215, 0.0198, \\ & 0.0170).
\end{align*}

\end{enumerate}
\end{enumerate}

\begin{table}
\caption{P-values for testing the fit of IRG models on Lazega lawyers' networks (using WL with $h=2$)} 
\label{tab:goflazega}
\centering
\begin{tabular}{lccccc}
&&&&&\\ 
 \textbf{NETWORK} &\textbf{ GROUP LABELS} & \textbf{ERMM} & \textbf{DCSBM}  \\
\hline \\
Work & Single group & 0.1692 & - \\
Advice & Single group & 0.5174 & - \\
Friendship & Single group & 0.0597 & 0.0299 \\
& Status & 0.0995 & 0.0299 \\
& Gender & 0.0299 & 0.0398 \\ 
& Office & 0.0995 & 0.00995 \\
& Practice & 0.0299 & 0.0299 \\
& Law school & 0.0498 & 0.00995 \\
\end{tabular}
\end{table}

\begin{table}
\caption{P-values 
for testing the fit of IRG models on Lazega lawyers' friendship network (using WL with $h=3$)} 
\label{tab:lazegah3}
\centering
\begin{tabular}{lcccc}
\textbf{GROUP LABELS} & \textbf{ERMM} & \textbf{DCSBM}  \\
\hline \\
 Gender & 0.00995 & 0.6467 \\ 
 Practice & 0.00995 & 0.6368 \\
 Law school & 0.01990 & 0.8557 
\end{tabular}
\end{table}

\noindent 
\begin{table}
\caption{P-values for testing the fit of ER and ERMM models on Lazega lawyers' networks using ST
} 
\label{tab:goflazega_ST}
\centering
\begin{tabular}{lccccc}
&&&&&\\ 
\textbf{NULL} & \textbf{WORK} & 
\textbf{ADVICE} & 
\textbf{FRIENDSHIP} \\
\hline \\
$K_0 = 1$ & 0.00 & 0.00 & 0.00 \\
$K_0 = 2$ (Status) & 0.00 & 0.00 & 0.00 \\
$K_0 = 2$ (Gender) & 0.00 & 0.00 & 0.00  \\ 
$K_0 = 3$ (Office) & 0.00 & 0.00 & 0.00 \\
$K_0 = 2$ (Practice) & 0.00 & 0.00 & 0.00 \\
$K_0 = 3$ (Law school) & 0.00 & 0.00 & 0.00 \\
\end{tabular}
\end{table}

As an aside, from Table \ref{dis_boundsold} we see that for an IRG model in which communities are created using the office where the lawyers work,  the discrepancy between the ER and the corresponding ERMM is higher than the discrepancy between ERMM and a corresponding DCSBM.

\begin{table}[ht]
\caption{GLR tests between IRG models for Lazega's lawyers' friendship network, with the 5\% critical value from the approximating chi-square distribution in parentheses. The degrees of freedom are $k=2$ for Status, Gender, and Practice, and $k = 5$ for Office and Law school, for the tests of ER against ERMM. For the tests of ERMM against DCSBM, the degrees of freedom are $k=69$ for Status, Gender, and Practice, and $k=68$ for Office and Law school.} 
\label{dis_bounds}
\centering
\begin{tabular}{lcc}
&&\\ 
& \multicolumn{2}
{@{}c@{}@{}}
{\textbf{GLR TEST STATISTIC AND CRITICAL $\chi^2_k(\alpha)$}}  \\
 \textbf{GROUP LABELS}  & \textbf{ER AGAINST ERMM} & \textbf{ERMM AGAINST DCSBM}  \\
\hline\\
 Status & 333.2 (5.991) & 685.4 (89.39) \\ 
 Gender & 45.94 (5.991) & 710.7 (89.39)\\ 
 Office & 474.9 (11.07) & 550.2 (88.25) \\
 Practice & 46.21 (5.991) & 704.5 (89.39) \\
 Law school & 45.30 (11.07) & 696.60 (88.25) \\
\end{tabular}
\end{table}

In a related experiment, reported in Table \ref{dis_bounds}, we used a likelihood ratio test for the Status network to test the ERMM model against the ER model; the test rejected the ER model in favour of the ERMM model. Then we tested the DCSBM model against the ERMM model; the test rejected the ERMM model in favour of the DCSBM model. This experiment makes it clear that, in contrast to the IRG-gKSS test, the likelihood ratio test is not a goodness-of-fit test against the general alternative; rather, the results depend on the specified alternative model. Table \ref{tab:goflazega_ST} reports the P-values of the fit of a stochastic blockmodel (SBM) with $K_0$ groups for ST. ST rejects the fit of a one-group SBM for all three work-advice and friendship networks. It also rejects two-group and three-group SBMs when group membership is defined by the observed vertex attributes: status, gender, office, practice, and law school.

\subsection{Zachary's Karate Club Network}
\label{ssec:karate}
\begin{figure}
        \centering
    \includegraphics[width=0.8\textwidth]{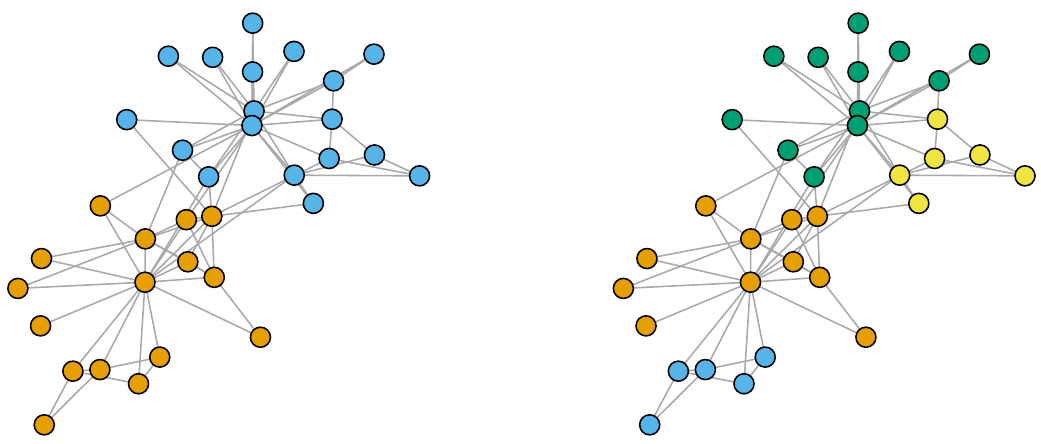}  
        \vspace{.1in}
         \caption{Zachary's Karate club network (left) with the observed split into two groups and (right) the network with $4$ groups.}
         \label{fig:karate}
     \end{figure}

Here we give further details about Zachary's Karate club network example from Section \ref{kclub_main} of the main text, with $n=34$ vertices.

For ER and ERMM models, we use the maximum likelihood estimates from the observed network as parameters in the null model, namely $p = 0.139$ for the ER model; for the ERMM model, we take two groups reflecting the split, with sizes $\mat{n}_1 = (16, 18)$, and parameter estimates $$\mat{Q}_1 = \begin{pmatrix}
    0.2750 &  0.0347 \\
     0.0347 & 0.2288
\end{pmatrix}.$$

We also test an ERMM model with four communities as suggested in \cite{karwa2024monte}. Figure \ref{fig:karate} shows the groups for both the observed split and an output from the Louvain algorithm with the highest modularity. 

The maximum likelihood estimates of the parameters of an ERMM model with four groups are $\mat{n_2} = (11, 5, 12, 6)$ and 
$$\mat{Q}_2 = \begin{pmatrix}
    0.4182 & 0.0727 & 0.0530 & 0.0455 \\
    0.0727 & 0.6000 & 0.0000 & 0.0000\\
    0.0530 & 0.0000 & 0.3182 & 0.0972\\
    0.0455 & 0.0000 & 0.0972 & 0.4667\\
\end{pmatrix},$$
which we use as parameters in the null model for testing the fit of an ERMM with four groups.

For the DCSBM, we estimate the parameters `DCSBM.estimate' function from the \texttt{randnet} package in {\tt{R}}. The estimates are $\mat{n} = (16,18)$, $\mat{Q} = \begin{pmatrix}
    66.001 & 10.001 \\
    10.001 & 70.001 \\
\end{pmatrix}$ and 
\begin{align*}
\bm{\theta} = (&0.2105, 0.1184, 0.1316, 0.0789, 0.0395, 0.0526, 0.0526, 0.0526, 0.0625, 0.0250, \\ & 0.0395, 0.0132, 0.0263, 0.0658, 0.0250, 0.0250, 0.0263, 0.0263, 0.0250, 0.0395, \\ & 0.0250,  0.0263, 0.0250, 0.0625, 0.0375, 0.0375, 0.0250, 0.0500, 0.0375, 0.0500, \\ & 0.0500, 0.0750, 0.1500, 0.2125).
\end{align*}
We use $p_{u,v} = 1-e^{-Q_{g_u,g_v}\theta_u \theta_v}$ for all $u$ and $v$ in the network as null parameters to test the fit of a DCSBM. 

\begin{table}
  \caption{Testing the fit of some IRG models on Zachary's Karate club network (using WL with $h=2$)}
  \label{tab:karate}
  \centering
  \begin{tabular}{lcccc}
        & \multicolumn{4}{@{}c@{}@{}}{\textbf{MODEL}} \\
        & \textbf{ER$(n,p)$} & \textbf{ERMM$(\mat{n}_1, \mat{Q}_1)$} & \textbf{ERMM$(\mat{n}_2, \mat{Q}_2)$ }& \textbf{DCSBM$(\mat{n}, \mat{Q}, {\bm{\theta}})$} \\
       \hline\\
    P-value & 0.02985 & 0.00995 & 0.00995 & 0.00995 \\
  \end{tabular}
\end{table}

\subsection{Padgett's Florentine Marriage Network}

As another real-world example, we consider the subgraph of the Florentine marriage data from \citet{padget} constructed in \citet{BREIGER1986215}. It consists of 20 marriage ties (edges) between 16 families (vertices) in Florence, Italy. The data set contains two more variables for the families in the network: net wealth in 1427, and the number of seats on the city council from 1288–1344. 
 
We test the fit of the Erd\H{o}s-R\'enyi (ER) random graph model with $n=16$ and $p_0 = 20/120 =1/6$ on this network. The results in Table \ref{table:all_net} show that the IRG-gKSS test does not reject the fit of an ER$(16,1/6)$ model to the Florentine marriage network. This finding agrees with the finding in \cite{xu2021stein}. ST also does not reject the null hypothesis at the 5\% level. As the GLR test requires specifying an alternative, we did not employ it here.
\begin{figure}
         \centering
    \includegraphics[width=0.8\textwidth]{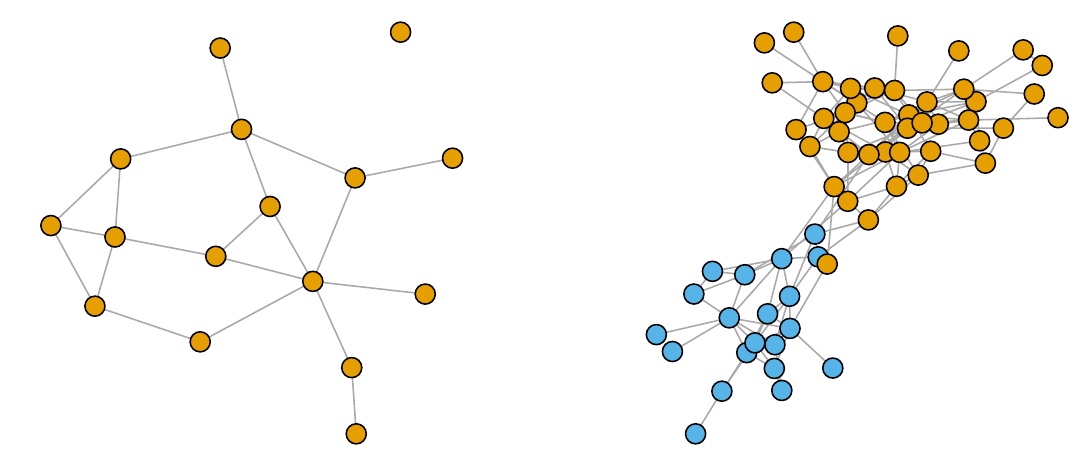} 
        \vspace{.1in}
         \caption{Left: Padgett's Florentine marriage network; right: Lusseau's dolphin network.
         }
         \label{flor_dolph}
     \end{figure}
     
\subsection{Lusseau's Dolphin Network}

As a final real-world example, we consider the social network of bottlenose dolphins in Doubtful Sound, New Zealand, from \citet{lusseau2003bottlenose}, constructed from observations of a community of 62 bottlenose dolphins over seven years from 1994 to 2001. The vertices in the network represent the dolphins, and the undirected edges between the vertices represent associations between a pair of dolphins occurring more often than expected by chance.

This network is one of the benchmark networks in social network analysis. \citet{Lusseau_David_2003} argues that the connectivity of individuals follows a complex distribution that has a scale-free power-law distribution for large degrees $k$. \citet{jin2013bayesian} and \citet{CAIMO201141} found that the dolphin network is inhomogeneous; a few vertices have higher degrees while others have only one or two edges. They analysed this network using an ERGM with the degree and shared partnership statistics. \citet{ouyang_dey_zhang_2023} fits a mixed membership model to the dolphin network. None of these papers assessed the fit of the suggested model using a statistical test procedure.

Here we test the fit of IRG models to the dolphin network. The results of IRG-gKSS, given in Table \ref{table:all_net}, show that we do not reject the fit of an ER$(62,0.0841)$ for the dolphin network. In contrast, ST clearly rejects this model. An ER network model would be a much simpler explanation for the data than those previously suggested. Investigating this discrepancy further will be part of future work. 

\begin{table}
  \caption{P-values for testing the fit to an ER model for the Florentine marriage network and the dolphin network}
  \label{table:all_net}
  \centering
  \begin{tabular}{llcccc}\\
    \textbf{NETWORK}     & \textbf{MODEL}     & \textbf{WL WITH $h = 2$} & \textbf{WL WITH $h = 3$} & {\textbf{ST}} \\
    \hline \\
    Florentine Marriage & ER$(16, 1/6)$ & 0.8557 & 0.9651 & {0.0755} \\
    Dolphin & ER$(62, 0.0841)$ & 0.7860 & 0.1791 & 0.0000 \\
        & ERMM$(\mat{n},\mat{Q})$ & 0.5572 & 0.6667 & 0.1201 \\
  \end{tabular}
\end{table}

Table \ref{table:all_net} shows that for the Florentine marriage network and for the dolphin network, the number of iterations (or height) of the WL kernel does not affect the conclusion of the test. It does, however, affect the $P$-value.

\subsection{Discussion of the Results on Real-World Networks}

For real-world networks, we rely on models that were proposed in the literature. For some networks, such as Zachary's karate club network, there is no consensus regarding the best model.  Domain knowledge about the networks opens up a discussion about network representations. Zachary's karate club network, for example, is a binary friendship network; friendship is determined by social interactions outside the club, which could happen via different social groups. In his original paper, however, Zachary attaches weights to edges which represent ``capacities", or different strengths, of relationships. Moreover, the observations were taken not at a single time point but between 1970 - 72 and then aggregated into a single binary network. One could argue that a temporal weighted network model could be more appropriate for the data; friendships change over time. However, the raw data are not available, and hence, this approach cannot be carried out. Similarly, Lazega's lawyer networks could perhaps be more appropriately represented as a multilayer network, because the different networks may influence each other. Lusseau's dolphin network again is based on temporal observations, but the time stamps are not available. Hence, for all these real-world networks, one could argue that the binary network representation may not be appropriate.

Asymptotic tests such as the ones suggested by \cite{dan2020goodness} vary depending on the asymptotic sparsity of the network. For a fixed small or moderately sized network, such as Zachary's karate club network with 34 vertices and 78 edges, it is impossible to say which sparsity regime it belongs to. Different sparsity regimes lead to different asymptotic distributions and hence different test results. Similarly, the tests in \cite{Jin2025} are different for different asymptotic regimes, and the asymptotic regime cannot plausibly be determined for a fixed small or moderately sized network. Hence, in this paper, we do not compare to such asymptotic tests.

\section{STEIN'S METHOD FOR IRG MODELS} \label{app:stein}

Here we underpin the proposed IRG Stein operator given in \eqref{SE_ermm_op}, with special emphasis for its use in a KSD-type test. Recall that, for an IRG$(\mat{p})$ as in Subsection \ref{sec:irg}, with adjacency matrix $\mat{X}$, and  $p_s = \mathbb{P}(s=(u,v) \in \mathcal{E}| g_u, g_v )$, {the probability that $\mat{X} = \mat{x}$ is given by
$$\mu(\mat{x}) = \prod_{s \in E} p_s^{x_s}(1-p_s^{x_s}), \quad \quad \mat{x} \in \Omega$$
where $\Omega = \{0,1\}^N$, $p_s \in [0,1]$, and $E = \{(i,j): 1 \le i \le j \le n \}$ the set of all vertex pairs so that $|E| = N = \binom{n}{2}$. Under this model, for $s \in E$,} the conditional probability to have $x_s = 1$, given the rest of the graph, $\mat{x}_{-s}$, is given by 
\begin{equation} \label{trans_ERMM}
    q(\mat{x}^{(s,1)} | \mat{x}_{-s}) = p_s.
\end{equation}

To construct a Stein operator for the IRG models, we use the insight from \citet{barbour1988stein} and \citet{gotze1991rate}, that if $\mathcal{L}_0$ is the stationary distribution of a homogeneous Markov process, then under some regularity assumptions the generator of the Markov process is a Stein operator for $\mathcal{L}_0$; the (infinitesimal) generator of a homogeneous Markov process $\{X(t)\}_{t \ge  0}$ is the operator $\mathcal{A}$ operating on smooth functions $f$ such that 
$ {\mathcal{A}} f(x) =  \lim_{h \rightarrow 0} \{ \EE [ f(X(h))| X(0) = x] - f(x) \}/h$, if the limit exists. 

We define a Markov process $\mathcal{X} = \{X(t)\}_{t \ge 0}$ with state space $\{0,1\}^{N}$ as follows. Given the current state $\mat{x}$, each vertex pair in $E$ has a clock which rings at i.i.d.\,exponentially distributed times with mean $N$, independently of the network. When the clock rings for a vertex pair $s \in E$ we resample the edge indicator $x_s$, setting an edge at $s$ with probability $p_s$ and not setting an edge at $s$ with probability $1-p_s$. We then resample the edge indicator for the next vertex pair for which their clock rings and continue this process. This construction, called {\it Glauber dynamics}, gives a homogeneous Markov chain $\{{\bf X}(t)\}_{t \ge  0}$ in continuous time, with transition rates
$$ q_{\mat{x}, \mat{x}^{(s,1)}}
= \lim_{h \rightarrow 0} \frac1h \PP( X (t+h) = \mat{x}^{(s,1)} | X(t) = \mat{x}) = \frac1N p_s, \qquad \qquad  q_{\mat{x}, \mat{x}^{(s,0)}} = \frac1N (1-p_s).$$

For all $f : \{ 0, 1\}^N \rightarrow \R$, the generator of the Markov process is 
\begin{equation} \label{eq:IRG_gen}
 \AAA_{\IRG} f(\mat{x}) = \frac{1}{N} \sum_{s \in E} \left[ p_s \left( f(\mat{x}^{(s,1)}) - f(\mat{x})\right) + \left(1-p_s \right) \left( f(\mat{x}^{(s,0)}) - f(\mat{x}) \right) \right].
\end{equation}

Note that for the Markov process with generator \eqref{eq:IRG_gen}, there is no interaction between different vertex pairs,  and the updates on all sites are independent. We can write this operator as the sum of individual operators for each site $s \in E$, 
\begin{equation*}
    \AAA_{\IRG} f(\mat{x}) = \frac{1}{N} \sum_{s \in E} \AAA_{\IRG}^{(s)} f(\mat{x}),
\end{equation*}
where 
\begin{equation*}
    \AAA_{\IRG}^{(s)} f(\mat{x}) = p_s \left( f(\mat{x}^{(s,1)}) - f(\mat{x})\right) + \left(1-p_s \right) \left( f(\mat{x}^{(s,0)}) - f(\mat{x}) \right).
\end{equation*}

\begin{proposition}
\label{prop:unique_stationary} 
The Markov process $\{X(t)\}_{t \ge 0}$ with generator $\AAA_{\IRG}$ given in \eqref{eq:IRG_gen} has a stationary distribution given by
  $$\mu(\mathbf{x}) = \prod_{s \in E} p_s^{x_s} (1-p_s)^{1-x_s},
  \qquad \mathbf{x} \in \{0,1\}^E.$$
\end{proposition}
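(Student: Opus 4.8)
The plan is to verify directly that $\mu$ satisfies the stationarity equation for the generator $\AAA_{\IRG}$, namely $\sum_{\mathbf{x}} \mu(\mathbf{x}) \AAA_{\IRG} f(\mathbf{x}) = 0$ for all $f: \{0,1\}^E \to \R$, which is equivalent to $\mathbb{E}_\mu[\AAA_{\IRG} f(\mathbf{X})] = 0$. Since $\AAA_{\IRG} f = \frac{1}{N}\sum_{s \in E} \AAA_{\IRG}^{(s)} f$, it suffices to show $\mathbb{E}_\mu[\AAA_{\IRG}^{(s)} f(\mathbf{X})] = 0$ for each fixed $s \in E$. The key structural observation is that under $\mu$ the coordinates $(X_s)_{s \in E}$ are independent with $X_s \sim \mathrm{Bernoulli}(p_s)$, so one can condition on $\mathbf{X}_{-s}$ and work coordinate-by-coordinate.

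First I would fix $s$ and condition on $\mathbf{X}_{-s} = \mathbf{x}_{-s}$. Both $f(\mathbf{X}^{(s,1)})$ and $f(\mathbf{X}^{(s,0)})$ depend only on $\mathbf{x}_{-s}$ (not on $X_s$), while $f(\mathbf{X})$ equals $f(\mathbf{x}^{(s,1)})$ on $\{X_s = 1\}$ and $f(\mathbf{x}^{(s,0)})$ on $\{X_s = 0\}$. Taking the conditional expectation of $\AAA_{\IRG}^{(s)} f$ given $\mathbf{X}_{-s} = \mathbf{x}_{-s}$, and using $\mathbb{P}(X_s = 1 \mid \mathbf{X}_{-s}) = p_s$ from the independence/product form of $\mu$ (equation \eqref{trans_ERMM}), gives
\begin{align*}
\mathbb{E}_\mu\big[\AAA_{\IRG}^{(s)} f(\mathbf{X}) \,\big|\, \mathbf{X}_{-s}\big]
&= p_s\big(f(\mathbf{x}^{(s,1)}) - \mathbb{E}_\mu[f(\mathbf{X})\mid \mathbf{X}_{-s}]\big) \\
&\quad + (1-p_s)\big(f(\mathbf{x}^{(s,0)}) - \mathbb{E}_\mu[f(\mathbf{X})\mid \mathbf{X}_{-s}]\big) \\
&= p_s f(\mathbf{x}^{(s,1)}) + (1-p_s) f(\mathbf{x}^{(s,0)}) - \mathbb{E}_\mu[f(\mathbf{X})\mid \mathbf{X}_{-s}],
\end{align*}
and since $\mathbb{E}_\mu[f(\mathbf{X})\mid \mathbf{X}_{-s}] = p_s f(\mathbf{x}^{(s,1)}) + (1-p_s) f(\mathbf{x}^{(s,0)})$, the right-hand side is zero. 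Taking outer expectation over $\mathbf{X}_{-s}$ yields $\mathbb{E}_\mu[\AAA_{\IRG}^{(s)} f(\mathbf{X})] = 0$; summing over $s$ and dividing by $N$ gives $\mathbb{E}_\mu[\AAA_{\IRG} f(\mathbf{X})] = 0$.

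I do not expect any genuine obstacle here: the argument is a routine conditioning computation, and the main thing to be careful about is the bookkeeping that $f(\mathbf{X}^{(s,1)})$ and $f(\mathbf{X}^{(s,0)})$ are $\sigma(\mathbf{X}_{-s})$-measurable so they pull out of the conditional expectation, and that on the event $\{X_s = b\}$ one has $f(\mathbf{X}) = f(\mathbf{X}^{(s,b)})$. One could alternatively verify the full detailed-balance equations $\mu(\mathbf{x}) q_{\mathbf{x},\mathbf{y}} = \mu(\mathbf{y}) q_{\mathbf{y},\mathbf{x}}$ for $\mathbf{y} = \mathbf{x}^{(s,1)}$ and $\mathbf{y} = \mathbf{x}^{(s,0)}$ — a two-line check using $\mu(\mathbf{x}^{(s,1)})/\mu(\mathbf{x}^{(s,0)}) = p_s/(1-p_s)$ — which shows $\mu$ is in fact reversible and hence stationary; if uniqueness of the stationary distribution were also wanted, one would additionally note that the chain is irreducible on $\{0,1\}^E$ (any state can reach any other by resampling coordinates one at a time), but the statement as given only asserts existence of a stationary distribution, so the conditioning argument above suffices.
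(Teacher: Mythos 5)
Your proposal is correct and follows essentially the same route as the paper: both verify that $\E_\mu\,\AAA_{\IRG}^{(s)} f(\mat{X}) = 0$ for each $s$ by splitting on the value of $x_s$ given $\mat{x}_{-s}$ (the paper writes this as an explicit sum over states, you as a conditional expectation, which is the same computation), and then conclude stationarity from the vanishing of the generator's expectation (the paper cites Theorem 3.3.7 of Liggett for this last step). Your aside about detailed balance is a valid, slightly slicker alternative, but it is not needed and the paper does not use it.
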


\begin{proof}
Using \eqref{GD_Stein_Eq}, for each $s \in E,$
\begin{align*}
  &\E\AAA_{\IRG}^{(s)} f(\mat{X}) 
    =   \sum_{\mat{x}} \left[\mathbb{P} (\mat{X} = \mat{x}^{(s,0)}) \AAA_{\IRG}^{(s)} f(\mat{x}^{(s,0)}) + \mathbb{P} (\mat{X} = \mat{x}^{(s,1)})\AAA_{\IRG}^{(s)} f(\mat{x}^{(s,1)})\right] \\
    = &\sum_{\mat{x}}
    \mathbb{P}(\mat{X}_{-s} =\mat{x}_{-s}) \left[ (1 - p_s) \left\{p_s \left( f(\mat{x}^{(s,1)}) - f(\mat{x}^{(s,0)})\right) + \left(1-p_s \right) \left( f(\mat{x}^{(s,0)}) - f(\mat{x}^{(s,0)}) \right)
     \right\} \right.\\
     & \left.  + p_s \left\{p_s \left( f(\mat{x}^{(s,1)}) - f(\mat{x}^{(s,1)})\right) + \left(1-p_s \right) \left( f(\mat{x}^{(s,0)}) - f(\mat{x}^{(s,1)}) \right) \right\}\right] \\
    =& 0.
\end{align*}
Thus, $\E\AAA_{\IRG}^{(s)} f(\mat{X})  = 0,$ and with \eqref{SE_ermm_op} $\E\AAA_{\IRG} f(\mat{X})  = 0$. Using Theorem 3.3.7 in \cite{liggett2010continuous} it follows that $\mu$ is a stationary distribution for the process $\{X(t)\}_{t \ge 0}$ with generator \eqref{eq:IRG_gen}. For more details, we refer the readers to Chapter 4 of \cite{liggett2010continuous}.
\end{proof}

As $\E\AAA_{\IRG} f(\mat{X})  = 0$ for $\mat{X} \sim IRG(\mat{p})$, we use \eqref{eq:IRG_gen} as a Stein operator for the IRG$(\mat{p})$ model given in \eqref{eq:irg} for $f$ in the Stein class $\mathcal{F}(\AAA) = \{ f: \{0,1\}^{N} \rightarrow \R\}$. We next illustrate its use for comparing different IRG models. To this purpose, we start with a so-called {\it Stein equation}. For this IRG Stein operator the {Stein equation} for a test  function $h:\{0 , 1\}^N \rightarrow \mathbb{R}$ is 
\begin{equation} \label{SE_irg}
    \frac{1}{N} \sum_{s \in E} \left[ p_s \Delta_s f(\mat{x}) + \left( f(\mat{x}^{(s,0)}) - f(\mat{x}) \right) \right] = h(\mat{x}) - \E h(\mat{X}).
\end{equation}
For a given $h$ it is often possible to find a solution $f=f_h$ of \eqref{SE_irg}. 
Then, for any random $\mat{W} \in \{0 , 1\}^N$, replacing $\mat{x}$ by $\mat{W}$ and taking expectations gives
\begin{equation*} 
  \E h(\mat{W}) - \E h(\mat{X}) =  \frac{1}{N} \sum_{s \in E} \left[ p_s \E \Delta_s f(\mat{W}) + \E \left( f(\mat{W}^{(s,0)}) - f(\mat{W}) \right) \right] .
\end{equation*}
If the distribution of $\mat{W}$ also has a Stein operator of the form \eqref{SE_ermm_op}, with parameters $q_s$, then
\begin{equation} \label{eq:opcomp} 
  \E h(\mat{W}) - \E h(\mat{X}) =  \frac{1}{N} \sum_{s \in E}  ( q_s - p_s )\E \Delta_s f(\mat{W}).
\end{equation}
Thus, bounds on  $\Delta f_h$, for $f=f_h$ solving \eqref{SE_irg} could be used to bound the difference in expectations $ \E h(\mat{W}) - \E h(\mat{X})$. From \citet{reinert2019approximating} the  Stein equation  \eqref{SE_irg} is solved by 
\begin{equation} \label{sol_gen}
    f_h(\mat{x}) := - \int_0^{\infty} \E \left[ h(\mat{X}(t)) - \E h(\mat{X})| \mat{X}(0) = \mat{x}\right] dt
\end{equation}
where $\{{\bf X} (t)\}_{t \ge  0}$ is the above  Markov process. While \cite{reinert2019approximating} provide a general framework of which IRG models can be seen as special cases, some of their results considerably simplify when, instead of a general exponential random graph model, an IRG model is used. As an instance, the next lemma gives the desired bounds on $f_h$ in \eqref{sol_gen}.

\medskip 
\begin{lemma}\label{lem:steinsol}
For $f_h(\mat{x})$ in \eqref{sol_gen}, the solution of the Stein equation \eqref{SE_irg} in \eqref{sol_gen}, with $\mat{X}$ following an IRG model, we have 
\begin{equation} \label{sol_SE_ERM}
         |\Delta_s f_h (\mat{x})| \le \| \Delta_s h \| N.
    \end{equation}
\end{lemma}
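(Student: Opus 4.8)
The plan is to exploit the explicit representation \eqref{sol_gen} of $f_h$ together with a coupling of two copies of the Glauber dynamics started from $\mat{x}^{(s,1)}$ and $\mat{x}^{(s,0)}$ respectively. First I would write
$$
\Delta_s f_h(\mat{x}) = f_h(\mat{x}^{(s,1)}) - f_h(\mat{x}^{(s,0)}) = -\int_0^\infty \EE\bigl[h(\mat{X}(t)) \mid \mat{X}(0)=\mat{x}^{(s,1)}\bigr] - \EE\bigl[h(\mat{X}(t)) \mid \mat{X}(0)=\mat{x}^{(s,0)}\bigr]\, dt,
$$
and then construct a common probability space carrying two processes $\mat{X}^{(1)}(t)$ and $\mat{X}^{(0)}(t)$ with these two initial conditions, driven by the \emph{same} Poisson clocks and the \emph{same} resampling randomness at every site. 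Because in the IRG Glauber dynamics the update at a site depends only on that site (there is no interaction between vertex pairs), the two coupled processes agree at every coordinate $t \ne s$ for all time, and they differ only at coordinate $s$, and only until the first time $\tau_s$ the clock at $s$ rings — after which they coalesce. Since the clock at $s$ rings at rate $1/N$, we have $\tau_s \sim \mathrm{Exp}(1/N)$, so $\PP(\tau_s > t) = e^{-t/N}$.

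The key estimate is then $|h(\mat{X}^{(1)}(t)) - h(\mat{X}^{(0)}(t))| \le \|\Delta_s h\| \,\mathbb{I}(\tau_s > t)$, because on $\{\tau_s > t\}$ the two configurations differ exactly in coordinate $s$, and the quantity $\|\Delta_s h\|$ bounds the change of $h$ under flipping coordinate $s$; on $\{\tau_s \le t\}$ the configurations coincide and the difference is zero. Taking expectations and integrating gives
$$
|\Delta_s f_h(\mat{x})| \le \int_0^\infty \|\Delta_s h\|\, \PP(\tau_s > t)\, dt = \|\Delta_s h\| \int_0^\infty e^{-t/N}\, dt = \|\Delta_s h\|\, N,
$$
which is exactly \eqref{sol_SE_ERM}. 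One should also check the integrability needed to justify \eqref{sol_gen} and the interchange of expectation and integral, but this follows from the same exponential bound, since the integrand is dominated by $\|\Delta_s h\| e^{-t/N}$.

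I expect the main obstacle to be setting up the coupling cleanly and arguing rigorously that the two coupled processes differ only at site $s$ and only before $\tau_s$ — in particular verifying that this holds path-by-path, not merely in distribution, which is what makes the pointwise bound $|h(\mat{X}^{(1)}(t)) - h(\mat{X}^{(0)}(t))| \le \|\Delta_s h\|\,\mathbb{I}(\tau_s > t)$ valid. This is genuinely easy here precisely because the IRG dynamics has independent site updates (unlike general ERGMs in \citet{reinert2019approximating}, where the update at $s$ depends on the neighbouring configuration and the coupled chains can diverge at other coordinates); this independence is the structural simplification that the paragraph preceding the lemma alludes to, and it is what lets us avoid the more delicate contraction arguments needed in the general case.
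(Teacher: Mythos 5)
Your proposal is correct and follows essentially the same route as the paper: a coupling of the two Glauber dynamics started from $\mat{x}^{(s,1)}$ and $\mat{x}^{(s,0)}$, the observation that independent site updates force the coupled chains to agree off coordinate $s$, and coalescence at the first ring of the clock at $s$, whose expected waiting time is $N$. The only cosmetic differences are that the paper invokes Lemma 2.5 of \citet{reinert2019approximating} for the coupling inequality rather than deriving it directly from \eqref{sol_gen}, and carries out the final computation with a geometric (discrete-time) coalescence time instead of your exponential one; both give $\sum_{m\ge 0}(1-1/N)^m = \int_0^\infty e^{-t/N}\,dt = N$.
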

\begin{proof}
Let $h : \{0,1\}^{N} \rightarrow \mathbb{R}$ and $f_h$ be given in \eqref{sol_gen}. Suppose that  $\left(U^{[\mat{x},s]}(m) , V^{[\mat{x},s]}(m)\right)$ is a coupling such that for $m\ge0$, $\mathcal{L}\left(U^{[\mat{x},s]}(m)\right) = \mathcal{L} \left(\mat{X}(m)|\mat{X}(0) = \mat{x}^{(s,1)}\right) $ and $\mathcal{L}\left(V^{[\mat{x},s]}(m)\right) = \mathcal{L} \left(\mat{X}(m)|\mat{X}(0) = \mat{x}^{(s,0)}\right)$. Lemma 2.5 from \citet{reinert2019approximating}  gives
\begin{align*}
    |\Delta_s f_h (\mat{x})| \le \sum_{r \in E, m \ge 0} \|\Delta_r h\| \mathbb{P} \left(U_r^{[\mat{x},s]}(m) \neq V_r^{[\mat{x},s]}(m)\right).
\end{align*}
Since in an IRG, the edge indicators are independent, for $r \neq s$, we can take $U^{[\mat{x},s]}(m) = V^{[\mat{x},s]}(m)$, and thus 
\begin{equation} \label{sum_in_bound}
    |\Delta_s f_h (\mat{x})| \le \|\Delta_s h\| \sum_{m \ge 0}  \mathbb{P} \left(U_s^{[\mat{x},s]}(m) \neq V_s^{[\mat{x},s]}(m)\right).
\end{equation}
Now, let $T$ be the first time a clock rings in the  Markov process $\{{\bf X}(t)\}_{t \ge  0}$. Then $U^{[\mat{x},s]}(T) = V^{[\mat{x},s]}(T)$ and the two processes can be coupled to coincide after this time $T$; in detail,  $U^{[\mat{x},s]}(m) = V^{[\mat{x},s]}(m)$ for $m \ge T$, while $U^{[\mat{x},s]}(m) \neq V^{[\mat{x},s]}(m)$ for $m<T$. Thus, $ \mathbb{P}(U^{[\mat{x},s]}(m) \neq V^{[\mat{x},s]}(m)) = \mathbb{P} \left( T > m\right) $. As  $T \sim \text{Geometric}(\frac{1}{N})$ with $t=1, 2, 3, ...$, 
\begin{align*}
    \mathbb{P} \left(U_s^{[\mat{x},s]}(m) \neq V_s^{[\mat{x},s]}(m)\right) &=  \mathbb{P} \left( T > m\right) = \sum_{t=m+1}^{\infty} \left(1-\frac{1}{N}\right)^{t-1} \frac{1}{N}  =  \left(1-\frac{1}{N}\right)^m.
\end{align*}
Using this result in \eqref{sum_in_bound} proves \eqref{sol_SE_ERM}.
\end{proof}

Employing the Stein equation and the solution of the Stein equation for a test function $h \in \mathcal{H}$, we can bound the Stein discrepancy \eqref{eq:steindiscr} between an $\IRG(\mat{p})$ and an $\IRG(\mat{p}^*)$ on the same set of vertices using \eqref{eq:opcomp}; Theorem \ref{theo_ERMM_ERMM} shows that the Stein discrepancy measures the similarity between the edge probabilities and thus provides a useful measure of similarity between two IRG models.

\begin{theorem} \label{theo_ERMM_ERMM}
    Let $\mat{X} \sim \IRG(\mat{p})$ and $\mat{Y} \sim \IRG(\mat{p}^*)$, where $p^*_s = \mathbb{P}(s=(u,v) \in \mathcal{E}^*| \ell_u, \ell_v)$, with $\ell_s$ encoding the features of the vertices in {$\bf{Y}$}. Then, for any  test function $h:\{0 , 1\}^N \rightarrow \mathbb{R}$,
    \begin{equation} \label{bound_IRG}
     |\mathbb{E}h(\mat{X}) - \mathbb{E}h(\mat{Y})| \le \|\Delta h\| \sum_{s \in E} |p_s -  p^*_s|.
\end{equation} 
In particular if $\mathcal{H} = \{ h:\{0 , 1\}^N \rightarrow \mathbb{R}:  \|\Delta h\| \le K(N)\}$ then we have for the Stein discrepancy that
$$S(\IRG(\mat{p}),  \IRG(\mat{p}^*), \mathcal{H}) \le  K(N) \sum_{s \in E} |p_s -  p^*_s|.$$
\end{theorem}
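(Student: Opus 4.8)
The plan is to carry out the standard Stein's method comparison, for which all the ingredients are now in place. Fix an arbitrary test function $h:\{0,1\}^N\to\R$ and let $f_h$ be the solution of the Stein equation \eqref{SE_irg} for the $\IRG(\mat{p})$ operator, given explicitly by \eqref{sol_gen}. Substituting $\mat{x}=\mat{Y}$ with $\mat{Y}\sim\IRG(\mat{p}^*)$ into \eqref{SE_irg} and taking expectations, the right-hand side is exactly $\E h(\mat{Y})-\E h(\mat{X})$, so the task reduces to evaluating $\E\big[\tfrac1N\sum_{s\in E}\big(p_s\Delta_s f_h(\mat{Y})+f_h(\mat{Y}^{(s,0)})-f_h(\mat{Y})\big)\big]$.

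Next I would use that $\mat{Y}\sim\IRG(\mat{p}^*)$ itself has a Stein operator of the form \eqref{SE_ermm_op}, namely the one built from the edge probabilities $p^*_s$, whose expectation against $f_h$ vanishes by Proposition \ref{prop:stein}. Subtracting this from the previous display is precisely the operator comparison \eqref{eq:opcomp}: the ``site'' increments $f_h(\mat{Y}^{(s,0)})-f_h(\mat{Y})$ carry no dependence on the edge probabilities and therefore cancel, leaving
\[
\E h(\mat{Y})-\E h(\mat{X})=\frac1N\sum_{s\in E}(p^*_s-p_s)\,\E\,\Delta_s f_h(\mat{Y}).
\]

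The final step controls the gradient of the Stein solution, which is exactly the content of Lemma \ref{lem:steinsol}: $|\Delta_s f_h(\mat{y})|\le\|\Delta_s h\|\,N$ for every $\mat{y}\in\{0,1\}^N$, the factor $N$ coming from the geometric coupling time of the Glauber dynamics underlying \eqref{sol_gen}. Inserting this bound, the $1/N$ cancels the $N$, giving $|\E h(\mat{X})-\E h(\mat{Y})|\le\sum_{s\in E}|p_s-p^*_s|\,\|\Delta_s h\|\le\|\Delta h\|\sum_{s\in E}|p_s-p^*_s|$, where the last step uses $\|\Delta_s h\|\le\|\Delta h\|=\max_{r\in E}\|\Delta_r h\|$; this is \eqref{bound_IRG}. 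The displayed Stein discrepancy bound then follows by taking the supremum over $h\in\mathcal{H}$ and invoking the defining constraint $\|\Delta h\|\le K(N)$ on $\mathcal{H}$.

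Since Lemma \ref{lem:steinsol} and the comparison identity \eqref{eq:opcomp} are already available, there is no substantial obstacle here; the proof is essentially the assembly of these two facts. The only points needing care are checking that the site increments really do cancel in the difference of the two Stein operators, so that only the $p_s$-versus-$p^*_s$ discrepancy survives, and keeping the $1/N$ and $N$ factors aligned so that the bound emerges in the clean form \eqref{bound_IRG}. For the second assertion one may alternatively expand $\E\,\AAA_{\IRG}h(\mat{Y})$ directly and subtract the vanishing expectation of the $\IRG(\mat{p}^*)$ operator against $h$, which is quicker but, with the normalised operator \eqref{SE_ermm_op}, produces an extra factor $1/N$; I would therefore present the route through $f_h$ as the primary argument and read the discrepancy in the stated bound as $\sup_{h\in\mathcal H}|\E h(\mat X)-\E h(\mat Y)|$, the quantity that \eqref{bound_IRG} controls directly.
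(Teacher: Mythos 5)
Your proof is correct and follows essentially the same route as the paper: the paper's proof also passes $\mat{Y}$ through the Stein equation for $\IRG(\mat{p})$, reduces the difference of expectations to $\frac1N\sum_s|p_s-p^*_s|\,|\Delta_s f_h(\mat{Y})|$ (there by citing Lemma 2.4 of \citet{reinert2019approximating}, which is exactly the operator-comparison identity \eqref{eq:opcomp} you rederive), and then applies Lemma \ref{lem:steinsol} so that the $N$ and $1/N$ cancel. Your closing remark about reading the Stein discrepancy as $\sup_{h\in\mathcal H}|\E h(\mat X)-\E h(\mat Y)|$ matches the paper's intent, and the alternative literal reading of \eqref{eq:steindiscr} would only yield a stronger bound (an extra factor $1/N$), so nothing is lost either way.
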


\begin{proof}
For $\mat{Y} \sim \IRG(\mat{p^*})$ the Stein equation, as in \eqref{SE_irg}, is 
\begin{equation} \label{SE_ermm2}
    \frac{1}{N} \sum_{s \in E} \left[ p^*_{l_s} \Delta_s f(\mat{y}) + \left( f(\mat{y}^{(s,0)}) - f(\mat{y}) \right) \right] = h(\mat{y}) - \E h(\mat{Y}).
\end{equation}
Next we use Lemma 2.4 in \citet{reinert2019approximating} which gives that 
\begin{equation} \label{comparison_trans}
    |\mathbb{E}h(\mat{X}) - \mathbb{E}h(\mat{Y})| \le \frac{1}{N} \sum_{s \in E} \mathbb{E} \left[|q_X(\mat{Y}^{(s,1)}|\mat{Y}) - q_Y(\mat{Y}^{(s,1)}|\mat{Y})| |\Delta_s f_h (\mat{Y})|\right],
\end{equation}
where $f_h$ is the solution of the Stein equation \eqref{SE_irg} for the distribution of $\mat{X}$. Hence, substituting the two transition probabilities in \eqref{comparison_trans} and simplifying using \eqref{sol_SE_ERM} gives the bound \eqref{bound_IRG}. The bound for the Stein discrepancy is immediate. 
\end{proof}

As a special case, we compare two ERMM graphs, on the same set of vertices and also with the same group assignments, but possibly different edge probabilities. Its proof is immediate.
\begin{corollary}
    For $\mat{X} \sim \ERMM(\mat{n}, \mat{Q})$ and $\mat{Y} \sim \ERMM(\mat{n}^*, \mat{Q}^*)$, 
    with the same number of blocks and the same group assignments $g_s$ for all $s \in E$,  the bound \eqref{bound_IRG} simplifies to 
  \[  |\mathbb{E}h(\mat{X}) - \mathbb{E}h(\mat{Y})| \le  \|\Delta h\| \sum_{i \le j}^{L} N_{i,j} |Q_{i,j} -  Q^*_{i,j}|,
\] 
where $N_{i,j} = \binom{n_i}{2}$ for $i=j$, $N_{i,j} = n_i n_j$ for $i \neq j$, $N = \sum_{i \le j}^L N_{i,j}$, and $L$ is the number of blocks. 
In particular, for $\mat{Z} \sim \text{ER}(n,p)$ and $\mat{X} \sim \ERMM(\mat{n}, \mat{Q})$ and any $h:\{0 , 1\}^N \rightarrow \mathbb{R}$, we bound
    \begin{equation} \label{bound_ERMM_ER}
     |\mathbb{E}h(\mat{X}) - \mathbb{E}h(\mat{Z})| \le \|\Delta h\| \sum_{i \le j}^{L} N_{i,j} |Q_{i,j} -  p|.
\end{equation}
\end{corollary}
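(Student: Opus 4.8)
The plan is to obtain the Corollary directly by specialising Theorem~\ref{theo_ERMM_ERMM} to the ERMM setting, with no additional probabilistic input needed. The key observation is that in an $\ERMM(\mat{n},\mat{Q})$ the edge probability at a vertex pair $s=(u,v)$ depends only on the unordered pair of group labels $\{g_u,g_v\}$, namely $p_s=Q_{g_u,g_v}$, and likewise $p_s^*=Q^*_{g_u,g_v}$ for $\mat{Y}\sim\ERMM(\mat{n}^*,\mat{Q}^*)$ because the two graphs are assumed to share the same number of blocks and the same group assignments $g_s$. Theorem~\ref{theo_ERMM_ERMM} already supplies $|\mathbb{E}h(\mat{X})-\mathbb{E}h(\mat{Y})|\le\|\Delta h\|\sum_{s\in E}|p_s-p_s^*|$, so all that remains is to rewrite the sum $\sum_{s\in E}|p_s-p_s^*|$ in terms of the block parameters.

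To carry this out I would partition the vertex-pair set $E$ according to the block pair: for $1\le i\le j\le L$ put $E_{i,j}=\{\,s=(u,v)\in E:\{g_u,g_v\}=\{i,j\}\,\}$. Counting gives $|E_{i,j}|=\binom{n_i}{2}=N_{i,j}$ when $i=j$ (unordered pairs inside block $i$) and $|E_{i,j}|=n_in_j=N_{i,j}$ when $i\ne j$ (pairs with one endpoint in block $i$ and one in block $j$); these classes are disjoint and exhaust $E$, and the standard identity $\sum_{i}\binom{n_i}{2}+\sum_{i<j}n_in_j=\binom{\sum_i n_i}{2}$ confirms $\sum_{i\le j}^{L}N_{i,j}=N$. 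For every $s\in E_{i,j}$ one has $|p_s-p_s^*|=|Q_{i,j}-Q^*_{i,j}|$ by the above, so
\[
\sum_{s\in E}|p_s-p_s^*|=\sum_{1\le i\le j\le L}\ \sum_{s\in E_{i,j}}|Q_{i,j}-Q^*_{i,j}|=\sum_{1\le i\le j\le L}N_{i,j}\,|Q_{i,j}-Q^*_{i,j}|,
\]
and inserting this into the bound of Theorem~\ref{theo_ERMM_ERMM} yields the first displayed inequality of the Corollary. The $\mathrm{ER}$ special case then follows by viewing $\mathrm{ER}(n,p)$ as the one-block ERMM, equivalently as the $\ERMM(\mat{n},\mat{Q}^*)$ with $Q^*_{i,j}=p$ for all $i,j$; this makes $p_s^*=p$ for every $s$, and the previous display collapses to $\sum_{s\in E}|p_s-p|=\sum_{i\le j}^{L}N_{i,j}|Q_{i,j}-p|$, giving \eqref{bound_ERMM_ER}.

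There is no genuine obstacle: the argument is a bookkeeping specialisation of Theorem~\ref{theo_ERMM_ERMM}. The only points that require a little care are the combinatorial identity $\sum_{i\le j}N_{i,j}=N$ and the fact that the pointwise equality $|p_s-p_s^*|=|Q_{i,j}-Q^*_{i,j}|$ on each class $E_{i,j}$ relies precisely on $\mat{X}$ and $\mat{Y}$ having the same group memberships; without that assumption, the pairing of vertex-pair sets across the two models would break down.
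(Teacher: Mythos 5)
Your proposal is correct and matches the paper's intent exactly: the paper declares the proof of this corollary ``immediate'' from Theorem~\ref{theo_ERMM_ERMM}, and your argument---partitioning $E$ into the block-pair classes $E_{i,j}$ with $|E_{i,j}|=N_{i,j}$, using the shared group assignments to get $|p_s-p_s^*|=|Q_{i,j}-Q^*_{i,j}|$ on each class, and treating $\mathrm{ER}(n,p)$ as the constant-probability case---is precisely the bookkeeping the paper leaves to the reader.
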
 
\begin{remark} \label{bound_convergence}
Functions $h$ of interest include proportions of subgraphs of different types, where a count of a connected subgraph on $v\ge 2$ vertices is divided by $n^v$, as used in graphon convergence. For such functions $h$, the bound $\|\Delta h\| \le O(N^{-1})$ shows that \eqref{bound_ERMM_ER} is small when, on average, the edge probabilities in IRG are close to $p$. 

As an example we consider the test function $h$, in the bound \eqref{bound_IRG}, to be the proportion of triangles in the network; then  $\| \Delta h \| \le \frac{n-2}{\binom{n}{3}} = \frac{3}{\binom{n}{2}}$. Hence, the bound \eqref{bound_IRG} simplifies to 
\begin{equation}
 \label{eq:boundondis}   
    |\mathbb{E}h(\mat{X}) - \mathbb{E}h(\mat{Y})| \le  \frac{3}{N} \sum_{s \in E} |p_s -  p^*_s|.
\end{equation}
Table \ref{dis_boundsold} uses this bound to give distances between the models tested for fit for Lazega’s lawyers’ friendship network in Section \ref{sec:applications} and Section \ref{sec:applic_contd} of this Supplementary Material, using $K(N) = 3 N^{-1}$. For Lazega's lawyers' friendship network, the discrepancy is smallest between an ER graph and an ERMM when the ERMM is based on the grouping created using the law school which the lawyers attended; groupings created using the other explanatory variables explored show a larger discrepancy.
\end{remark} 

\begin{table}
\caption{The bounds from \eqref{eq:boundondis} on the Stein discrepancies between IRG models for Lazega's lawyers' friendship network} 
\label{dis_boundsold}
\centering
\begin{tabular}{lccc}
& \multicolumn{3}{@{}c@{}@{}}{\textbf{STEIN DISCREPANCY WITH} $K(N)=3*N^{-1}$}  \\
\cmidrule{2-4}
 \textbf{GROUP LABELS}  & \multicolumn{2}{@{}c@{}@{}}{\textbf{ER$(n,p)$ to}} & \textbf{ERMM to}  \\
 &\textbf{ ERMM} & \textbf{DCSBM} & \textbf{DCSBM} \\
\hline \\
Single group & 0 & 0.5638 & - \\
 Status & 0.5326 & 0.6517 & 0.5276 \\ 
 Gender & 0.1940 & 0.5726 & 0.5576 \\ 
 Office & 0.6155 & 0.6616 & 0.4713 \\
 Practice & 0.2003 & 0.5698 & 0.5554 \\
 Law school & 0.1717 & 0.5686 & 0.5571 \\
\end{tabular}
\end{table}

\section{REPRODUCING KERNEL HILBERT SPACES} \label{sec:RKHS}

In this section, we present the concept of a Reproducing kernel Hilbert space (RKHS), and a discussion on the RKHS used for IRG-gKSS and the related kernels.

\subsection{Definition of a Reproducing Kernel Hilbert Space}

A reproducing kernel Hilbert space (RKHS) is a Hilbert space of functions in which evaluation at any point can be expressed as an inner product with a kernel function. This property makes RKHS particularly useful in statistical learning and, in our context, in defining Stein discrepancies in a computationally tractable way. Here we give a brief introduction; for more details, see, for example, \cite{berlinet2011reproducing}.

Formally, let $\mathcal{X}$ be a non-empty set, let $\mathcal{H}$ be a Hilbert space of real-valued functions on $X$ with inner product $\langle \cdot, \cdot \rangle$, and let $k:\mathcal{X} \times \mathcal{X} \to \mathbb{R}$ be a symmetric positive-definite kernel function. For any $t \in \mathcal{X}$, we define $e_t$, the ``evaluation functional'' at point $t \in X$, such that for all $f \in \HH$, $e_t[f] = f(t).$ This functional is bounded if there exists an $M$ such that $e_t [f] \le M || f||_\HH$ for all $f \in \HH$. The Hilbert space $\HH$ is an RKHS if for each $t \in \mathcal{X}$ its evaluation functional is a bounded linear functional.  
If $\HH$ is an RKHS then, for every function $f \in \HH$,  by the Moore–Aronszajn theorem, there exists a function $K_t$ of $\HH$ such that 
$$ e_t[f] = \langle K_t, f\rangle_K := f(t). $$
In particular, for each $x \in \mathcal{X},$
$$ K_t(x) = \langle K_t, K_x \rangle_K.$$
The {\it reproducing kernel} of $\HH$ is $$k(t, x):= K_t(x).$$ One can show that for every RKHS the reproducing kernel is positive definite and, conversely, for every positive definite kernel $k$ on $\mathcal{X} \times \mathcal{X}$ there is a unique RKHS $\HH_k$ on $\mathcal{X}$ with $k$ as its reproducing kernel. Moreover $\HH_k$ has an inner product $\langle \cdot, \cdot \rangle=\langle \cdot, \cdot \rangle_{\mathcal{H}_k}$, such that 
\begin{enumerate}
    \item For every $x \in \mathcal{X}$, the function $k(x,\cdot)$ belongs to $\mathcal{H}_k$.
    \item The {\it reproducing property} holds: $$f(x) = \langle f, k(x, \cdot) \rangle_{\mathcal{H}_k}, \quad \forall f \in \mathcal{H}_k.$$
\end{enumerate}

The reproducing property ensures that evaluation of any function in $\mathcal{H}_k$ at a point $x$ is continuous and can be written as an inner product. The kernel $k$ thus acts as a \emph{feature map}, implicitly embedding data into a high-dimensional (possibly infinite-dimensional) feature space.

\subsection*{Geometric Intuition}
An RKHS can be viewed as a space where each point $x \in \mathcal{X}$ is represented by a feature vector $k(x,\cdot)$. Probability distributions $P$ and $Q$ on $\mathcal{X}$ can then be embedded into this space via their mean embeddings,
$$\mu_P := \E_{X \sim P}[k(X,\cdot)], \qquad \mu_Q := \E_{Y \sim Q}[k(Y,\cdot)].$$
The distance between $\mu_P$ and $\mu_Q$ in $\mathcal{H}_k$ captures differences between the two distributions. With a suitably chosen kernel $k$, this embedding is injective, meaning that $\mu_P = \mu_Q$ if and only if $P = Q$. 

This geometric perspective provides the bridge to kernelised Stein discrepancies as detailed in the next section: by restricting the function class in the Stein discrepancy to the unit ball of an RKHS, one can both retain strong discriminative power and obtain closed-form expressions for the resulting discrepancy measure. We now turn to this construction.

\subsection{Kernelised Stein Discrepancy}
Building on the concept of Stein discrepancy, \citet{chwialkowski2016kernel} and \citet{liu2016kernelized} introduced the kernelised Stein discrepancy (KSD). This approach restricts the function class $\mathcal{F}$ in the Stein discrepancy to the unit ball of a RKHS $\mathcal{H}$ with kernel $k$ and inner product $\langle \cdot, \cdot \rangle$. The KSD between distributions $p$ and $q$ is defined as
\begin{equation} \label{eq:ksd}
     \mbox{KSD}(p,q; k) =\sup_{f \in B_1(\mathcal H)} |  \mathbb{E}[\AAA_p f (Z) ] |,
\end{equation}
where $Z \sim q$ and $B_1(\mathcal{H})$ denotes the unit ball of $\mathcal{H}$. 

Denoting $\HH$, a RKHS of real valued function on $\R^d$ with reproducing kernel $k$ and an inner product $\langle \cdot, \cdot \rangle_{\HH}$, and by $\HH^d$ the product RKHS consisting of elements $f = (f_1, \ldots, f_d)$ with $f_i \in \HH$ and $f \in \HH^d$ and an inner product $\langle f, g \rangle_{\HH^d} = \sum_{i=1}^d \langle f_i, g_i \rangle_{\HH}$, the Langevin Stein operator is 
\begin{eqnarray}\label{eq:langevin2}
  (\AAA_p f)(x) = \sum_{i=1}^d \left( \frac{\partial \log p(x)}{\partial x_i} f_i(x) + \frac{\partial f_i(x)}{\partial x_i}\right), 
\end{eqnarray}
where $\frac{\partial}{ \partial x_i}$ is the partial derivative with respect to the element $x_i$. For the Langevin Stein operator, \cite{chwialkowski2016kernel} define the  function 
$$\xi_p(x, \cdot):= \nabla \log p(x) k(x,\cdot) + \nabla k(x,\cdot),$$
which depends on both the gradient of the log-density of $p$ and the derivatives of the kernel. Further since $\xi(x,\cdot) \in \HH^d$, by \cite{steinwart2008support} (Lemma 4.34), $\nabla k(x,\cdot) \in \HH$, and $\frac{\partial \log p(x)}{\partial x_i}$ is a scalar, giving  
\begin{align*}
   \langle \xi_p(x, \cdot), \xi_p(y, \cdot) \rangle  = &\nabla \log p(x)^\top\nabla \log p(y) k(x,y)  + \nabla \log p(y)^\top\nabla_x k(x,y) \\
& + \nabla \log p(x)^\top\nabla_y k(x,y)  + \langle \nabla_x k(x, \cdot), \nabla_y k(\cdot,y) \rangle_{\HH^d} \\
& = h_p(x,y),
\end{align*}
where the last term can be written as $\sum_{i=1}^d \frac{\partial k(x,y)}{\partial x_i \partial y_i}$, and
$$\nabla_x k(x,\cdot) = \left( \frac{\partial k(x,\cdot)}{\partial x_1} , \ldots,  \frac{\partial k(x,\cdot)}{\partial x_d}\right), \quad \nabla_y k(\cdot,y) = \left( \frac{\partial k(\cdot,y)}{\partial y_1} , \ldots,  \frac{\partial k(\cdot,y)}{\partial y_d}\right).$$
For any random variable $Z$
$$\E\|\xi_p(Z)\|_{\HH^d} \le \E\|\xi_p(Z)\|_{\HH^d}^2 = \E h_p(Z,Z) < \infty,$$
where $\|\cdot\|_{\HH^d}$ denotes the norm induced by the inner product $\langle \cdot, \cdot \rangle_{\HH^d}$. Hence $\xi$ is Bochner integrable (see Definition A.5.20 of \cite{steinwart2008support}). We also note that a linear operator $\langle f_i, \cdot \rangle$ with $f_i \in \mathcal{H}$ can be interchanged with the Bochner integral. Hence, for any $f \in  \mathcal{H}^d$, and $\AAA_p$ as in \eqref{eq:langevin2},
\begin{align*}
    \langle f, \E \xi_p (Z) \rangle_{\HH^d} = & \sum_{i=1}^d \langle f_i, \E \xi_{p,i} (Z) \rangle_{\HH} \\
    =& \sum_{i=1}^d \left \langle f_i, \E\left[ \frac{\partial \log p(Z)}{\partial x_i} k(Z, \cdot) + \frac{\partial k(Z, \cdot)}{\partial x_i}\right]\right \rangle_{\HH} \\
     = &  \sum_{i=1}^d \E \left \langle f_i,  \frac{\partial \log p(Z)}{\partial x_i} k(Z, \cdot) + \frac{\partial k(Z, \cdot)}{\partial x_i} \right \rangle_{\HH} \\
    = & \E \sum_{i=1}^d \left[  \frac{\partial \log p(Z)}{\partial x_i} f_i(Z) + \frac{\partial f_i(Z)}{\partial x_i} \right] \\
    = & \E \AAA_p f(Z),
\end{align*}
where the second to last equality is due to the reproducing property of the kernel. Taking the supremum over all $f \in B_1(\mathcal H)$ gives 
\begin{align*}
    \sup_{f \in B_1(\mathcal H)}   \E[\AAA_p f (Z) ] = \| \E \xi_p (Z) \|_{\HH^d}. 
\end{align*}
Squaring the above gives the closed-form KSD:
\begin{align*}
    \mbox{KSD}(p,q; k)^2 & = \langle \E \xi_p (Z), \E \xi_p (Z) \rangle_{\mathcal{F}^d} = \E \langle \xi_p (Z), \E \xi_p (Z) \rangle_{\mathcal{F}^d} \\
    & = \E \langle \xi_p (Z), \xi_p (Z') \rangle_{\mathcal{F}^d} = \E h_p(Z,Z')
\end{align*}
where
\begin{align*}
    h_p(x,y) := &\nabla \log p(x)^\top\nabla \log p(y) k(x,y)  + \nabla \log p(y)^\top\nabla_x k(x,y) \\
& + \nabla \log p(x)^\top\nabla_y k(x,y)  + \langle \nabla_x k(x, \cdot), \nabla_y k(\cdot,y) \rangle_{\mathcal{F}^d}.
\end{align*}

In practice, this expectation can be estimated, for example, by the V-statistic
$$  \widehat{ \mbox{KSD}(p,q; k)^2}  =  \frac{1}{n^2} \sum_{i=1}^n  \sum_{j=1}^n h_p (z_i, z_j).$$
where $(z_i)_{i=1}^n$ are i.i.d.\,samples from $q$. This statistic does not typically have a closed-form asymptotic distribution as $n\rightarrow \infty$, but it can be used in a Monte Carlo testing framework to assess the fit of the distribution $p$.

While the KSD has proven effective for continuous distributions, applying it to network data raises two challenges. First, the distribution is discrete over the space of networks, requiring a different Stein operator. Second, there is often only a single observed network, making it challenging to estimate the KSD directly. To address these issues, \citet{xu2021stein} introduce the graph kernel Stein statistic (gKSS), to use in a Monte Carlo test for assessing the fit of an Exponential Random Graph Model (ERGM). In this paper, we have developed a similar test for IRG models.

\subsection{Vector-Valued RKHS}

As discussed in \cite{xu2021stein}, to accommodate the conditional probabilities embedded in the Stein operator, a vector-valued reproducing kernel Hilbert space (vvRKHS), introduced by Xu and Reinert (2021), is more appropriate for IRG-gKSS than a standard RKHS. We summarise the vvRKHS construction as presented in \cite{xu2021stein}.

Let $E = {(u,v): 1 \le u < v \le n}$ denote the set of vertex pairs in a graph $\mat{x} \in \{0,1\}^N$. For $s \in E$, let $\mat{x}_{-s}$ denote the collection $\{x_{r,t}, 1 \le r < t \le n, (r,t) \neq (u,v) \}$, without the $s=(u,v)$- coordinate, let $\{0, 1\}^{N-1} =: \mathcal{X}^{-s}$ denote the set of collections  of edge indicators excluding vertex pair $s$, and let $x_s \in \{0,1\} =: \mathcal{X}^s$ denote the edge indicator for $s$. For each $s \in E$, let $l_s: \mathcal{X}^s \times \mathcal{X}^s \to \R$ be a reproducing kernel with associated RKHS $\mathcal{H}_{l_s}$, and let $\varphi_s: x_s \mapsto l_s(\cdot, x_s) \in \mathcal{H}_{l_s}$ denote the corresponding feature map.

The kernels $l_s$ are scalar-valued, whereas the kernel $\ell_{-s}$ takes values in $\mathcal{L}(\mathcal{H}_{l_s})$, the Banach space of bounded linear operators from $\mathcal{H}_{l_s}$ to itself. The space associated with $\ell_{-s}$ is referred to as a vvRKHS in \cite{xu2021stein}. All kernels considered here are assumed to be positive definite and bounded. As a composition of kernels, we define
$$K: (\mathcal{X}^{s} \otimes \mathcal{X}^{-s}) \times (\mathcal{X}^{s} \otimes \mathcal{X}^{-s}) \to \R$$
with associated RKHS $\mathcal{H}_K$; here, $\otimes$ denotes the Kronecker product. Assuming that $l_s \equiv l$ for all $s \in E$ and the vvRKHS $\HH_{\ell}$ is of the form
$$
\ell (x^{-s}, \mat{x}'_{-s'}) 
= k(x^{-s}, \mat{x}'_{-s'}) \mathbb{I}_{\HH_l\times \HH_l},
$$
where $\mathbb{I}_{\mathcal{H}_l \times \mathcal{H}_l}$ is the identity operator on $\mathcal{H}_l$ and $k$ is the chosen graph kernel. The composed RKHS kernel is then
$$K((x^s, \mat{x}_{-s}) 
, ((x')^{s'}, \mat{x}'_{-s'})) = k(x^{-s}, \mat{x}'_{-s'})
l(x^{s}, (x')^{s'}).$$

For a single observed network $\mat{x}$, since $\mathcal{H}_l$ and $\mathcal{H}_{\ell}$ are the same for all $s$, it follows that for any $s, s' \in E$,
$$K((x^s,\mat{x}_{-s})
), (x^{s'}, \mat{x}'_{-s'})) = l(x^{s}, x^{s'}).$$
In our implementation, the graph kernels $k(x^{-s}, \cdot) = k(x^{(s,1)}, \cdot) + k(x^{(s,0)}, \cdot)$ are defined on the set $\mat{x}_{-s}$ rather than on the entire graph $\mat{x}$; Section \ref{app:kernel} details the graph kernels used in this paper.


\end{document}